\newcommand{\pder}[2][]{\frac{\partial#1}{\partial#2}}
\newtheorem{prop}{Proposition}
\ifcvprfinal\pagestyle{empty}\fi
\begin{document}
\onecolumn

\title{Supplementary Material:\\DARNet: Deep Active Ray Network for Building Segmentation}

\author{Dominic Cheng\textsuperscript{1, 2} \qquad Renjie Liao\textsuperscript{1, 2, 3} \qquad Sanja Fidler\textsuperscript{1, 2, 4} \qquad Raquel Urtasun\textsuperscript{1, 2, 3}\\
	University of Toronto\textsuperscript{1} \quad Vector Institute\textsuperscript{2} \quad Uber ATG Toronto\textsuperscript{3} \quad NVIDIA\textsuperscript{4}\\
	{\tt\small \{dominic, rjliao, fidler\}@cs.toronto.edu} \quad {\tt\small urtasun@uber.com}
}

\maketitle
\thispagestyle{empty}

\section{Proof of Proposition}

\begin{prop}
Given a closed convex set $X$, a ray starting from any interior point of $X$ will intersect with the boundary of $X$ once.
\end{prop}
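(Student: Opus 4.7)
The plan is to parametrize the ray as $\gamma(t)=p+tv$ for $t\ge 0$, where $p\in\mathrm{int}(X)$ is the basepoint and $v$ the direction, and then study the set $T=\{t\ge 0:\gamma(t)\in X\}$. Because $X$ is convex and $\gamma$ is affine, $T$ is convex in $\mathbb{R}$; because $X$ is closed and $\gamma$ is continuous, $T$ is closed; and $0\in T$. In the bounded setting relevant to the paper (building footprints are compact), $T$ is bounded, so $T=[0,t^{*}]$ for a unique $t^{*}>0$. This reduces the whole proposition to analyzing this single endpoint $t^{*}$.

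First I would verify that $\gamma(t^{*})$ lies on $\partial X$: it belongs to $X$ by closedness, and every neighborhood of it contains points $\gamma(t^{*}+\varepsilon)\notin X$, so it cannot be interior. That establishes existence of at least one intersection with $\partial X$, and by construction every $\gamma(t)$ with $t>t^{*}$ is outside $X$ and therefore not on $\partial X$ either.

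The hard part will be uniqueness on the segment $t\in[0,t^{*})$, which I would reduce to the standard line-segment principle of convex analysis: if $p\in\mathrm{int}(X)$ and $q\in X$, then $(1-\lambda)p+\lambda q\in\mathrm{int}(X)$ for every $\lambda\in[0,1)$. Writing $\gamma(t)=(1-t/t^{*})\,p+(t/t^{*})\,\gamma(t^{*})$ and applying this principle with $\lambda=t/t^{*}<1$ would show $\gamma(t)\in\mathrm{int}(X)$, hence $\gamma(t)\notin\partial X$, for every such $t$. Combined with the previous paragraph, $t^{*}$ is then the unique parameter at which the ray meets $\partial X$.

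The only genuinely nontrivial ingredient is the line-segment principle itself. Rather than cite it, I would derive it in-line: choose an open ball $B(p,\rho)\subseteq X$, and use convexity to show that $B\bigl((1-\lambda)p+\lambda q,\,(1-\lambda)\rho\bigr)\subseteq X$, whence $(1-\lambda)p+\lambda q$ is an interior point. With that lemma in hand, the rest of the argument is essentially bookkeeping about the interval $T$.
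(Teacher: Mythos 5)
Your proof is correct, and it takes a genuinely different route from the paper's. The paper argues by contradiction: assuming the ray meets the boundary at two points $b$ and $c$ with $b$ between the interior basepoint $a$ and $c$, it draws a small ball $A$ around $a$ contained in $X$, picks a point $\tilde{b}$ near $b$ with $\tilde{b}\notin X$, and constructs a line through $c$ and $\tilde{b}$ that must cross $A$ at some $\hat{a}\in X$; convexity of the segment from $c$ to $\hat{a}$ then forces $\tilde{b}\in X$, a contradiction. That construction is, in essence, a picture-driven instance of the very line-segment principle you invoke (a point strictly between an interior point and a point of $X$ is itself interior, hence not a boundary point), but several of its steps --- the choice of $\tilde{a}$, the angle comparison, and the claim that $\tilde{b}$ actually lies on the segment between $c$ and $\hat{a}$ --- are left informal. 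Your argument instead reduces everything to the one-dimensional slice $T=\{t\ge 0:\gamma(t)\in X\}$, identifies it as a closed bounded interval $[0,t^{*}]$, and proves the line-segment principle algebraically via the inclusion $B\bigl((1-\lambda)p+\lambda q,\,(1-\lambda)\rho\bigr)\subseteq X$. This buys a cleaner and fully rigorous uniqueness step, handles existence and uniqueness in one uniform framework, and surfaces a point the paper glosses over: existence of a boundary intersection requires boundedness of $X$ along the ray, not mere closedness (a closed half-plane with the ray pointing inward defeats the paper's ``otherwise $X$ is not closed'' justification), and you correctly flag that the bounded setting is the one that matters here. The one small thing worth making explicit when you write this up is that $t^{*}>0$, which follows immediately from $p$ being interior.
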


\begin{proof}
First, it is straightforward that a ray starting from any interior point of $X$ will intersect with its boundary since otherwise $X$ is not closed.
Then we prove the intersection can only happen once by contradiction.
We assume a ray starts from an interior point $a$ and intersects with the boundary of $X$ twice at $b$ and $c$.
Without loss of generality, we assume $b$ is in between $a$ and $c$.
Since $a$ is an interior point, we can find an open ball $A$ which centers at $a$ and $A \in X$.
Given any point $\tilde{a} \in A$ other than $a$, we can uniquely determine a line $l$ which crosses $\tilde{a}$ and $c$.
Then we can uniquely draw an open ball $B$ which centers at $b$ and has $l$ as the tangent line.
Since $b$ is a boundary point of $X$, we can always find a point $\tilde{b} \in B$ such that $\tilde{b} \not\in X$.
Connecting $c$ and $\tilde{b}$, we can uniquely determine a line $\tilde{l}$.
Since $\vert \angle \tilde{a}cb \vert \ge \vert \angle \tilde{b}cb \vert$, line $\tilde{l}$ will intersect with $A$ for at least once.
Denoting any intersection point as $\hat{a}$, we know $\hat{a} \in A \in X$.
Since $c \in X$, we know any point in between $c$ and $\hat{a}$, \ie, the convex combination of $c$ and $\hat{a}$, should be in $X$ due to the fact that $X$ is convex.
Therefore, $\tilde{b} \in X$ which contradicts.
We show the schematic of proof in Fig. \ref{fig:proof}.
\end{proof}

\begin{figure}[h]
	\begin{center}
		\includegraphics[height=0.4\linewidth]{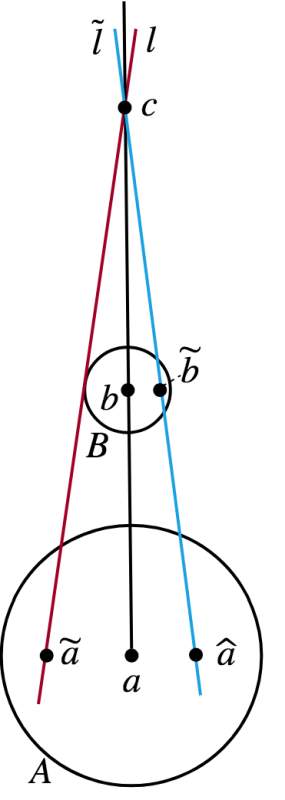}
	\end{center}
	\caption{Illustration of proof.}
	\label{fig:proof}
\end{figure}

\section{Contour Inference Details}
Our contour inference relies on the following equation,
\begin{equation}
	\rho^{(t+1)} = \rho^{(t)} - \Delta t \, (A\rho^{(t)} + f)
\end{equation}
where $\rho^{(t)}$ represents the contour at step $t$, $\Delta t$ is a time step hyper-parameter for solving the system, $A$ and $f$ consist of partial derivatives of the energy w.r.t. $\rho^{(t)}$. Here, we detail the construction of this equation.

\paragraph{Matrix equation}
As mentioned in our paper, the relevant partial derivatives are as follows. 
For the data term,
\begin{align}\label{eq:data_deriv}
\pder[E_{\text{data}}(c)]{\rho_i} = \pder[D(c_i)]{x} \cos(i\Delta\theta) + \pder[D(c_i)]{y} \sin(i\Delta\theta) 
\end{align}
where
\begin{equation}
	c_i = 
	\begin{bmatrix}
	x_c + \rho_i \cos (i \Delta\theta) \\
	y_c + \rho_i \sin (i \Delta\theta)
	\end{bmatrix}
\end{equation}
For the curvature term,
\begin{align}
\begin{split}
\pder[E_{\text{curve}}(c)]{\rho_i} \approx{}
& [2\beta(c_{i+1})\cos(2\Delta\theta)]\rho_{i+2} + \\ 
& [-4(\beta(c_{i+1})+\beta(c_{i-1}))\cos(\Delta\theta)]\rho_{i+1} + \\
& 2[\beta(c_{i+1}) + 4\beta(c_i) + \beta(c_{i-1})]\rho_i + \\ 
& [-4(\beta(c_i) + \beta(c_{i-1}))\cos(\Delta\theta)]\rho_{i-1} + \\
& [2\beta(c_{i-1})\cos(2\Delta\theta)]\rho_{i-2}
\end{split}
\end{align}
For the balloon term,
\begin{equation} \label{eq:kappa_deriv}
\pder[E_{\text{balloon}}(c)]{\rho_i} \approx - \frac{\kappa(c_i)}{\rho_{\text{max}}}
\end{equation}
Combining these into the overall energy, we obtain
\begin{align}
	\begin{split}
	\pder[E]{\rho_i} \approx{} 
	& [2\beta(c_{i+1})\cos(2\Delta\theta)]\rho_{i+2} + \\ 
	& [-4(\beta(c_{i+1})+\beta(c_{i-1}))\cos(\Delta\theta)]\rho_{i+1} + \\
	& 2[\beta(c_{i+1}) + 4\beta(c_i) + \beta(c_{i-1})]\rho_i + \\ 
	& [-4(\beta(c_i) + \beta(c_{i-1}))\cos(\Delta\theta)]\rho_{i-1} + \\
	& [2\beta(c_{i-1})\cos(2\Delta\theta)]\rho_{i-2} + \\
	& \pder[D(c_i)]{x} \cos(i\Delta\theta) + \pder[D(c_i)]{y} \sin(i\Delta\theta) - \frac{\kappa(c_i)}{\rho_{\text{max}}}
	\end{split}
\end{align}
We have $L$ such equations, one for each $\rho_i$. In each equation, there are dependencies on the four adjacent entries of $\rho_i$, and $\rho_i$ itself (\ie for entries on the borders, the indices wrap around). This summarizes into matrix form,
\begin{align}
\pder[E]{\rho} &\approx{}
	\begin{bmatrix}
	c_1     & b_1   & a_1   & 0         & \cdots    & 0         & e_1   & d_1   \\
	d_2     & c_2   & b_2   & a_2   & 0         & \cdots    & 0         & e_2   \\
	\vdots      & \vdots    & \vdots    & \vdots    & \vdots    & \ddots    & \vdots    & \vdots    \\
	a_{L-1}   & 0         & \cdots    & 0         & e_{L-1}  & d_{L-1} & c_{L-1} & b_{L-1} \\
	b_{L}     & a_{L}   & 0         & \cdots    & 0         & e_{L}   & d_{L}   & c_{L}
	\end{bmatrix}
	\begin{bmatrix}
	\rho_1     \\ \rho_2  \\ \vdots   \\ \rho_{L-1}      \\ \rho_{L}
	\end{bmatrix}
	+
	\begin{bmatrix}
	f_1 \\ f_2 \\ \vdots \\ f_{L-1} \\ f_L
	\end{bmatrix} \\
	&= A \rho + f
\end{align}
where
\begin{align}
a_i &= 2\beta(c_{i+1})\cos(2\Delta\theta) \\
b_i &= -4(\beta(c_{i+1})+\beta(c_{i-1}))\cos(\Delta\theta) \\
c_i &= 2 (\beta(c_{i+1}) + 4\beta(c_i) + \beta(c_{i-1})) \\
d_i &= -4(\beta(c_i) + \beta(c_{i-1}))\cos(\Delta\theta) \\
e_i &= 2 \beta(c_{i-1}) \cos(2\Delta\theta) \\
f_i &= \pder[D(c_i)]{x} \cos(i\Delta\theta) + \pder[D(c_i)]{y} \sin(i\Delta\theta) - \frac{\kappa(c_i)}{\rho_{\text{max}}}
\end{align}

\paragraph{Solving the system} As mentioned in \cite{kass1988snakes}, to iteratively minimize the energy, we can solve this system of equations by introducing a time variable such that the solution is found at equilibrium. That is,
\begin{equation}
	\frac{\rho^{(t+1)} - \rho^{(t)}}{\Delta t} = -(A \rho + f)
\end{equation}
where we purposefully leave out the time indices on the right hand side of the equation.
\cite{kass1988snakes} then adopts an implicit-explicit method by interpreting $A$ implicitly (so it is associated with $\rho^{(t+1)}$) and $f$ explicitly (so it is associated with $\rho^{(t)}$),
\begin{align}
\frac{\rho^{(t+1)} - \rho^{(t)}}{\Delta t} &= -(A \rho^{(t+1)} + f) \\
\rho^{(t+1)} &= \left(A + \frac{1}{\Delta t}I\right)^{-1} \left( \frac{1}{\Delta t}\rho^{(t)} - f\right)
\end{align}
To avoid the need to perform a batched matrix inverse, we instead adopt an explicit method,
\begin{align}
\frac{\rho^{(t+1)} - \rho^{(t)}}{\Delta t} &= -(A\rho^{(t)} + f) \\
\rho^{(t+1)} &= \rho^{(t)} - \Delta t (A\rho^{(t)} + f)
\end{align}

\section{CNN Architecture}
\paragraph{Our CNN backbone}
Our CNN backbone uses Dilated Residual Network \cite{yu2017dilated}, specifically DRN-D-22 with its last pooling and fully-connected layers removed, as the main method of feature extraction. We append additional upsampling layers to yield output maps that match the input image size. This is illustrated in Figure \ref{fig:cnn_arch}.
\begin{figure}[h]
	\begin{center}
		\includegraphics[height=0.95\linewidth]{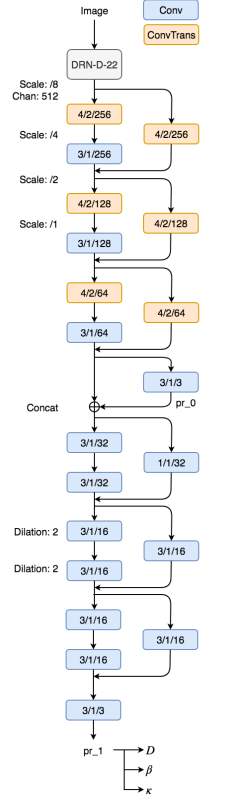}
	\end{center}
	\caption{CNN architecture. Convolutions shown in blue; transposed convolutions shown in orange. Layers are annotated as kernel\_size/stride/output\_channels. Dilation is set to 1 unless stated otherwise. Output sizes are listed as a scale factor from the original input image. All residual blocks are combinations of Convolution-BatchNorm-ReLU in the style of \cite{yu2017dilated}, except for pr\_0 and pr\_1, where BatchNorm \cite{ioffe2015batch} is not used.}
	\label{fig:cnn_arch}
\end{figure}

\paragraph{Adaptation to Deep Structured Active Contours (DSAC)}
To implement DSAC \cite{marcos2018learning}, we leverage the same architecture in Figure \ref{fig:cnn_arch}, except with $4$ outputs corresponding to the energy maps required in that framework. Following the implementation of DSAC, we add a gaussian smoothing layer with kernel size $9$ and $\sigma=2$ to the final output of the data term.

\section{Hyper-parameters}
We train our network using SGD with momentum. We choose a learning rate of $4\times 10^{-5}$, which halves every $E$ epochs, momentum of $0.3$, a weight decay of $1\times 10^{-5}$, and a batch size of $10$. We set $E = 30$ for Vaihingen and Bing Huts, and $E = 1$ for TorontoCity. We train for $100$ epochs on Vaihingen and Bing Huts, and $10$ epochs on TorontoCity.

To encourage stability in contour inference without using common techniques that are non-differentiable, we pretrain the maps to output values that cause the contour to converge, although not necessarily close to the ground truth rays. 
Specifically, we leverage the Euclidean distance transform because it possesses some desirable properties. 
Recall that we wish for $D$ to assign relatively lower values to the building boundaries, such that its gradient near these boundaries can attract contour points towards it. 
Both of these properties are reflected in the distance transform. 
For $\beta$, we adopt the distance transform with the building interiors masked out; we wish for contour points to evolve outwards without restriction, and straighten out as it approaches the boundaries. 
For $\kappa$, we adopt the distance transform with the building exteriors masked out. 
To pretrain, we take the predictions pr\_0 and pr\_1 (from Figure \ref{fig:cnn_arch}) and regress to these distance transforms with a smooth $L_1$ loss, using Adam \cite{kingma2014adam} with an initial learning rate of $1\times 10^{-3}$, which halves every $E$ epochs, and weight decay of $4\times 10^{-4}$. We set $E = 50$ for Vaihingen and Bing Huts, and $E = 3$ for TorontoCity.
We pretrain for $250$ epochs on Vaihingen and Bing Huts, and $10$ epochs on TorontoCity. 
After pretraining, we scale the $\beta$ and $\kappa$ maps by $0.005$ and $0.1$ respectively so that, during the initial stages of training, the contours do not move too far. 
We found this procedure increases the stability of training. 

\section{More Visual Examples}
We show more examples in Figures \ref{fig:qual_results_1}, \ref{fig:qual_results_2}, \ref{fig:qual_results_3}, and \ref{fig:area}.
{
	\captionsetup[subfigure]{labelformat=empty}
	\begin{figure*}
		\centering
		\subfloat{\includegraphics[width=0.16\linewidth]{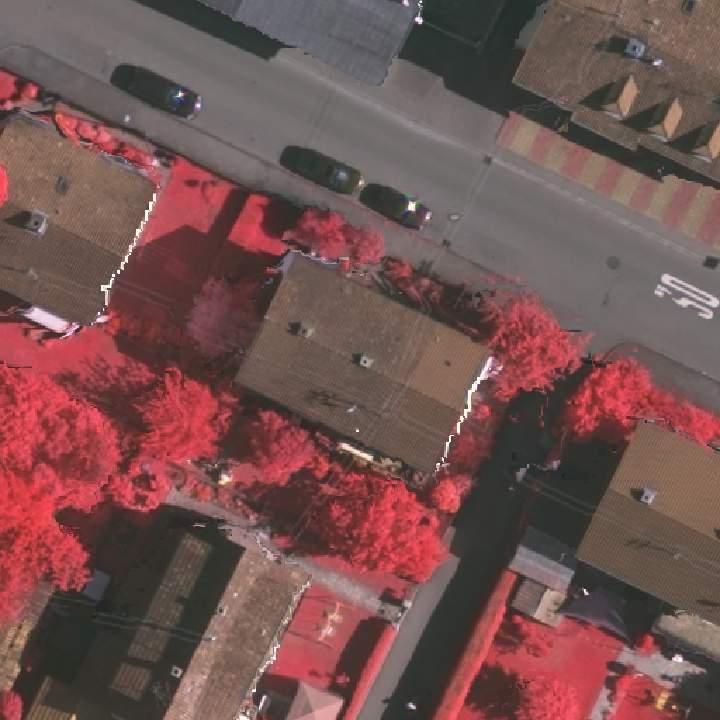}} \hfill
		\subfloat{\includegraphics[width=0.16\linewidth]{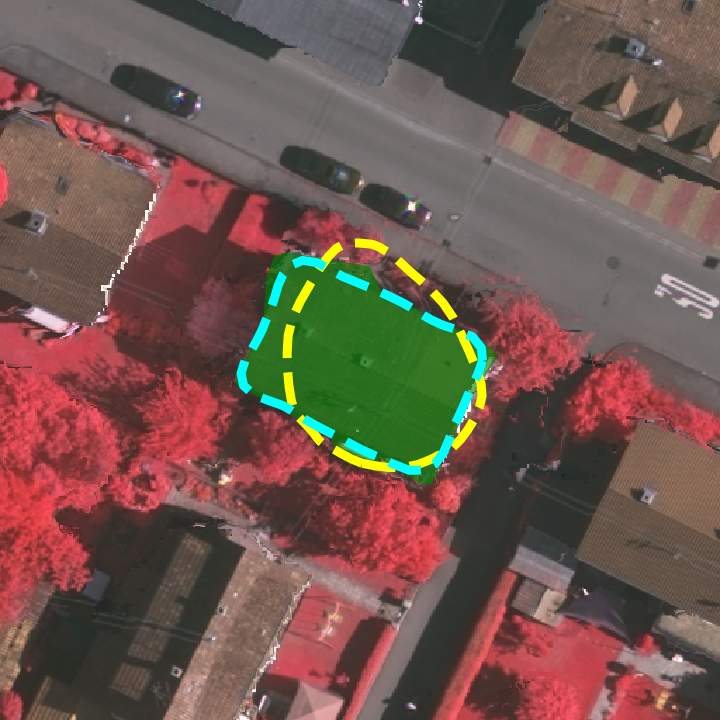}} \hfill
		\subfloat{\includegraphics[width=0.16\linewidth]{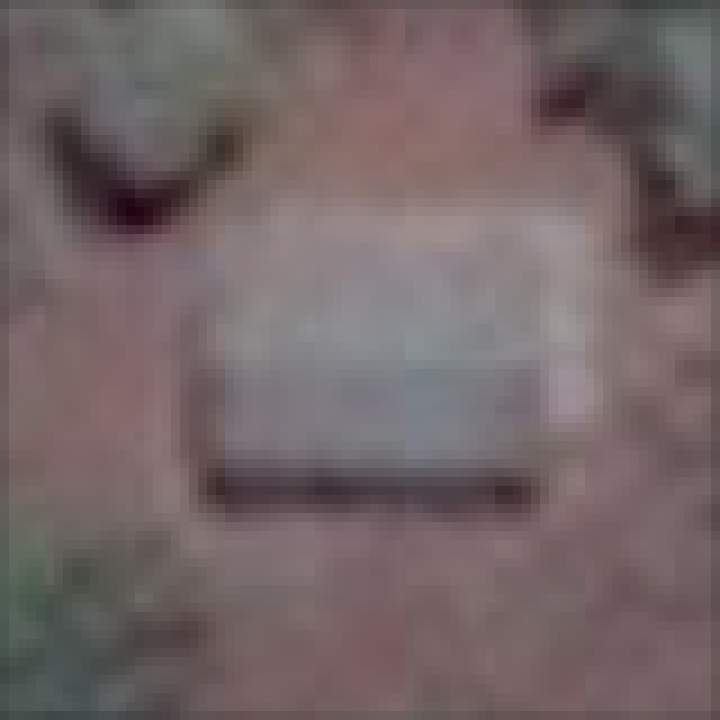}} \hfill
		\subfloat{\includegraphics[width=0.16\linewidth]{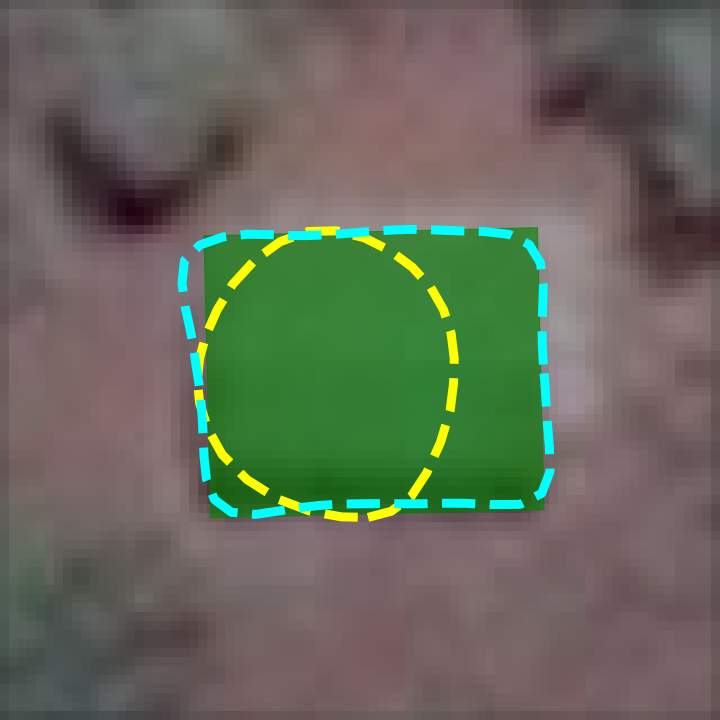} } \hfill
		\subfloat{\includegraphics[width=0.16\linewidth]{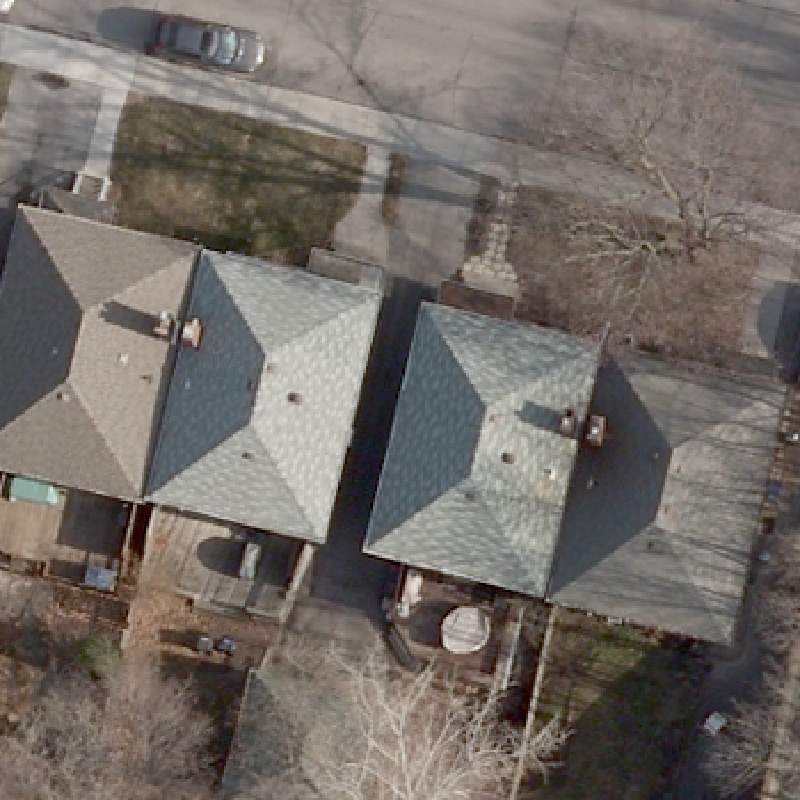}} \hfill
		\subfloat{\includegraphics[width=0.16\linewidth]{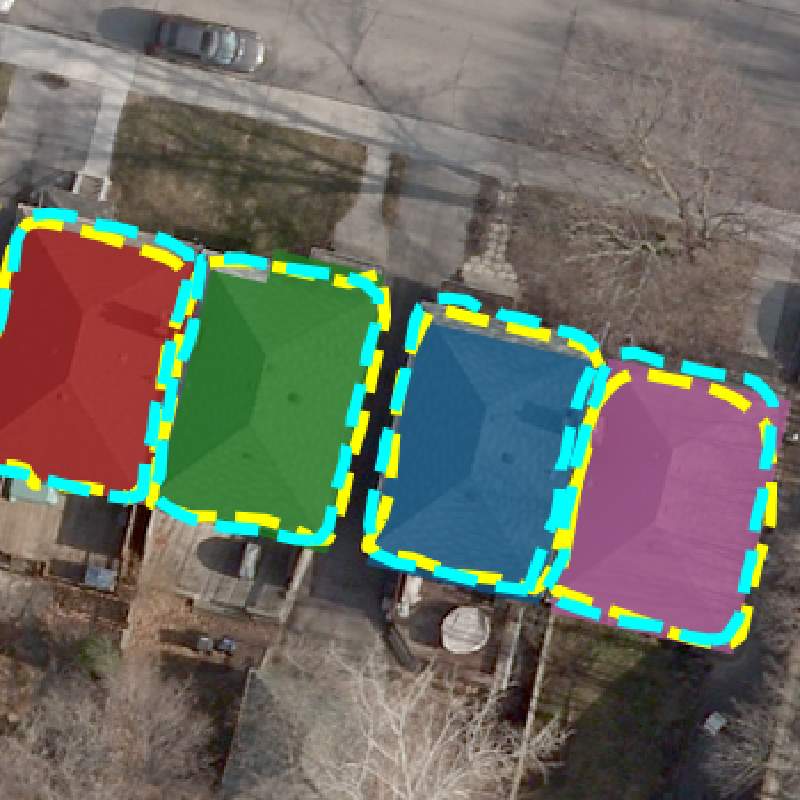} } \\ \vspace{-0.3cm}
		\subfloat{\includegraphics[width=0.16\linewidth]{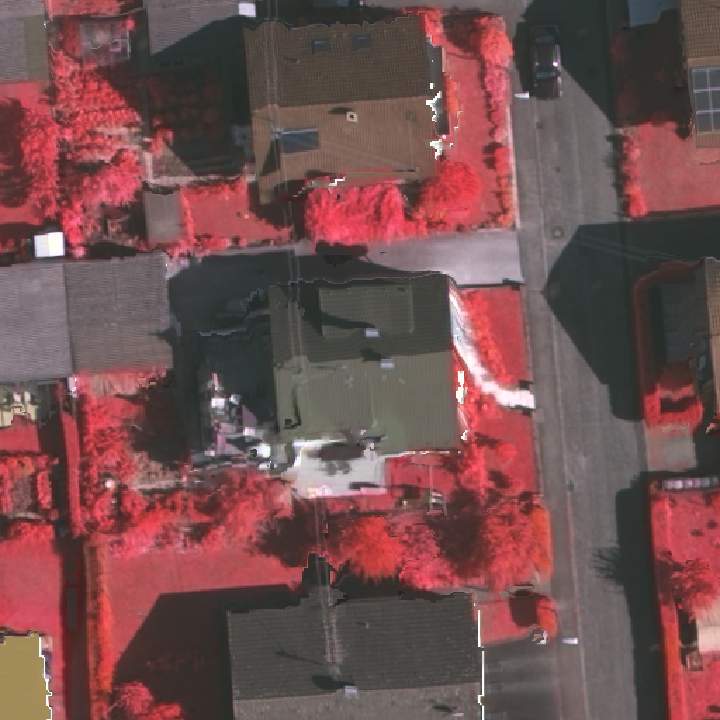}} \hfill
		\subfloat{\includegraphics[width=0.16\linewidth]{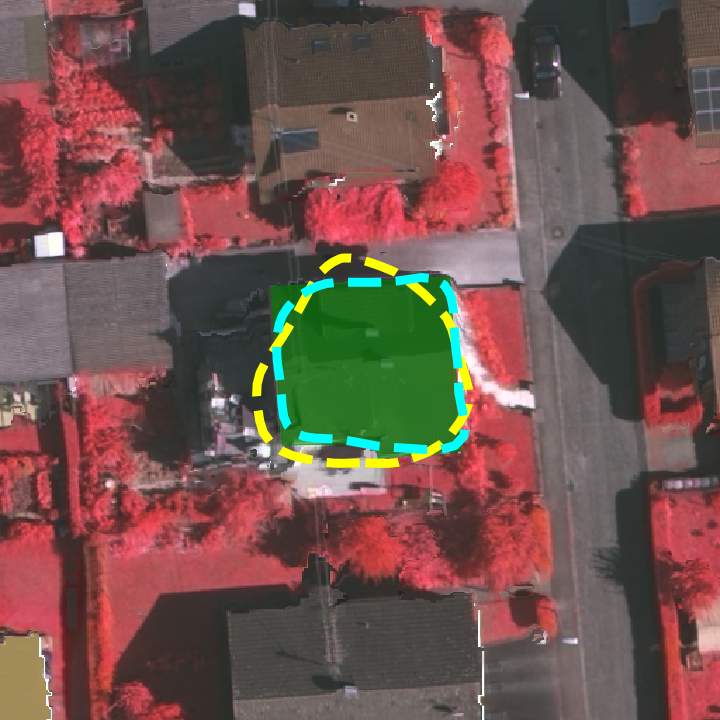}} \hfill
		\subfloat{\includegraphics[width=0.16\linewidth]{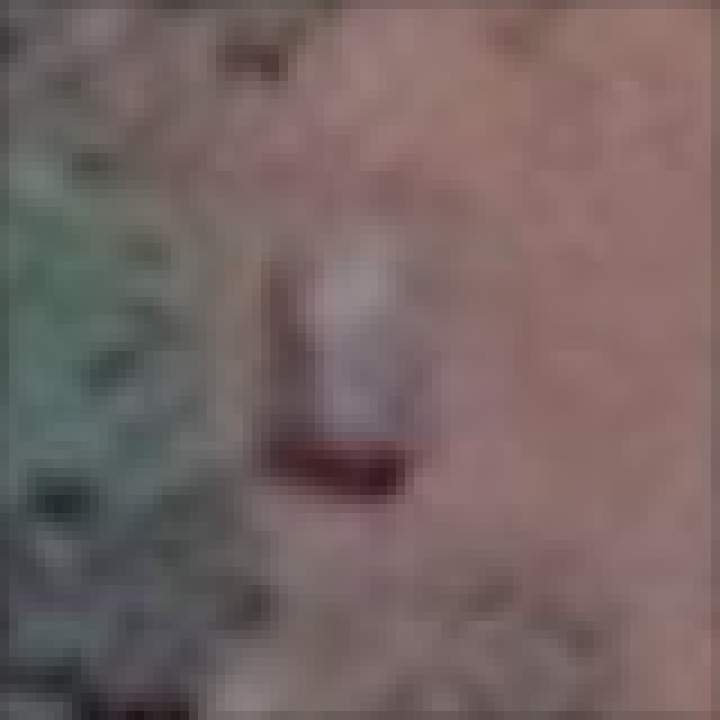}} \hfill
		\subfloat{\includegraphics[width=0.16\linewidth]{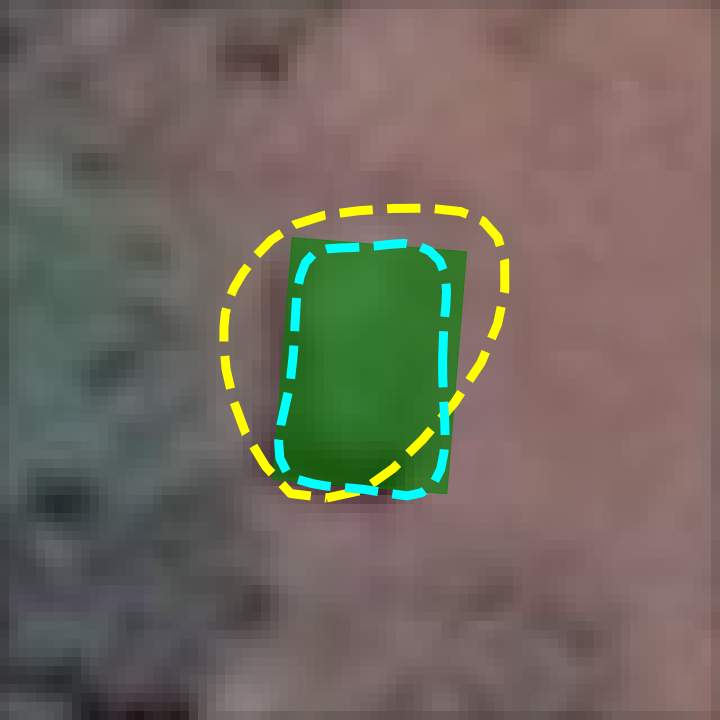} } \hfill
		\subfloat{\includegraphics[width=0.16\linewidth]{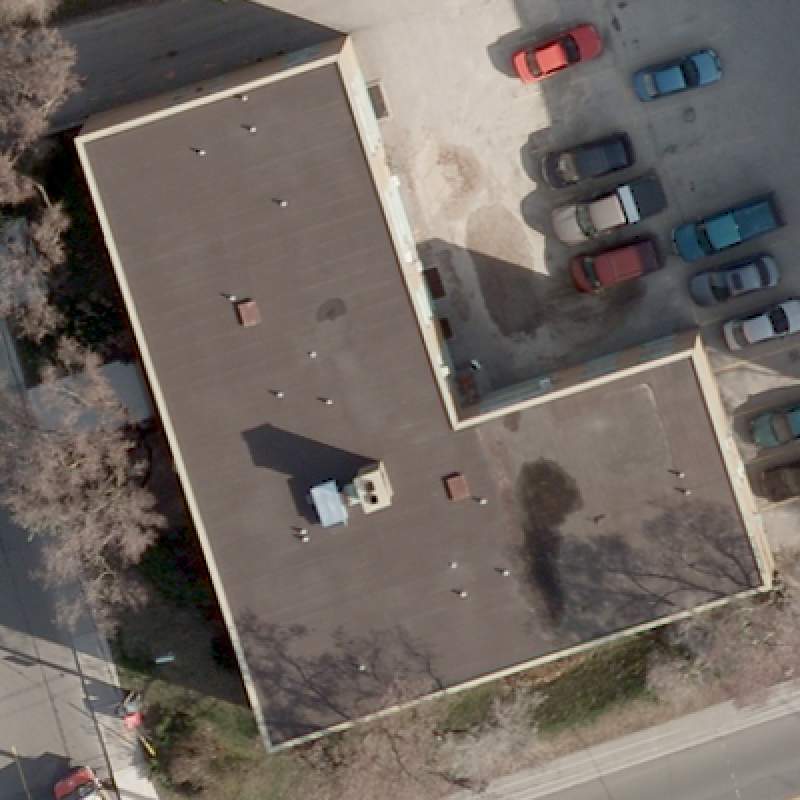}} \hfill
		\subfloat{\includegraphics[width=0.16\linewidth]{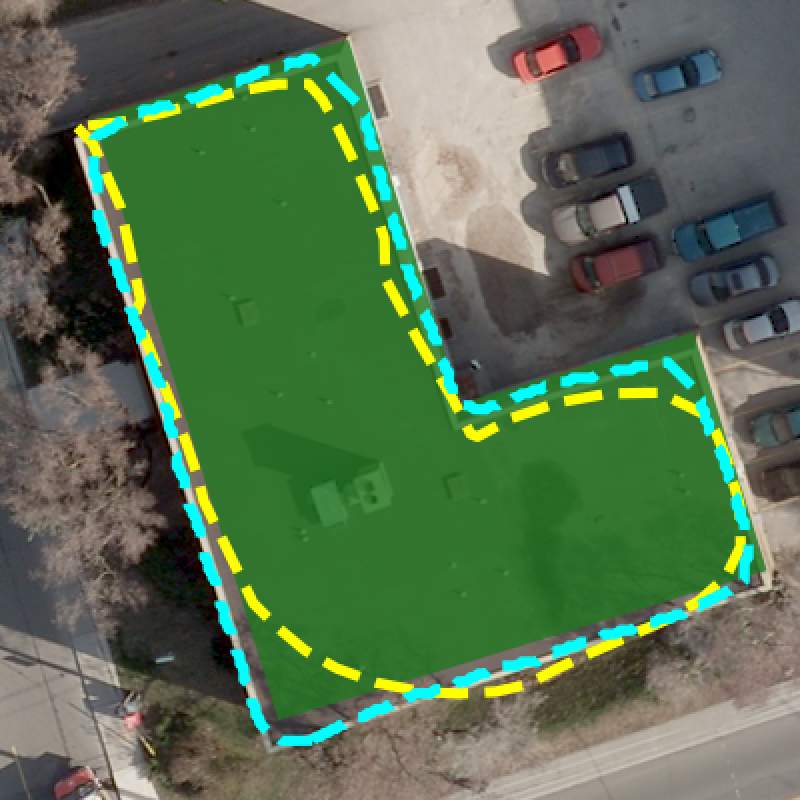} } \\ \vspace{-0.3cm}
		\subfloat{\includegraphics[width=0.16\linewidth]{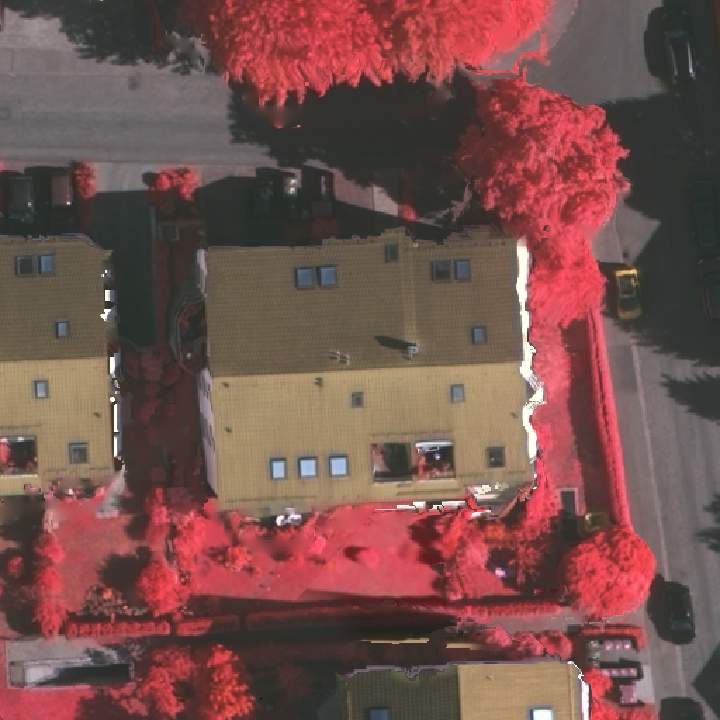}} \hfill
		\subfloat{\includegraphics[width=0.16\linewidth]{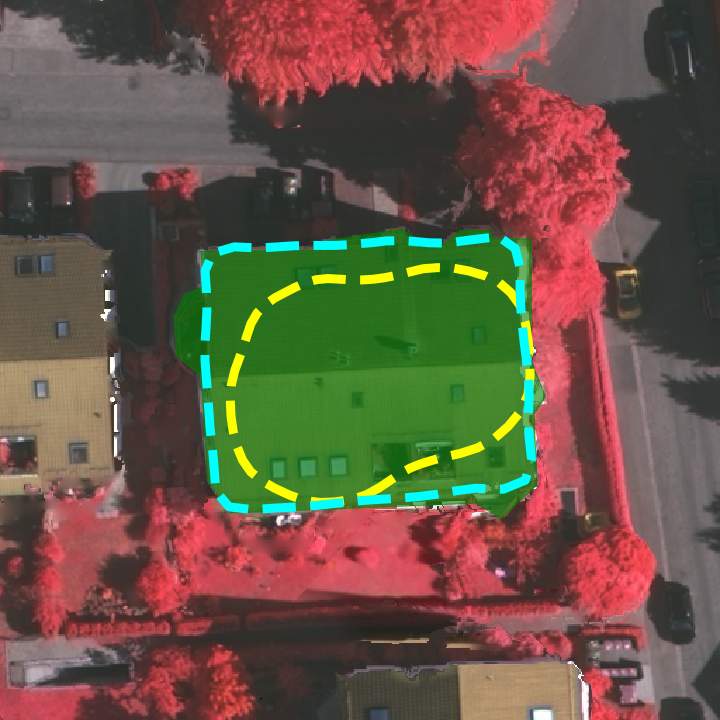}} \hfill
		\subfloat{\includegraphics[width=0.16\linewidth]{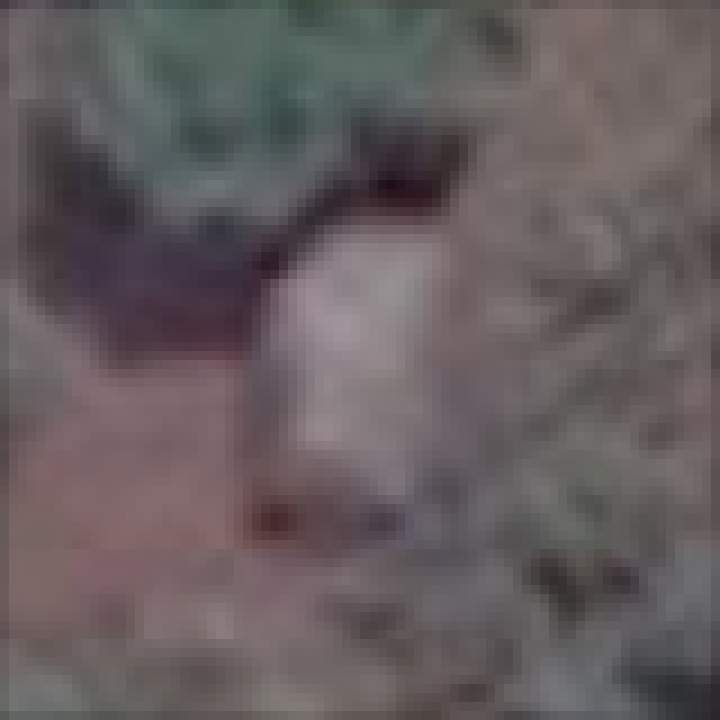}} \hfill
		\subfloat{\includegraphics[width=0.16\linewidth]{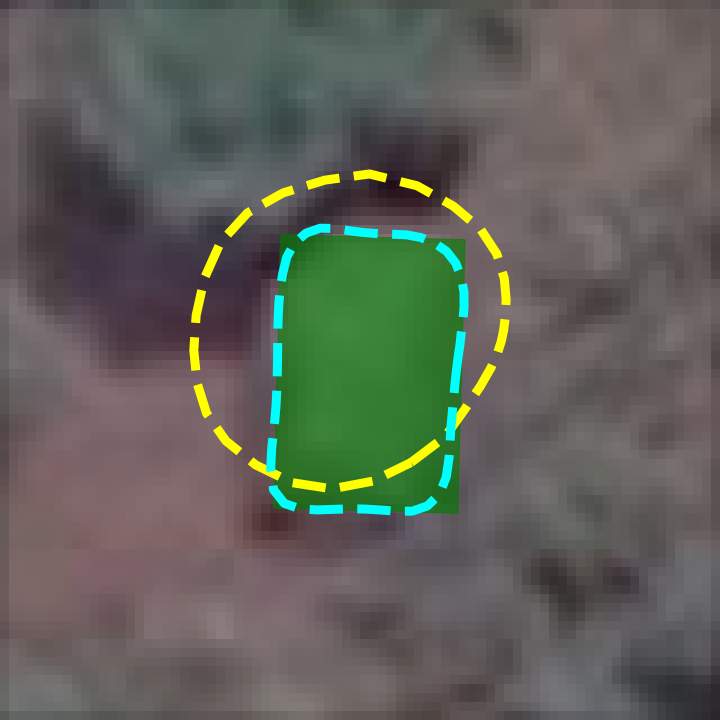} } \hfill
		\subfloat{\includegraphics[width=0.16\linewidth]{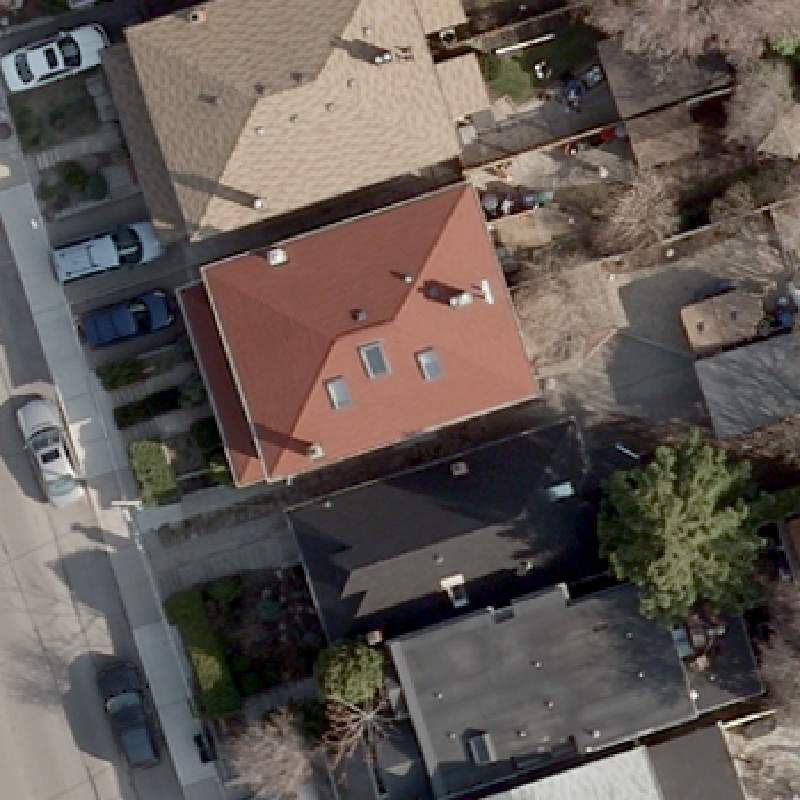}} \hfill
		\subfloat{\includegraphics[width=0.16\linewidth]{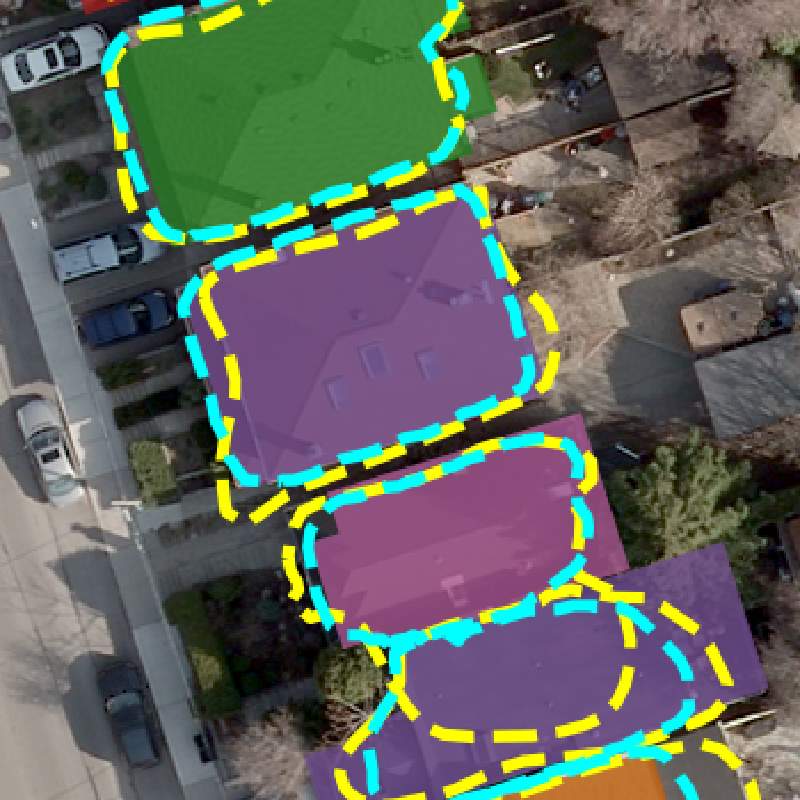} } \\ \vspace{-0.3cm}
		\subfloat{\includegraphics[width=0.16\linewidth]{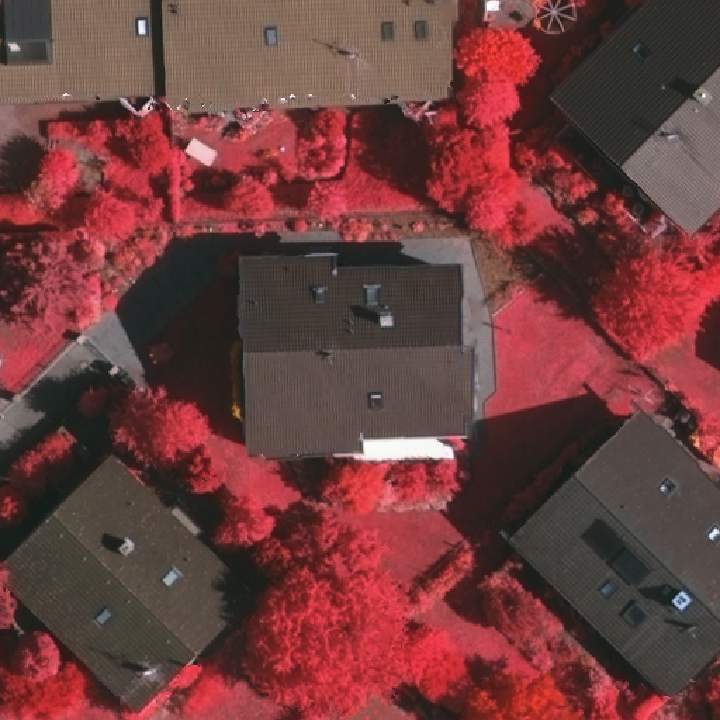}} \hfill
		\subfloat{\includegraphics[width=0.16\linewidth]{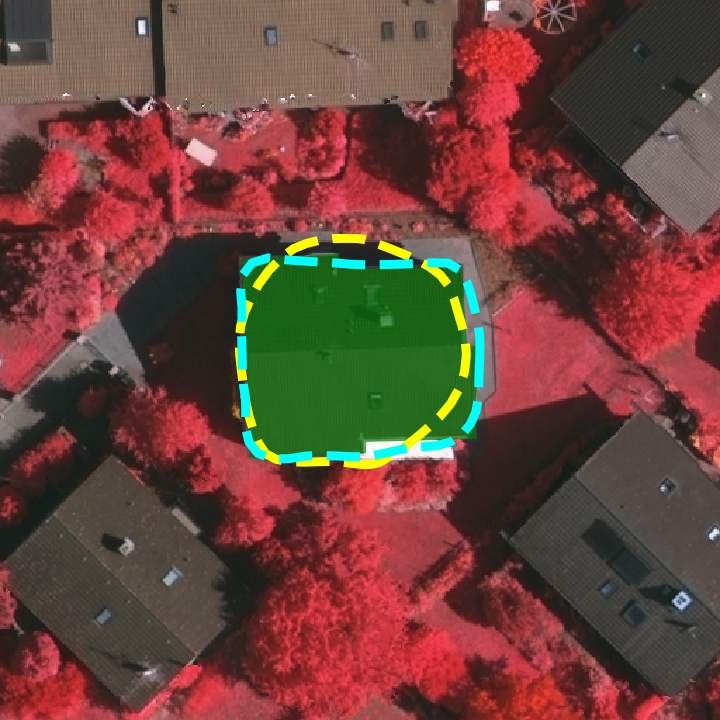}} \hfill
		\subfloat{\includegraphics[width=0.16\linewidth]{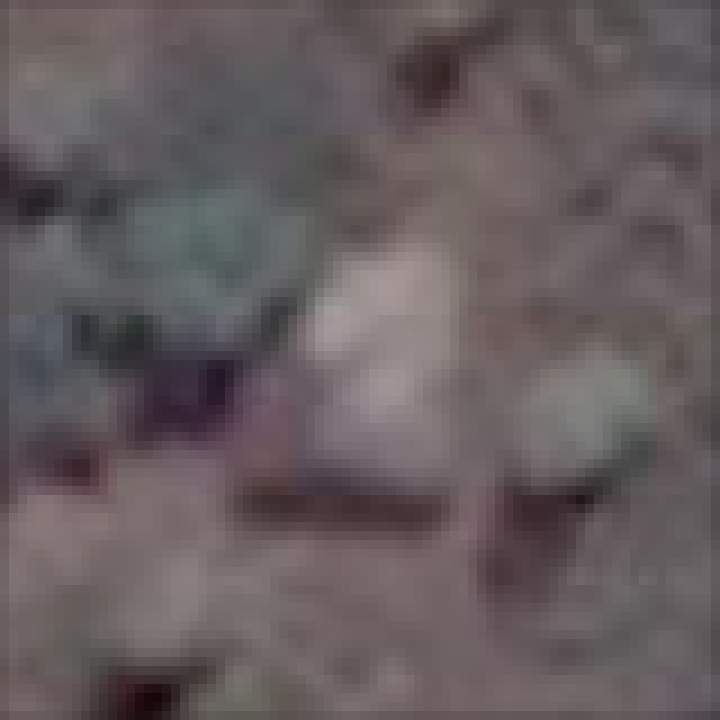}} \hfill
		\subfloat{\includegraphics[width=0.16\linewidth]{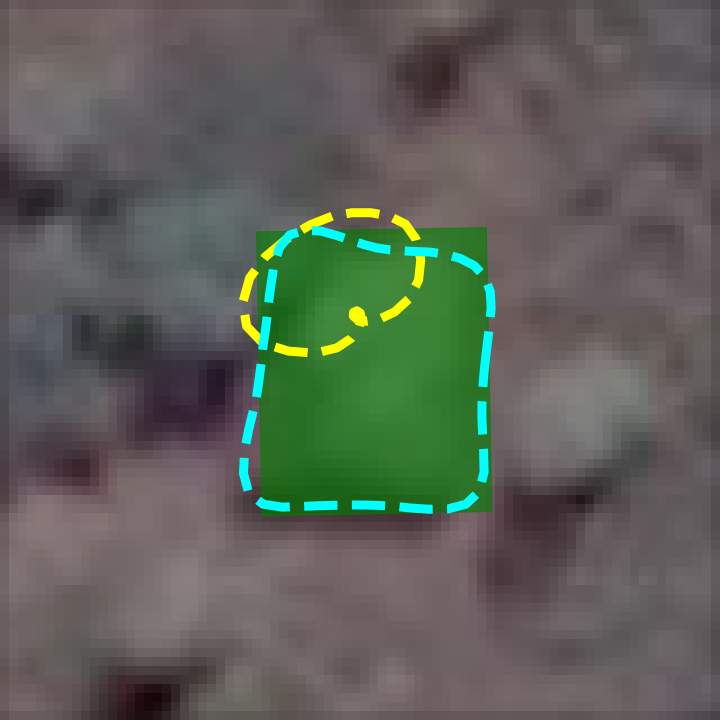} } \hfill
		\subfloat{\includegraphics[width=0.16\linewidth]{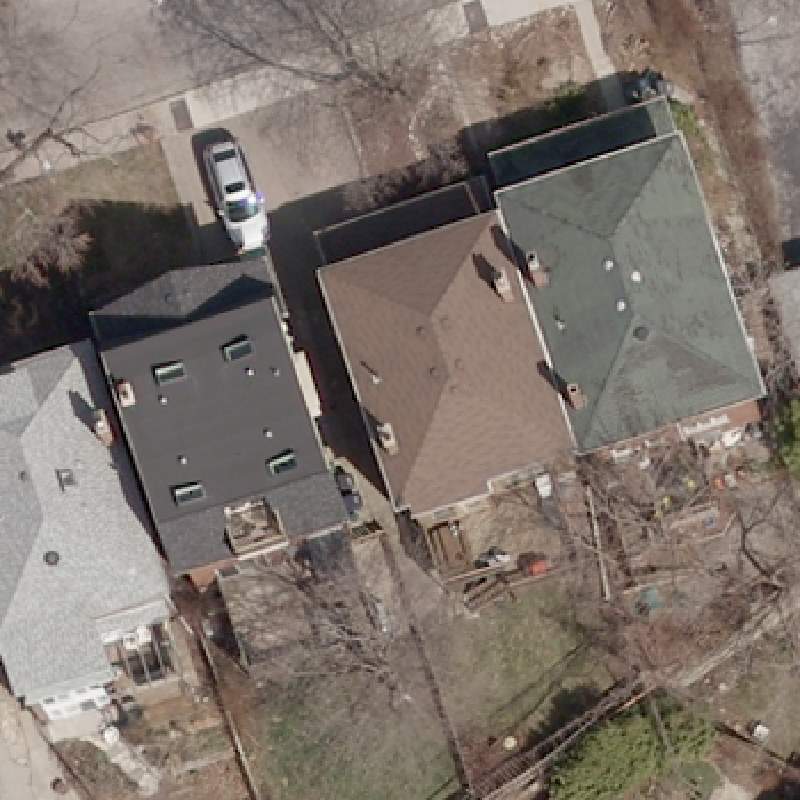}} \hfill
		\subfloat{\includegraphics[width=0.16\linewidth]{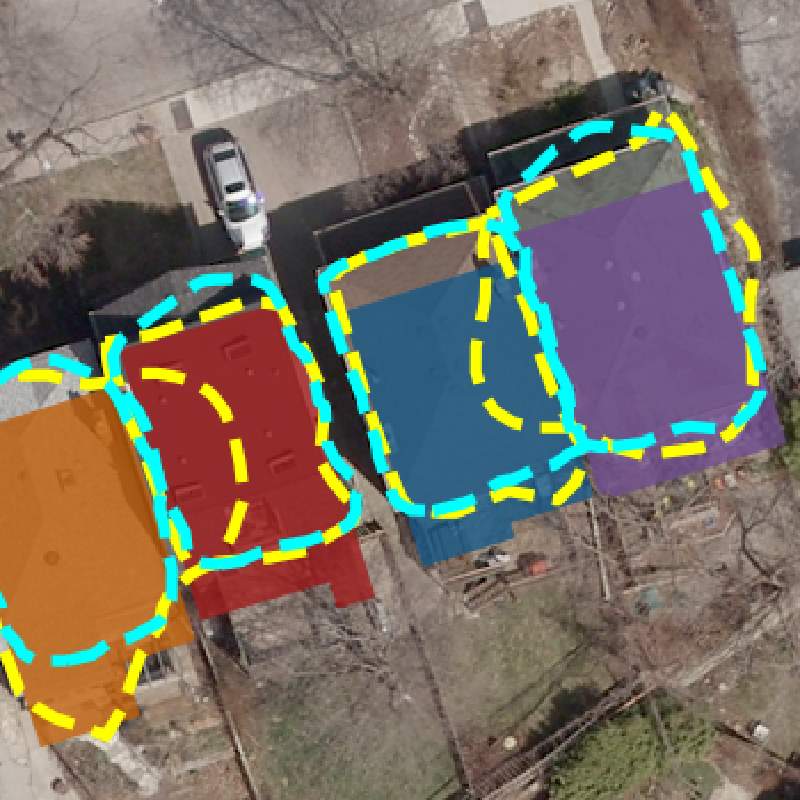} } \\ \vspace{-0.3cm}
		\subfloat{\includegraphics[width=0.16\linewidth]{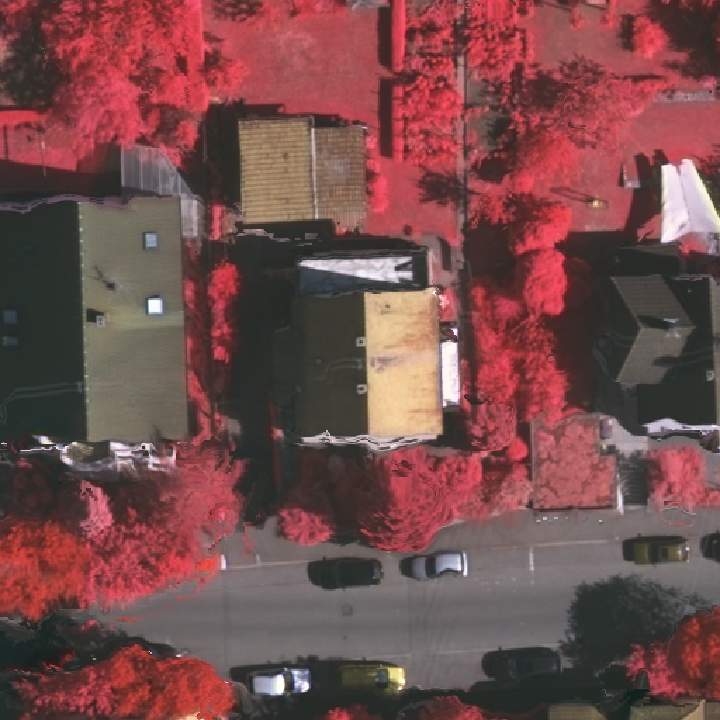}} \hfill
		\subfloat{\includegraphics[width=0.16\linewidth]{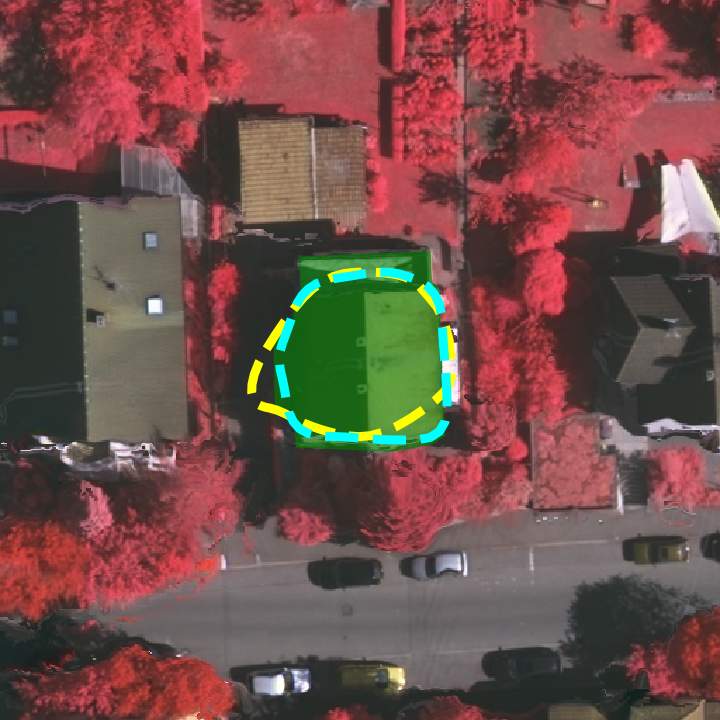}} \hfill
		\subfloat{\includegraphics[width=0.16\linewidth]{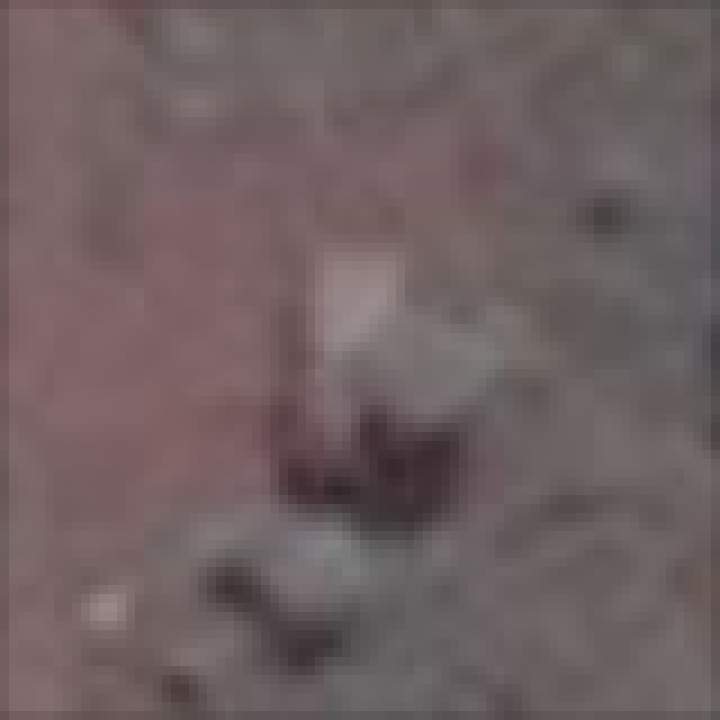}} \hfill
		\subfloat{\includegraphics[width=0.16\linewidth]{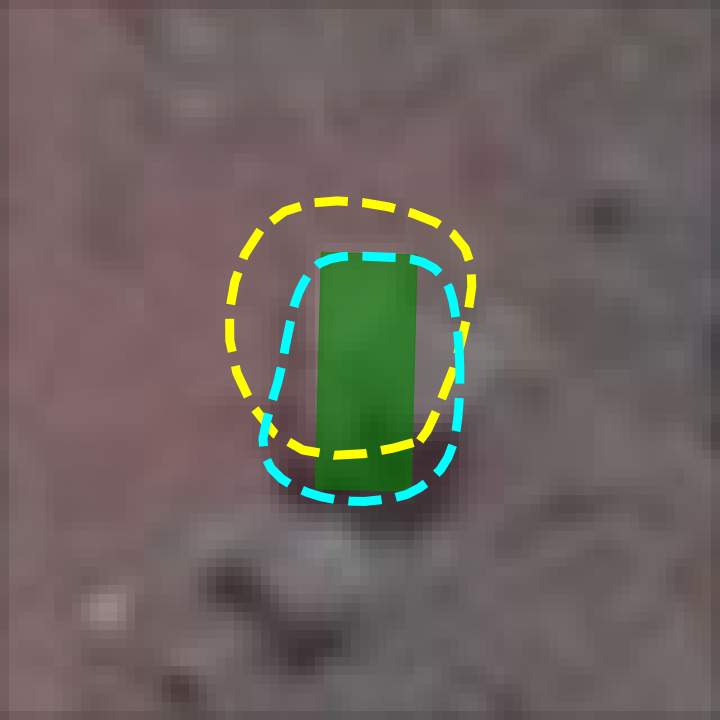} } \hfill
		\subfloat{\includegraphics[width=0.16\linewidth]{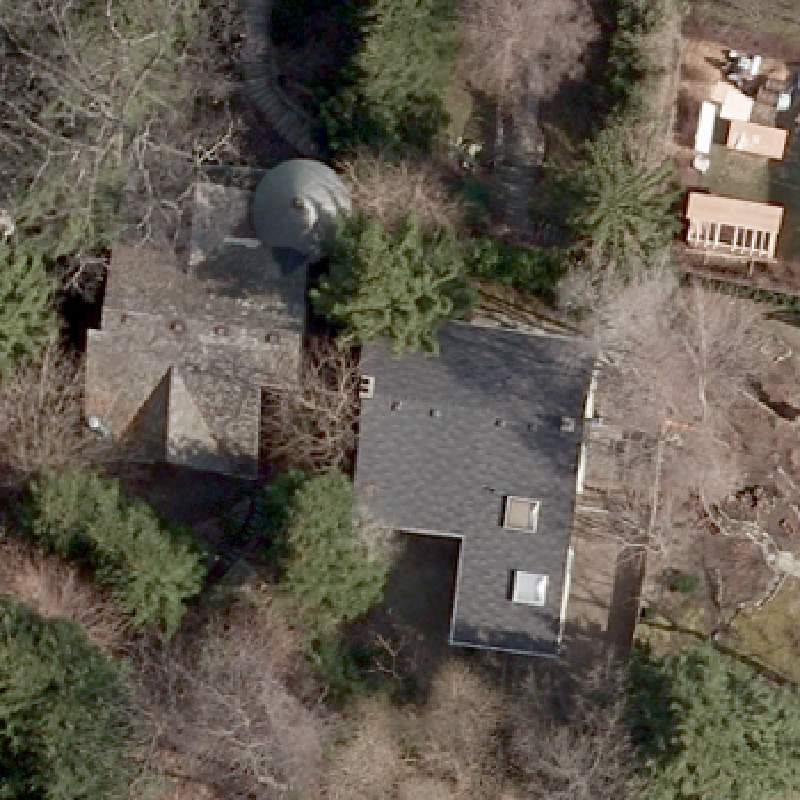}} \hfill
		\subfloat{\includegraphics[width=0.16\linewidth]{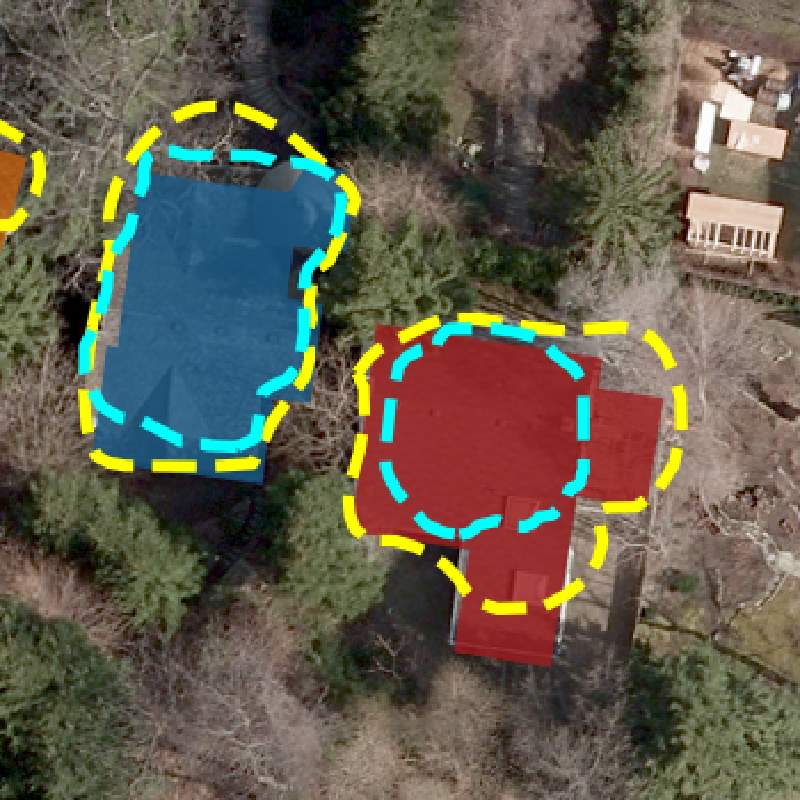} } \\ \vspace{-0.3cm}
		\subfloat{\includegraphics[width=0.16\linewidth]{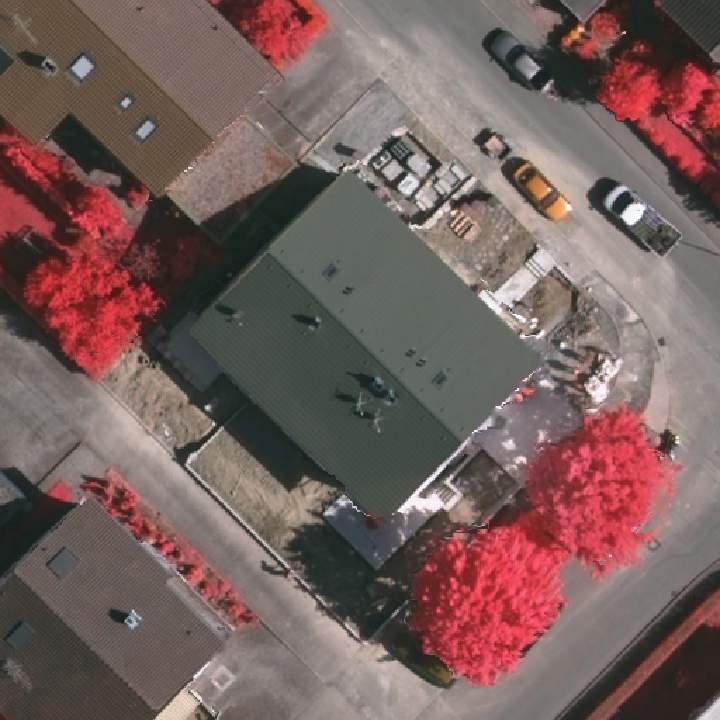}} \hfill
		\subfloat{\includegraphics[width=0.16\linewidth]{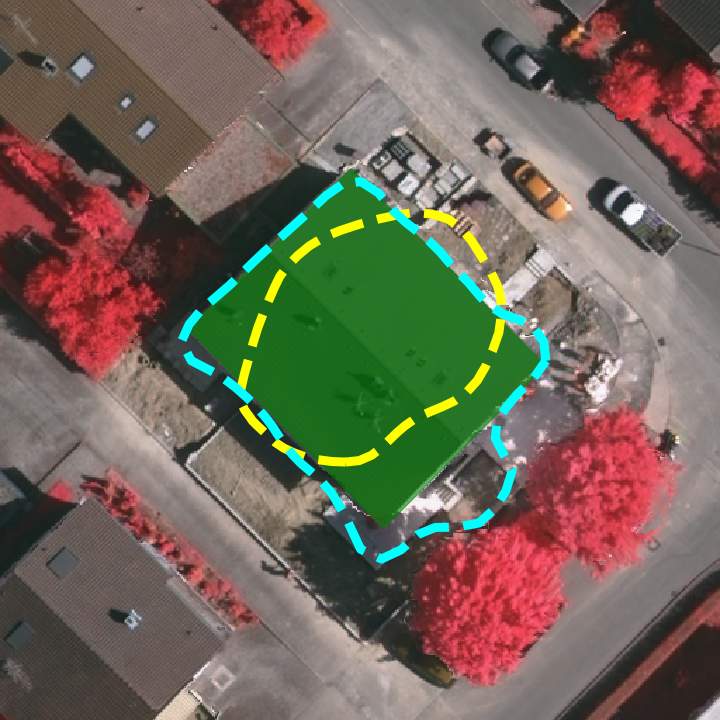}} \hfill
		\subfloat{\includegraphics[width=0.16\linewidth]{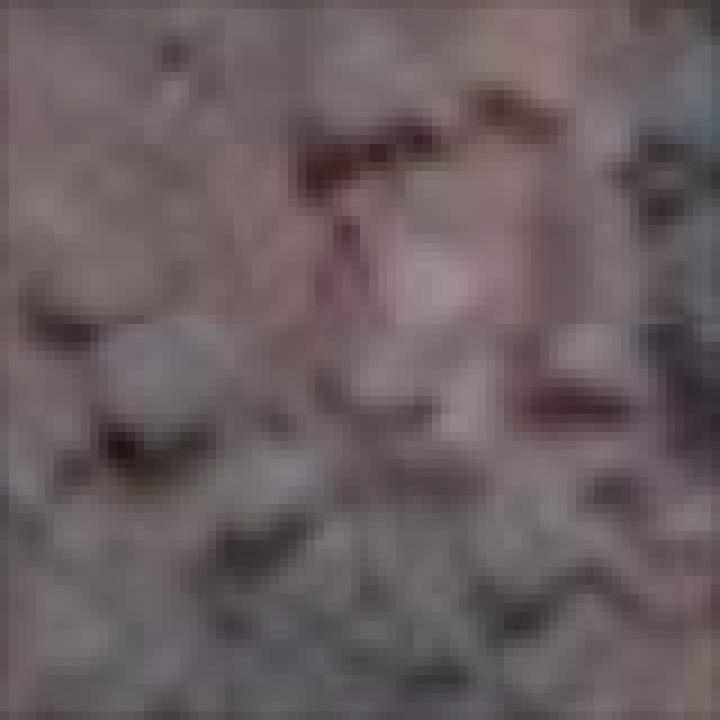}} \hfill
		\subfloat{\includegraphics[width=0.16\linewidth]{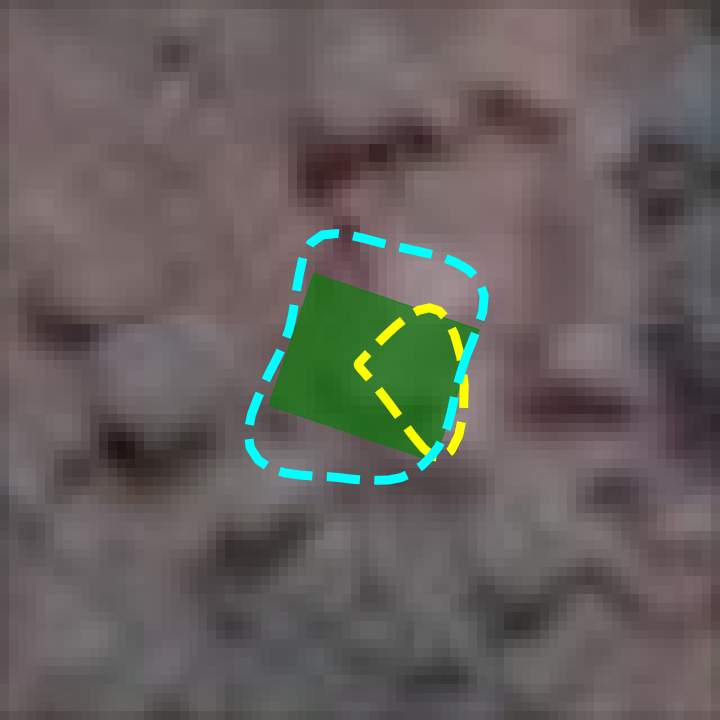} } \hfill
		\subfloat{\includegraphics[width=0.16\linewidth]{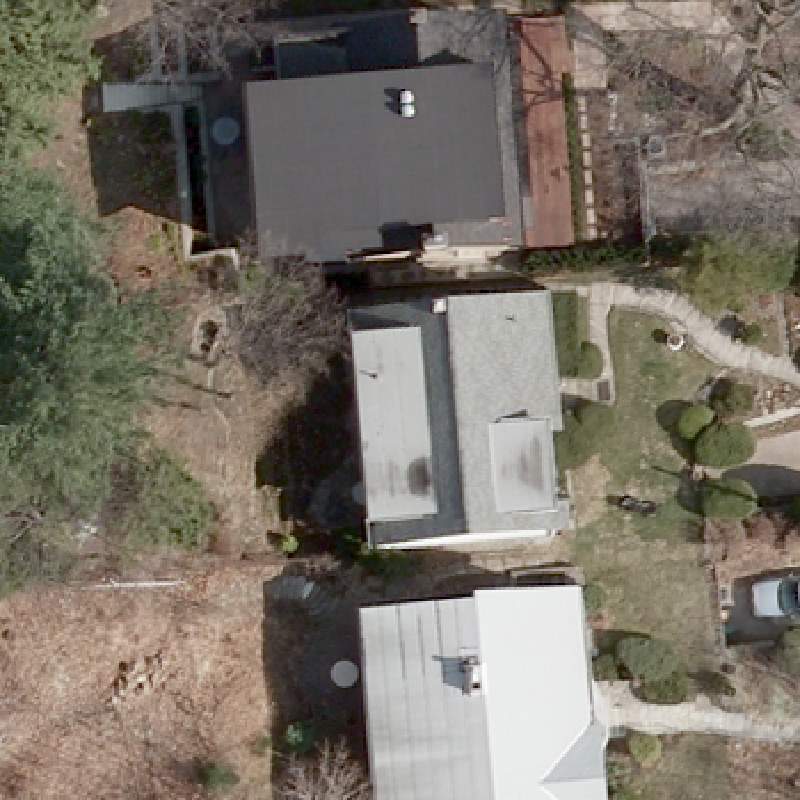}} \hfill
		\subfloat{\includegraphics[width=0.16\linewidth]{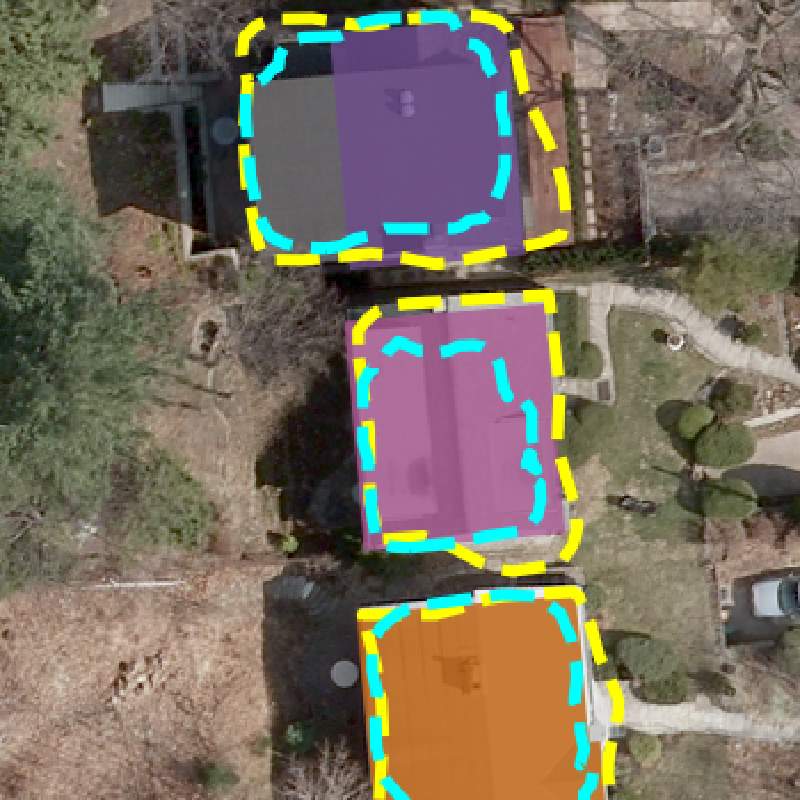} } \\ \vspace{-0.3cm}
		\subfloat[(a)]{\includegraphics[width=0.16\linewidth]{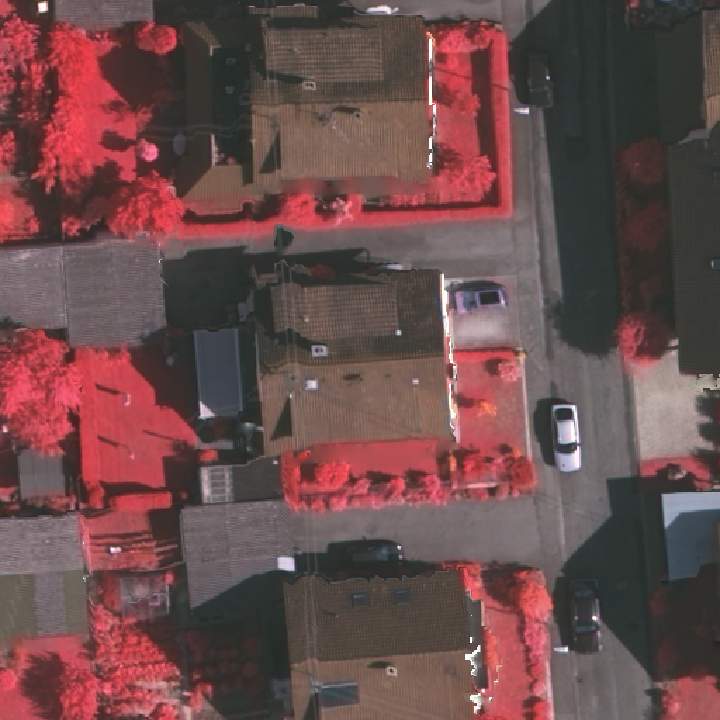}} \hfill
		\subfloat[(b)]{\includegraphics[width=0.16\linewidth]{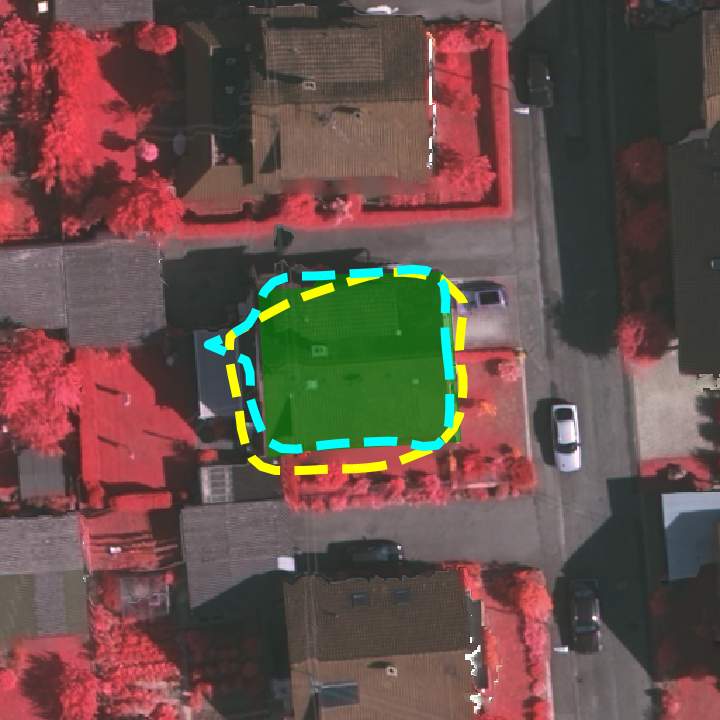}} \hfill
		\subfloat[(c)]{\includegraphics[width=0.16\linewidth]{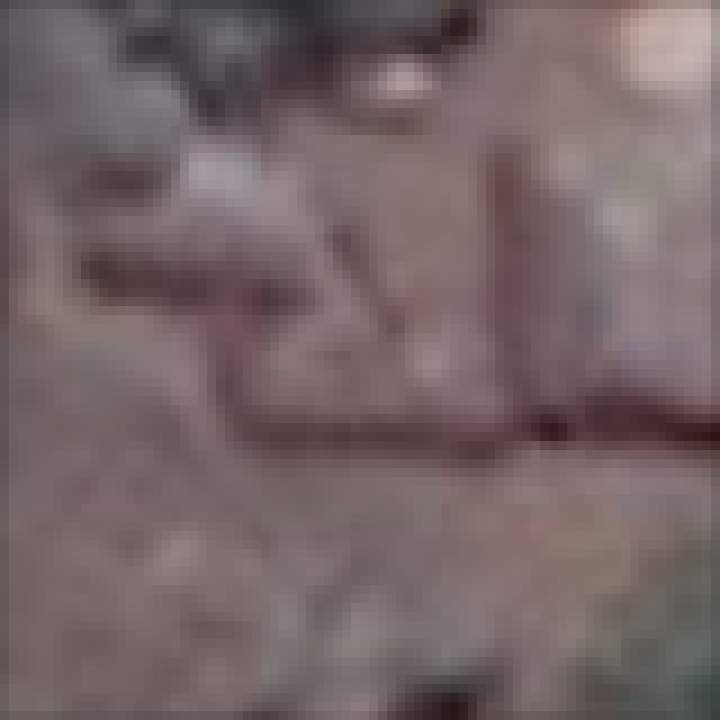}} \hfill
		\subfloat[(d)]{\includegraphics[width=0.16\linewidth]{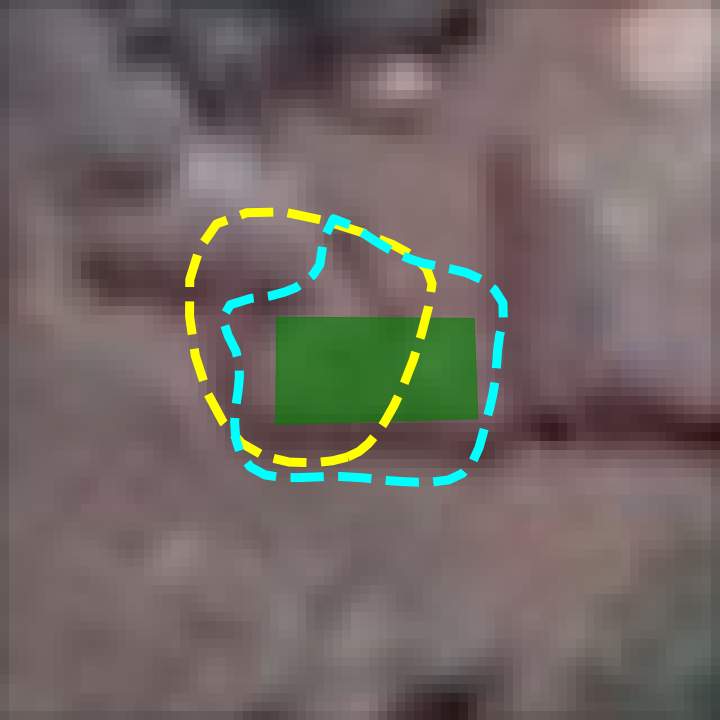} } \hfill
		\subfloat[(e)]{\includegraphics[width=0.16\linewidth]{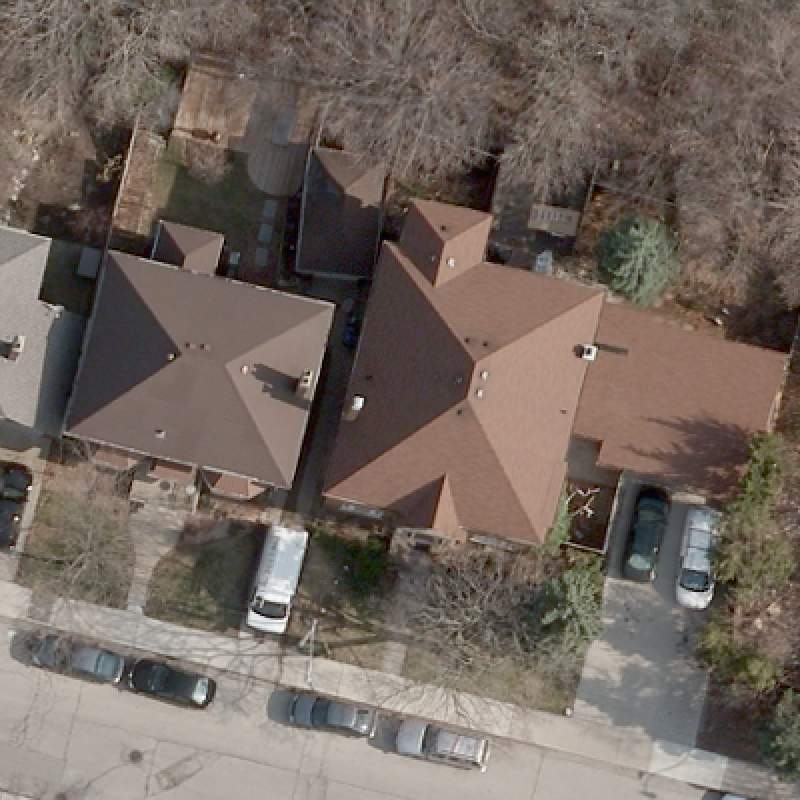}} \hfill
		\subfloat[(f)]{\includegraphics[width=0.16\linewidth]{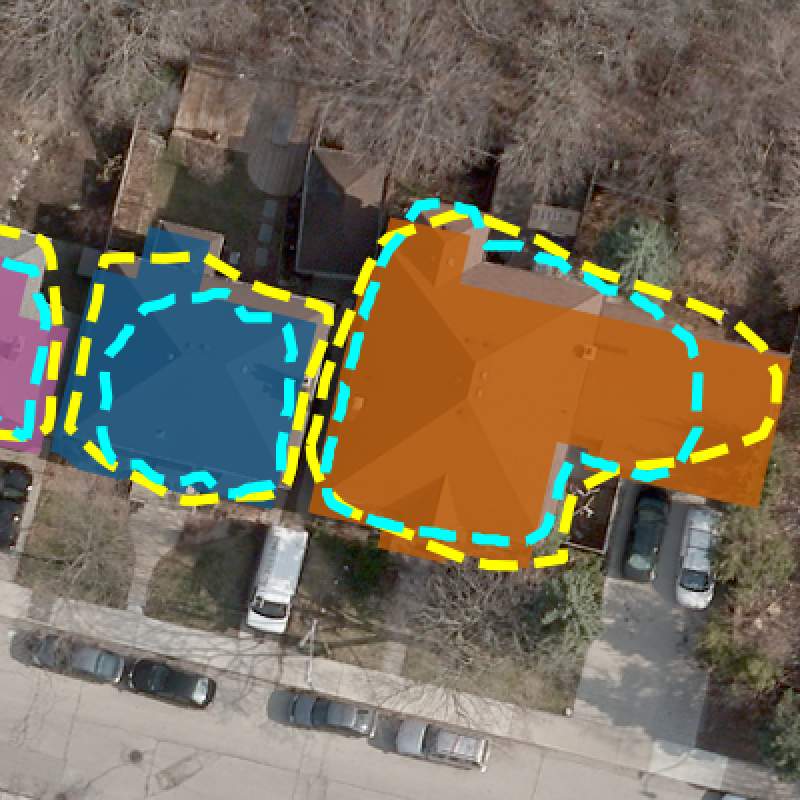}
		}
		\caption{Results on (a-b) Vaihingen, (c-d) Bing Huts, (e-f) TorontoCity. Bottom three rows highlight failure cases. Original image shown in left. On right, our output is shown in cyan; DSAC output in yellow; ground truth is shaded.}
		\label{fig:qual_results_1}
	\end{figure*}
}

{
	\captionsetup[subfigure]{labelformat=empty}
	\begin{figure*}
		\centering
		\subfloat{\includegraphics[width=0.16\linewidth]{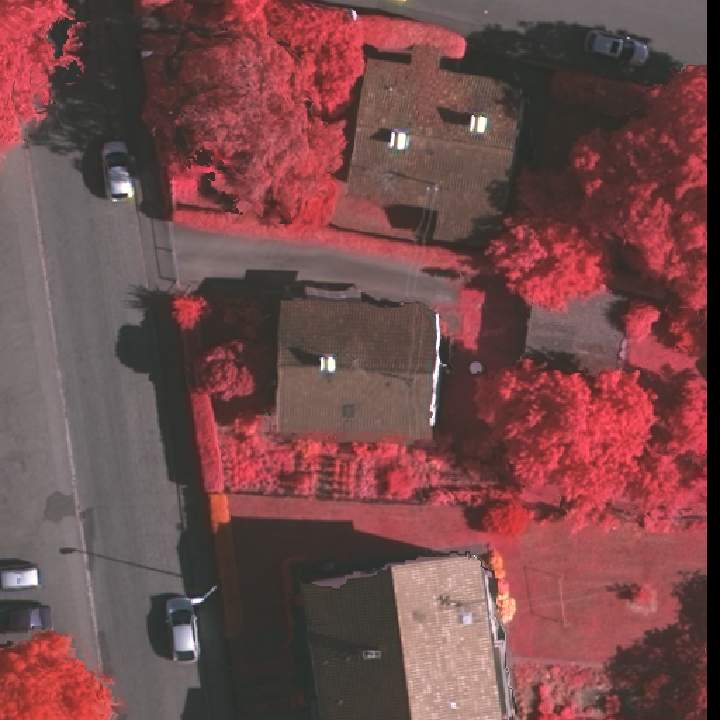}} \hfill
		\subfloat{\includegraphics[width=0.16\linewidth]{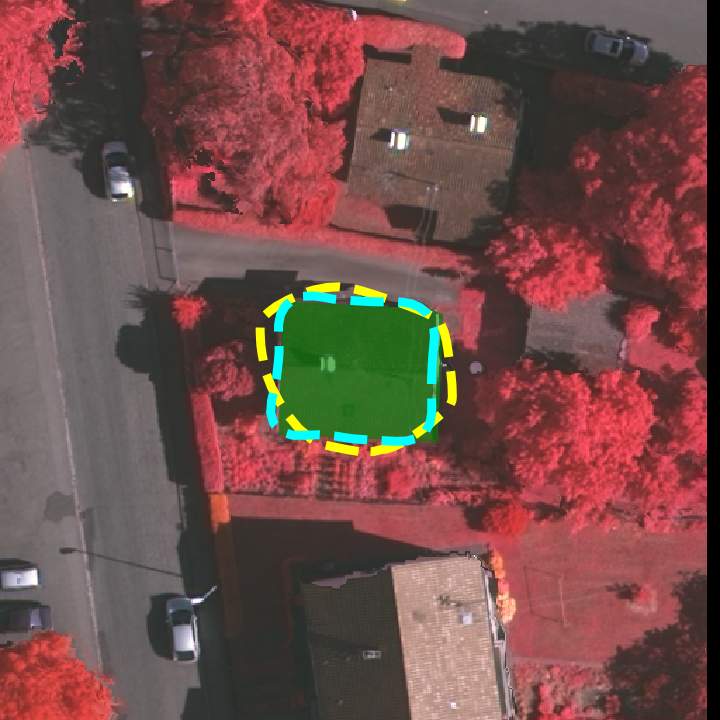}} \hfill
		\subfloat{\includegraphics[width=0.16\linewidth]{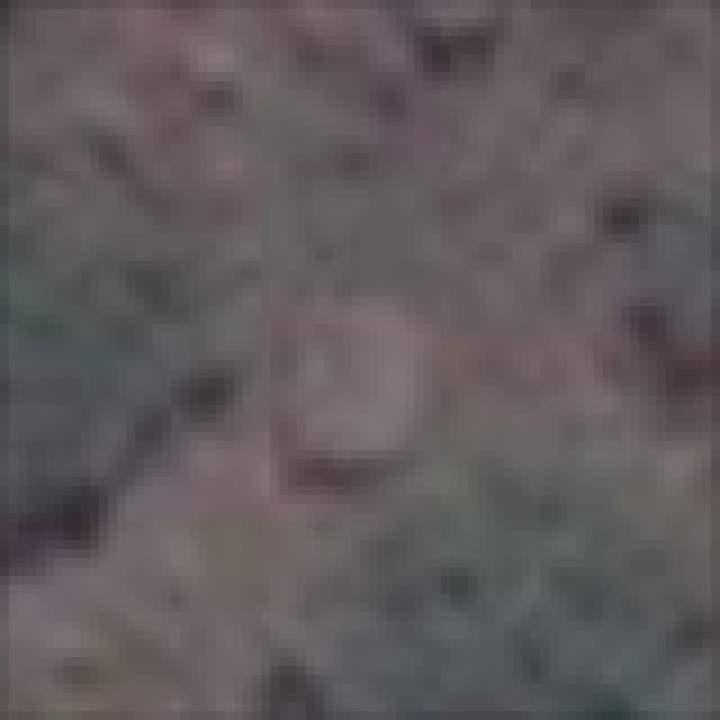}} \hfill
		\subfloat{\includegraphics[width=0.16\linewidth]{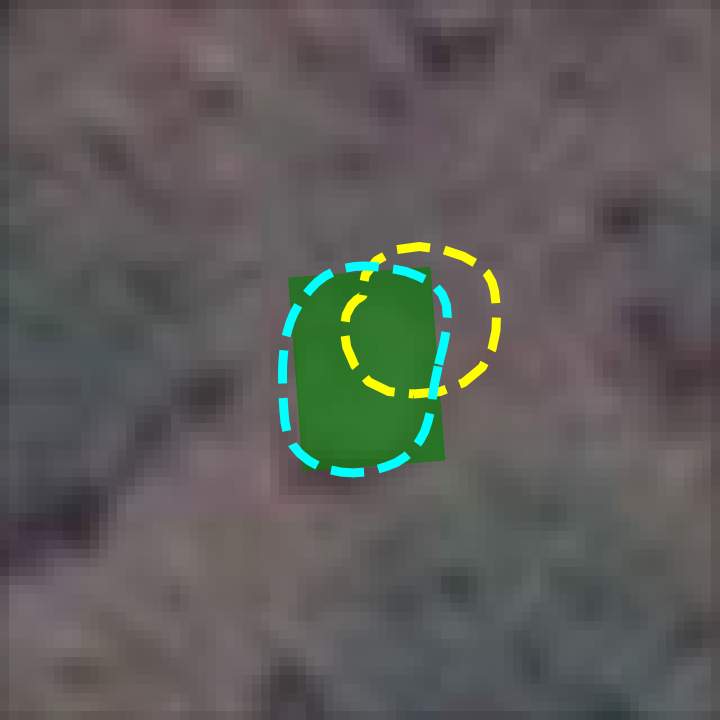} } \hfill
		\subfloat{\includegraphics[width=0.16\linewidth]{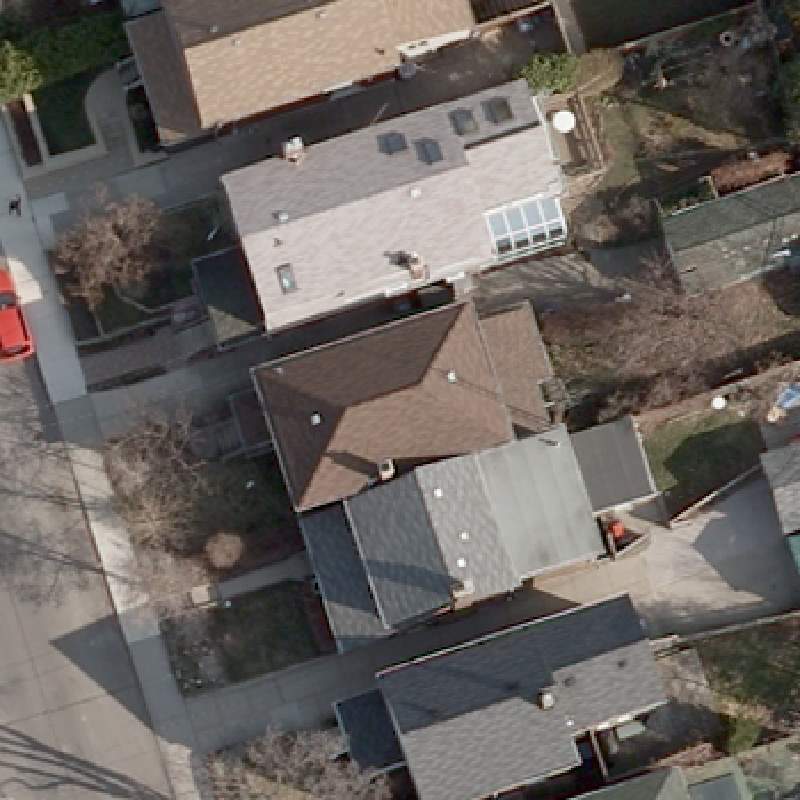}} \hfill
		\subfloat{\includegraphics[width=0.16\linewidth]{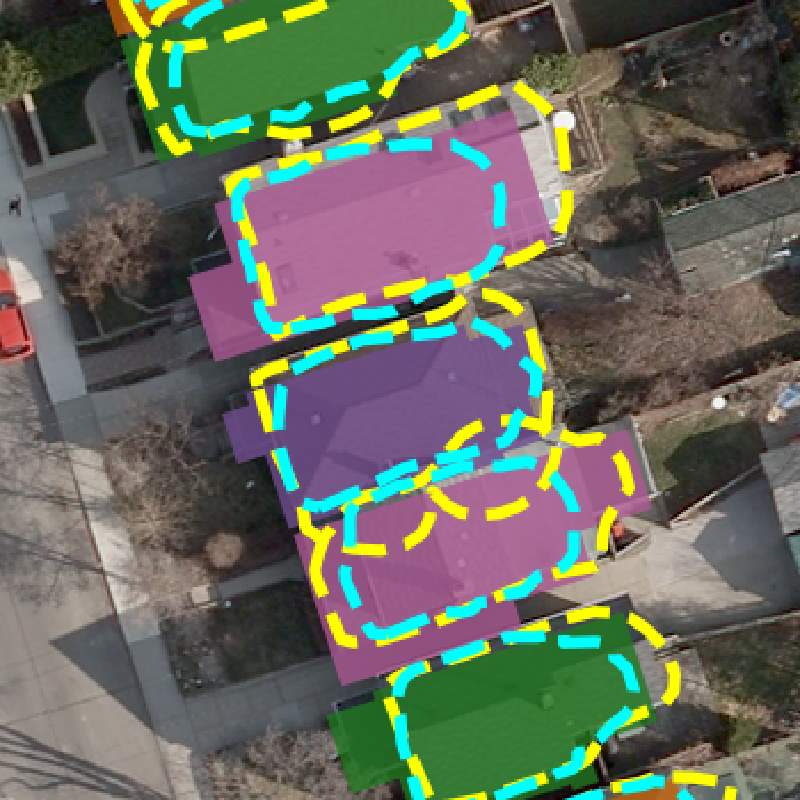} } \\ \vspace{-0.3cm}
		\subfloat{\includegraphics[width=0.16\linewidth]{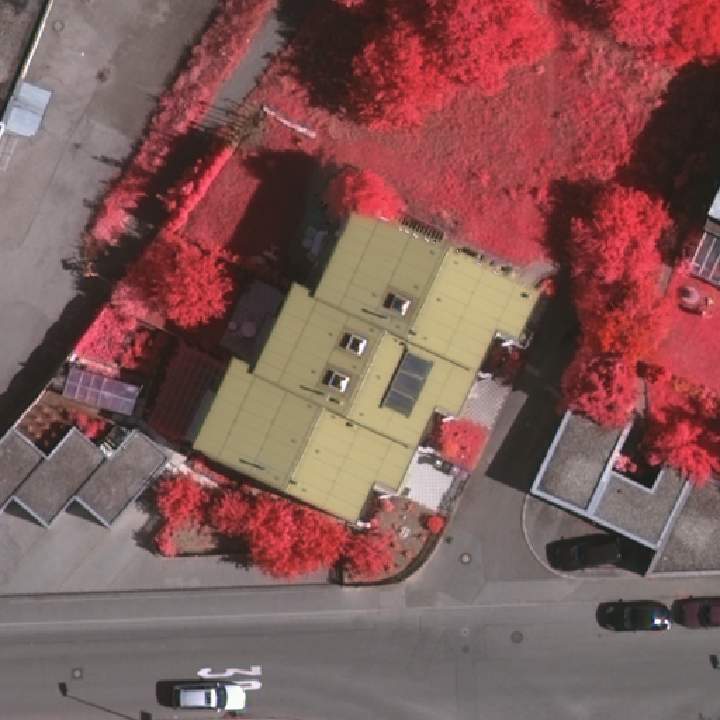}} \hfill
		\subfloat{\includegraphics[width=0.16\linewidth]{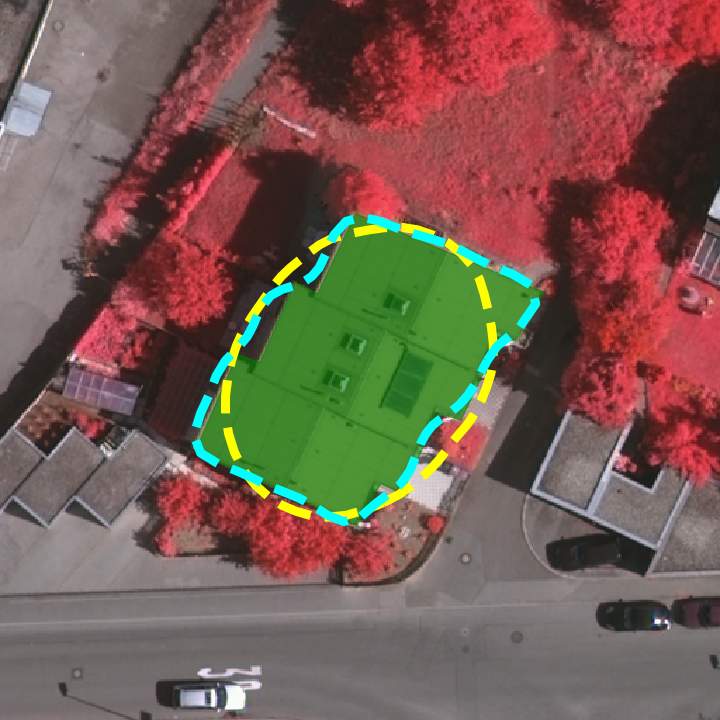}} \hfill
		\subfloat{\includegraphics[width=0.16\linewidth]{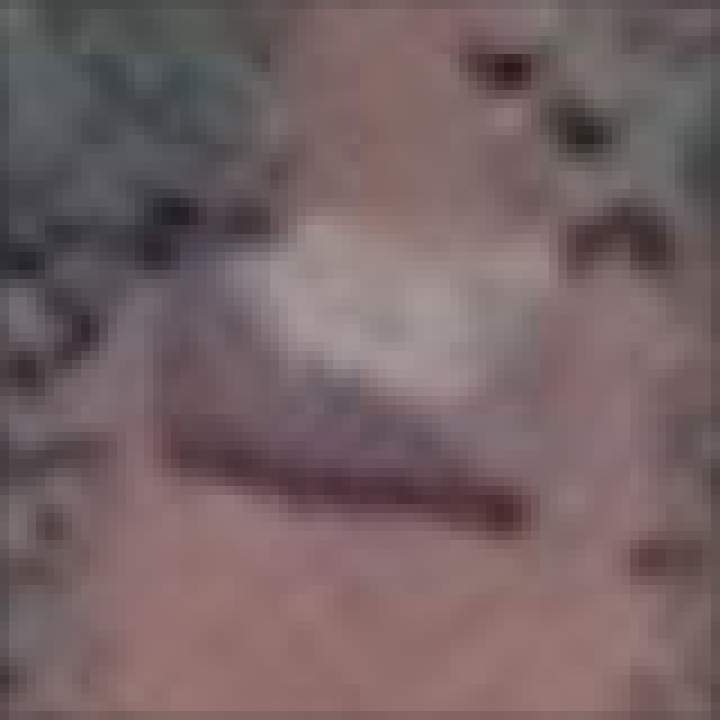}} \hfill
		\subfloat{\includegraphics[width=0.16\linewidth]{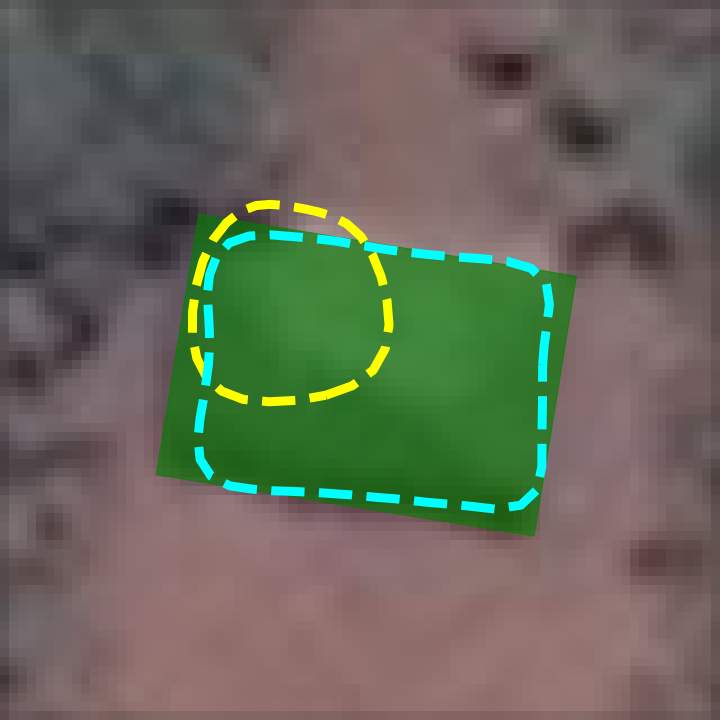} } \hfill
		\subfloat{\includegraphics[width=0.16\linewidth]{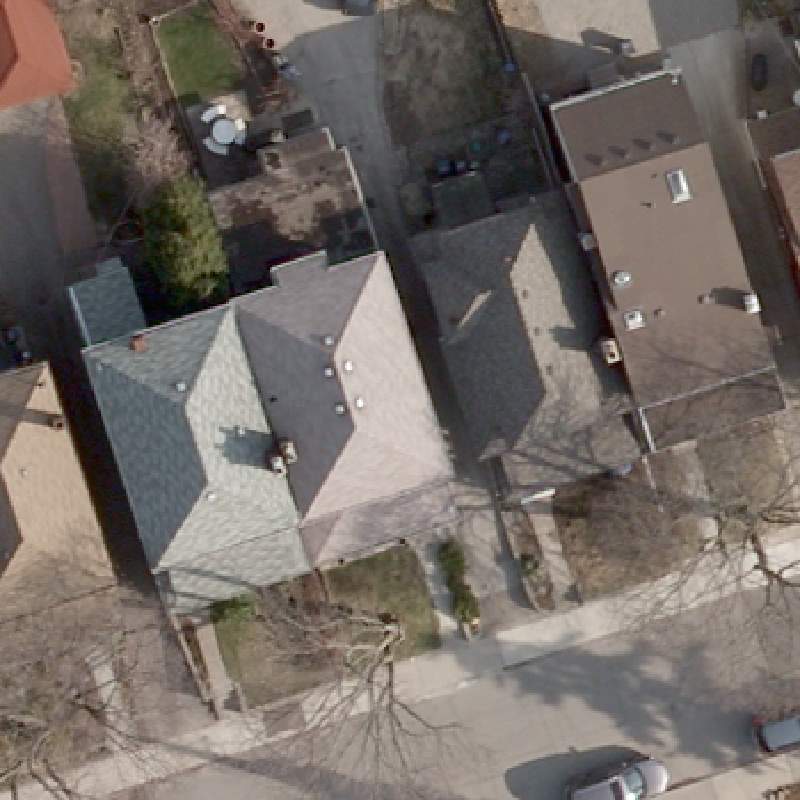}} \hfill
		\subfloat{\includegraphics[width=0.16\linewidth]{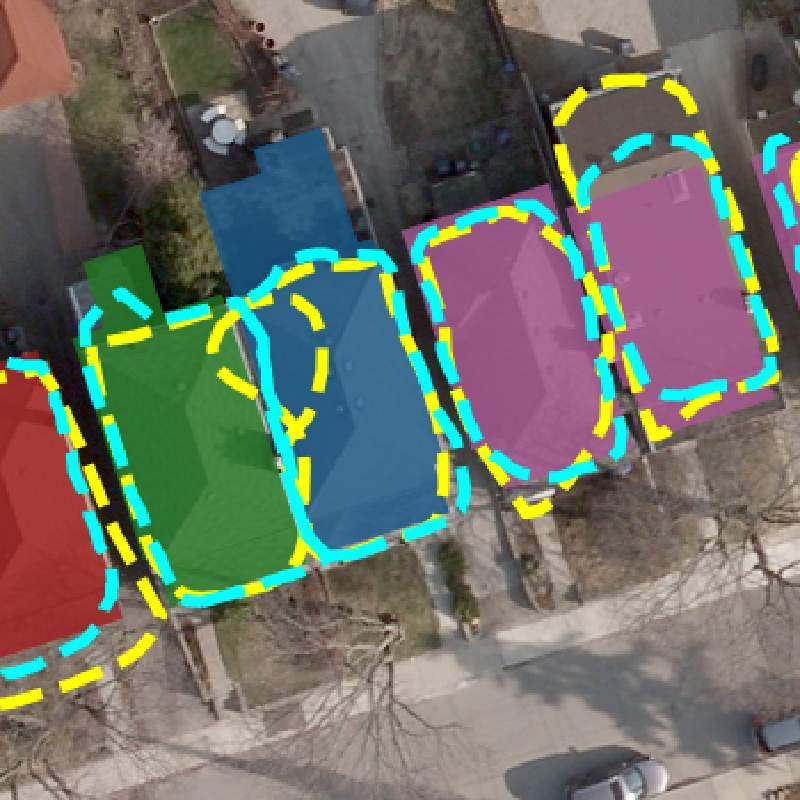} } \\ \vspace{-0.3cm}
		\subfloat{\includegraphics[width=0.16\linewidth]{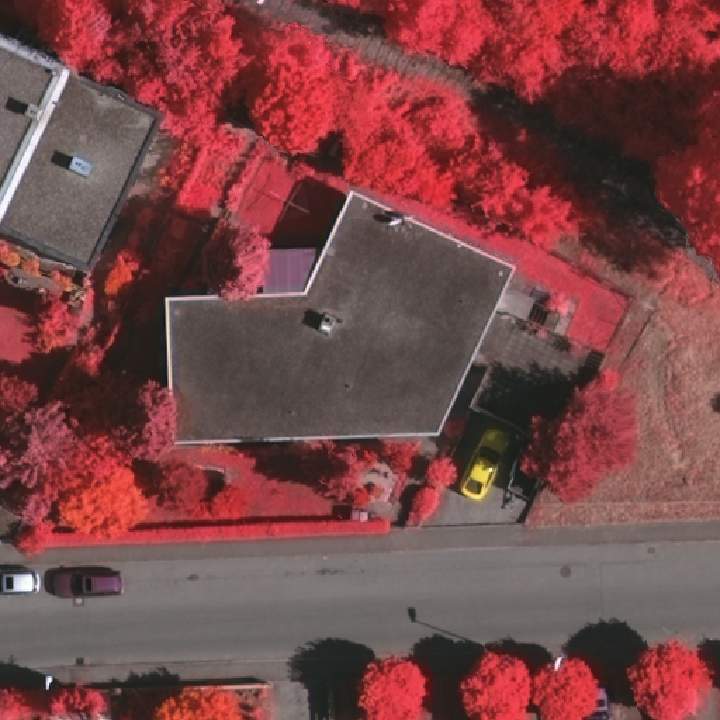}} \hfill
		\subfloat{\includegraphics[width=0.16\linewidth]{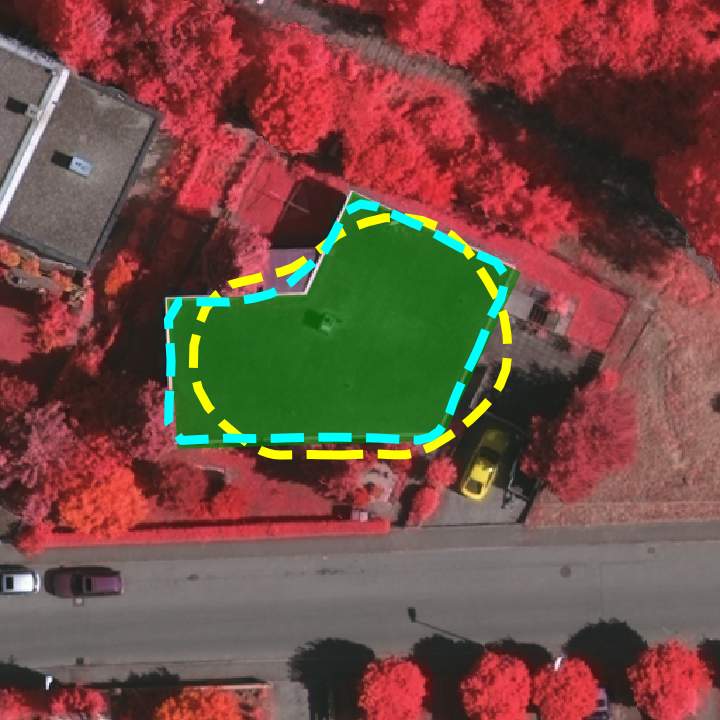}} \hfill
		\subfloat{\includegraphics[width=0.16\linewidth]{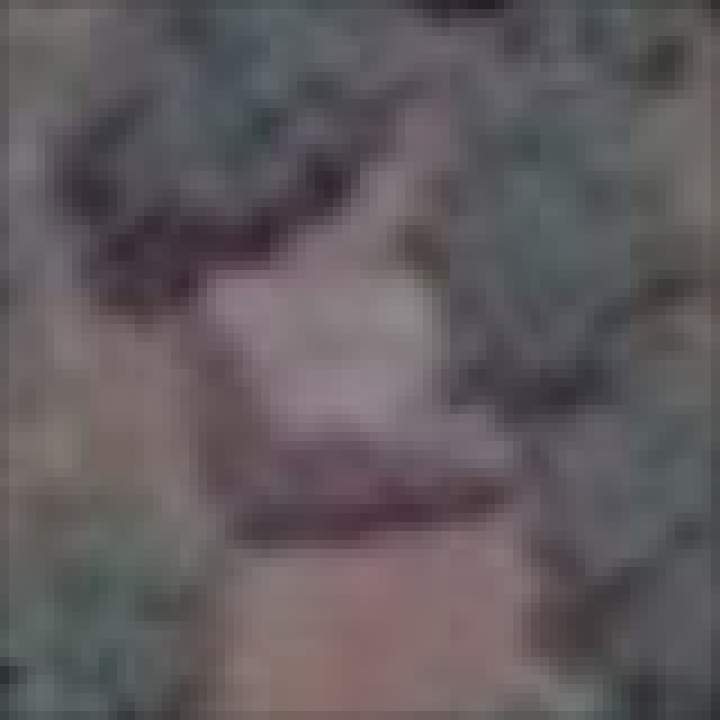}} \hfill
		\subfloat{\includegraphics[width=0.16\linewidth]{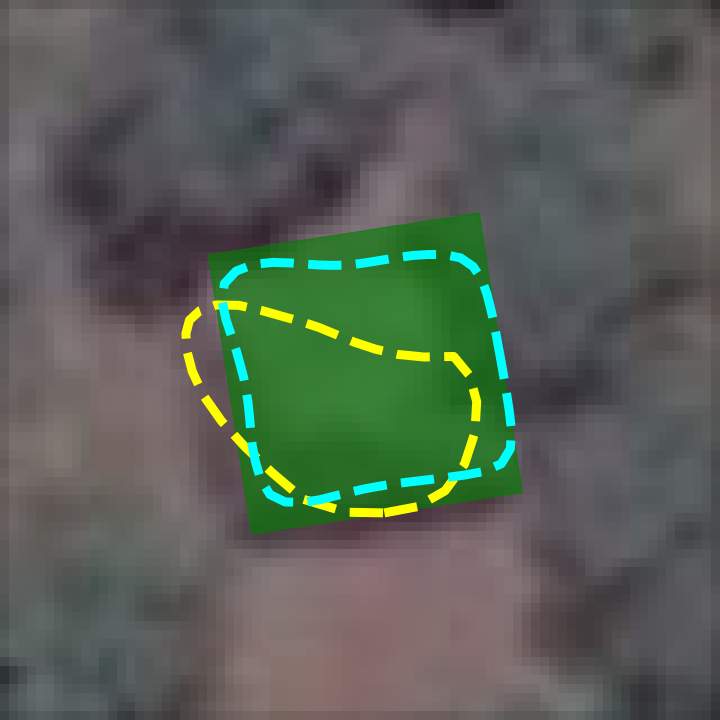} } \hfill
		\subfloat{\includegraphics[width=0.16\linewidth]{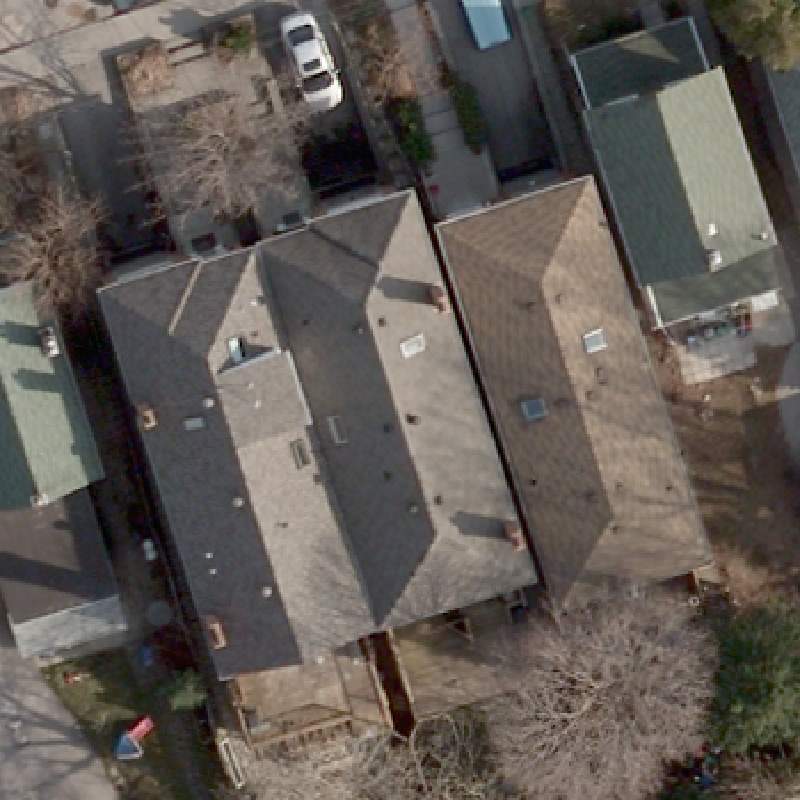}} \hfill
		\subfloat{\includegraphics[width=0.16\linewidth]{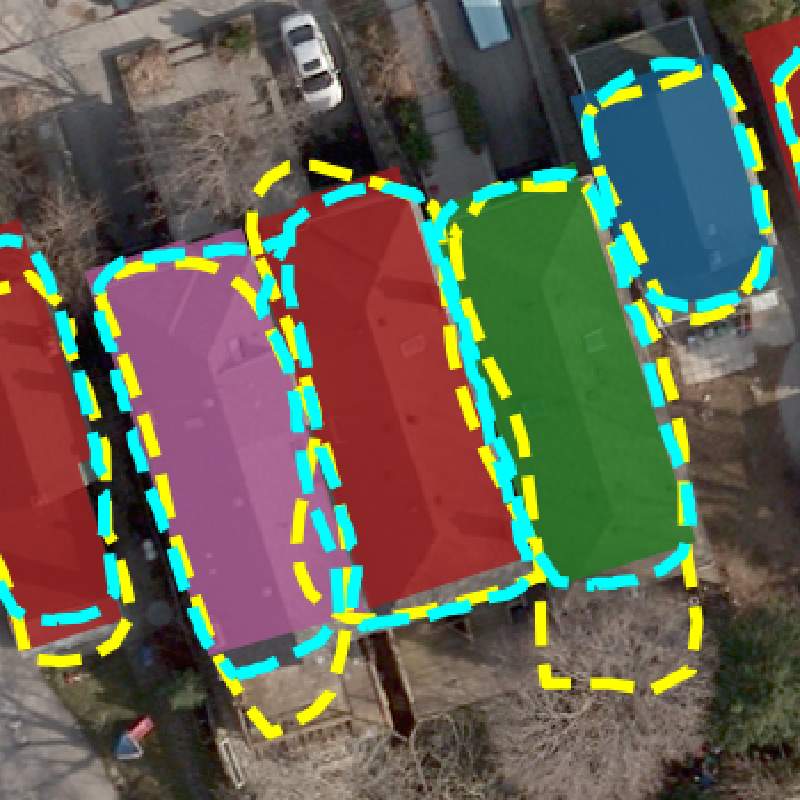} } \\ \vspace{-0.3cm}
		\subfloat{\includegraphics[width=0.16\linewidth]{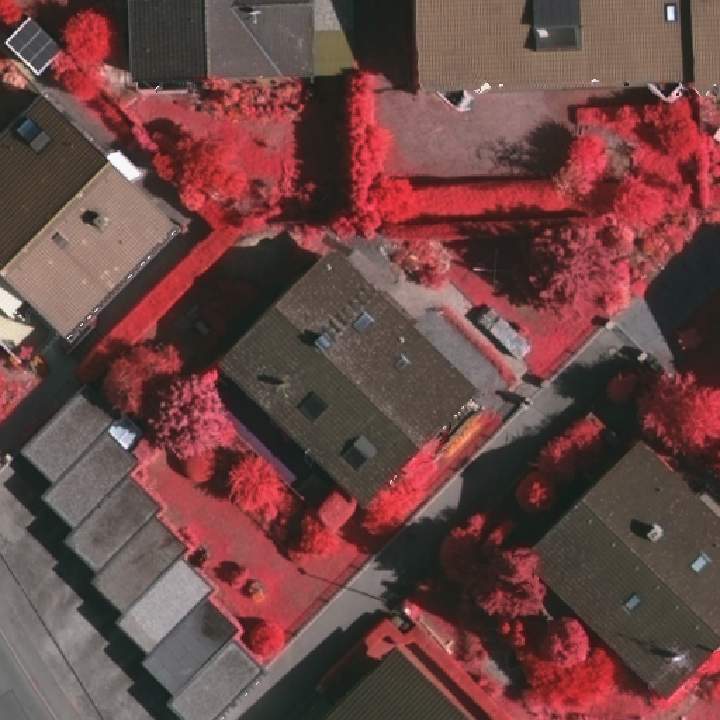}} \hfill
		\subfloat{\includegraphics[width=0.16\linewidth]{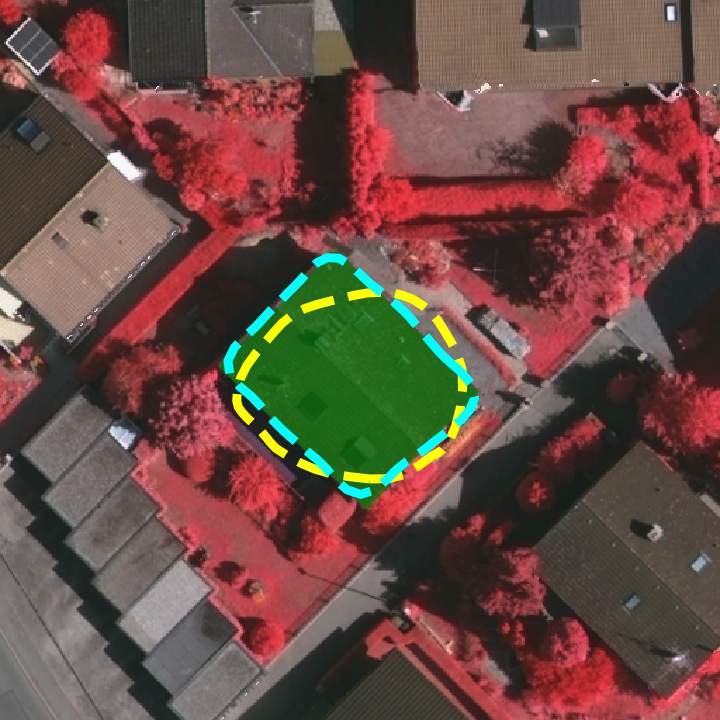}} \hfill
		\subfloat{\includegraphics[width=0.16\linewidth]{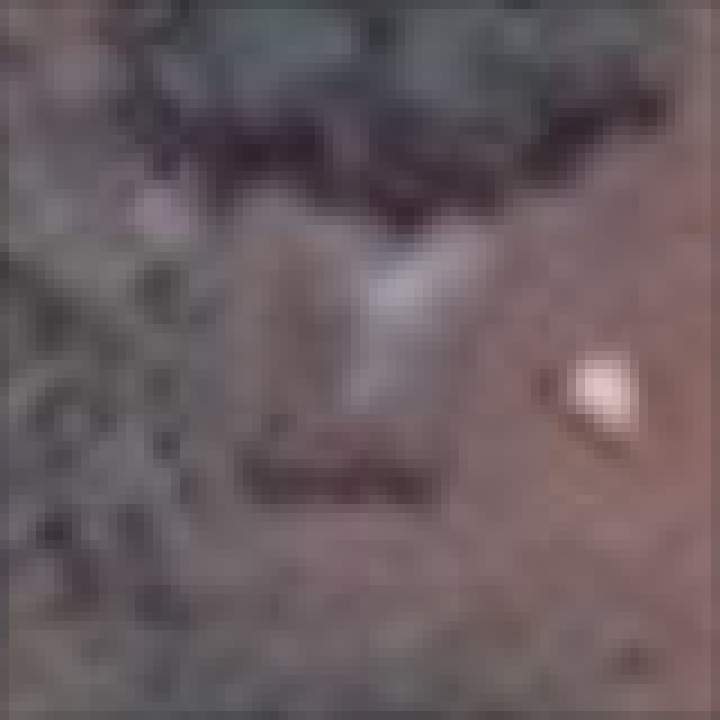}} \hfill
		\subfloat{\includegraphics[width=0.16\linewidth]{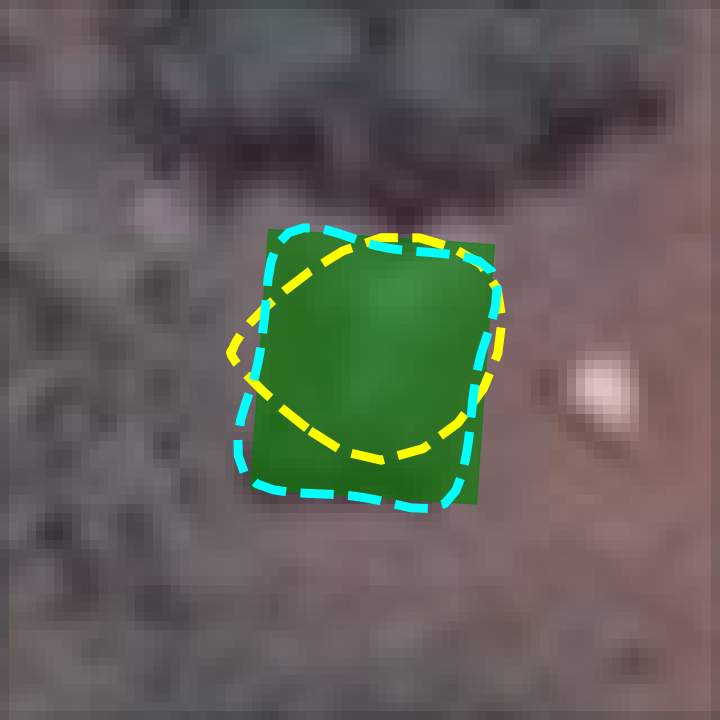} } \hfill
		\subfloat{\includegraphics[width=0.16\linewidth]{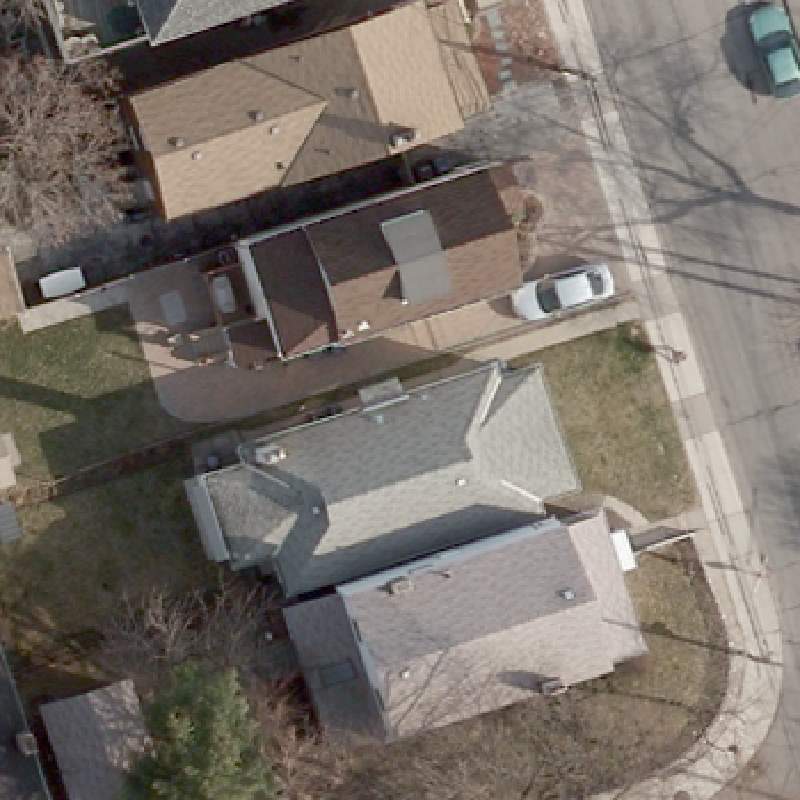}} \hfill
		\subfloat{\includegraphics[width=0.16\linewidth]{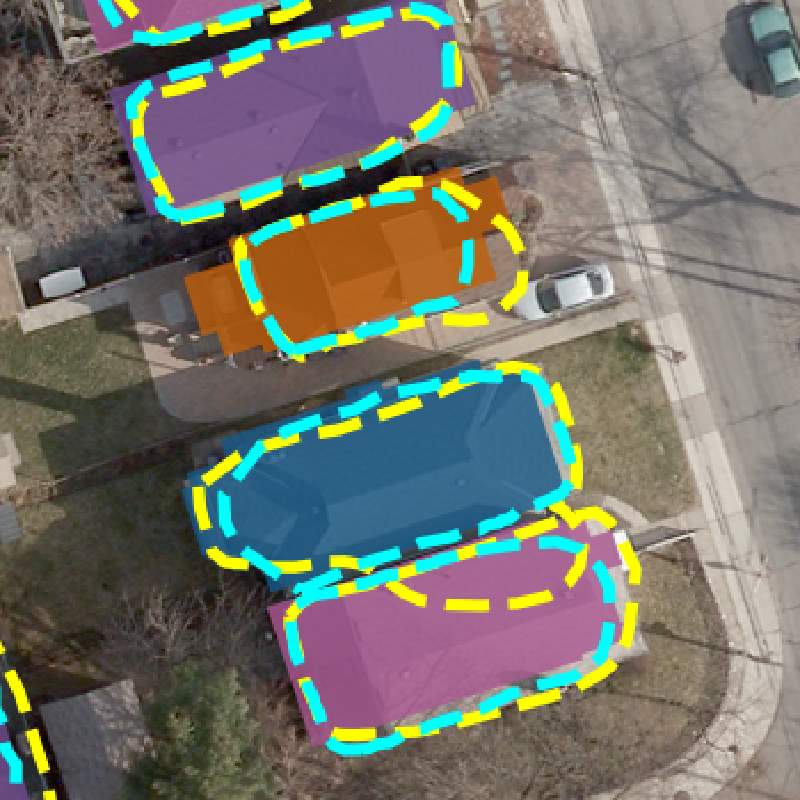} } \\ \vspace{-0.3cm}
		\subfloat{\includegraphics[width=0.16\linewidth]{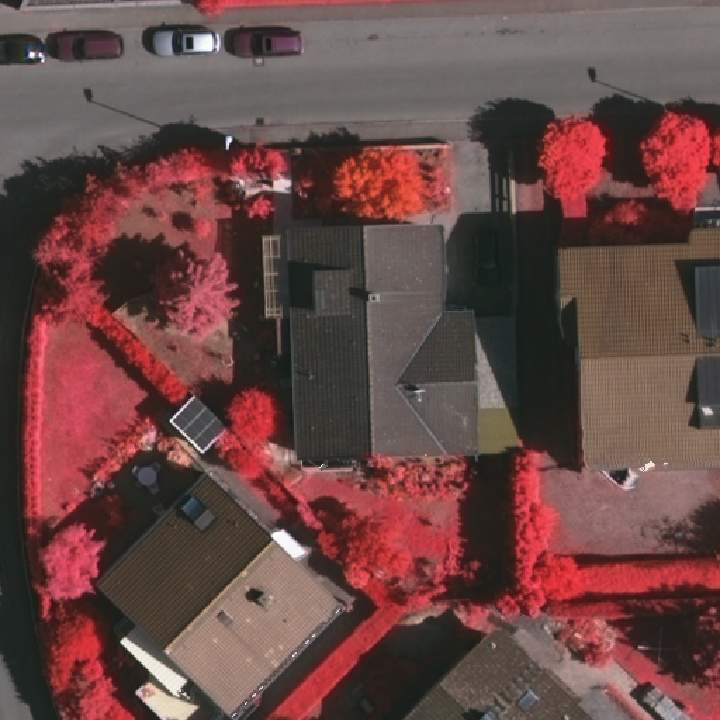}} \hfill
		\subfloat{\includegraphics[width=0.16\linewidth]{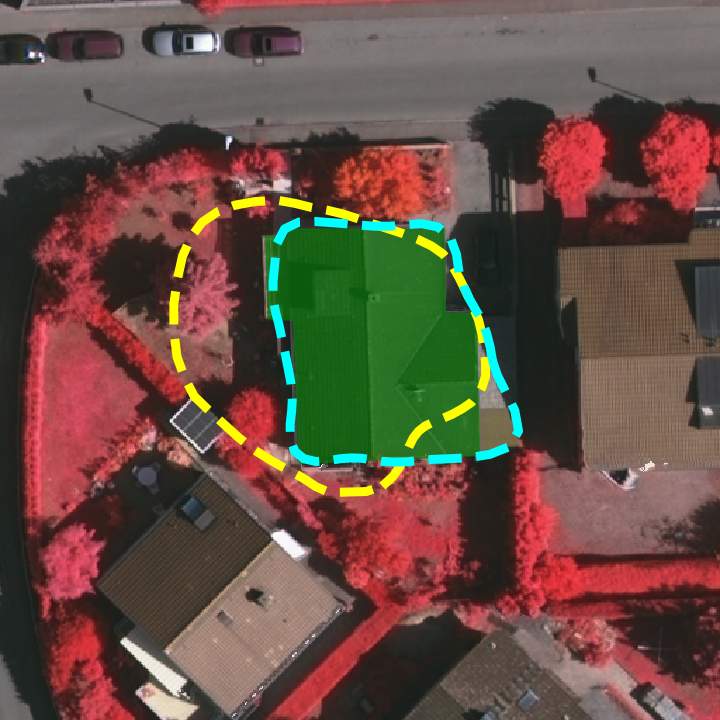}} \hfill
		\subfloat{\includegraphics[width=0.16\linewidth]{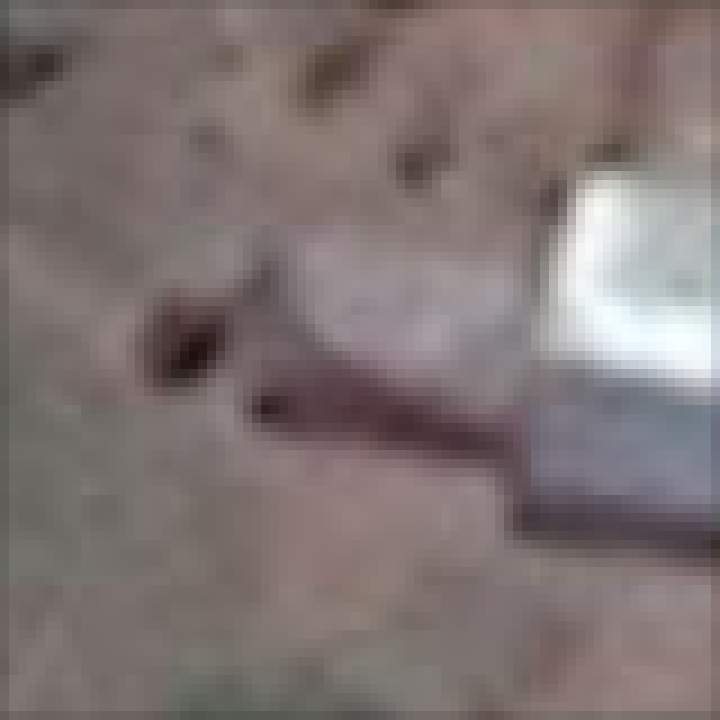}} \hfill
		\subfloat{\includegraphics[width=0.16\linewidth]{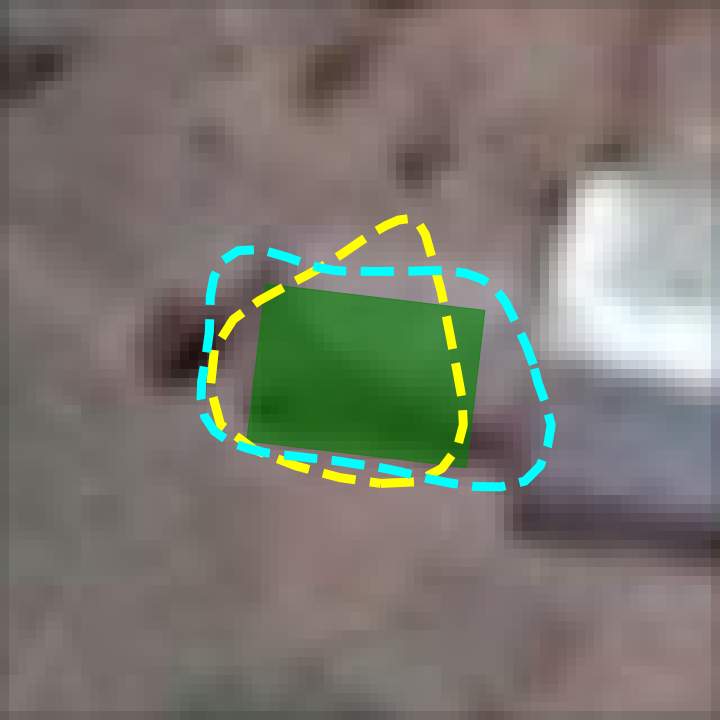} } \hfill
		\subfloat{\includegraphics[width=0.16\linewidth]{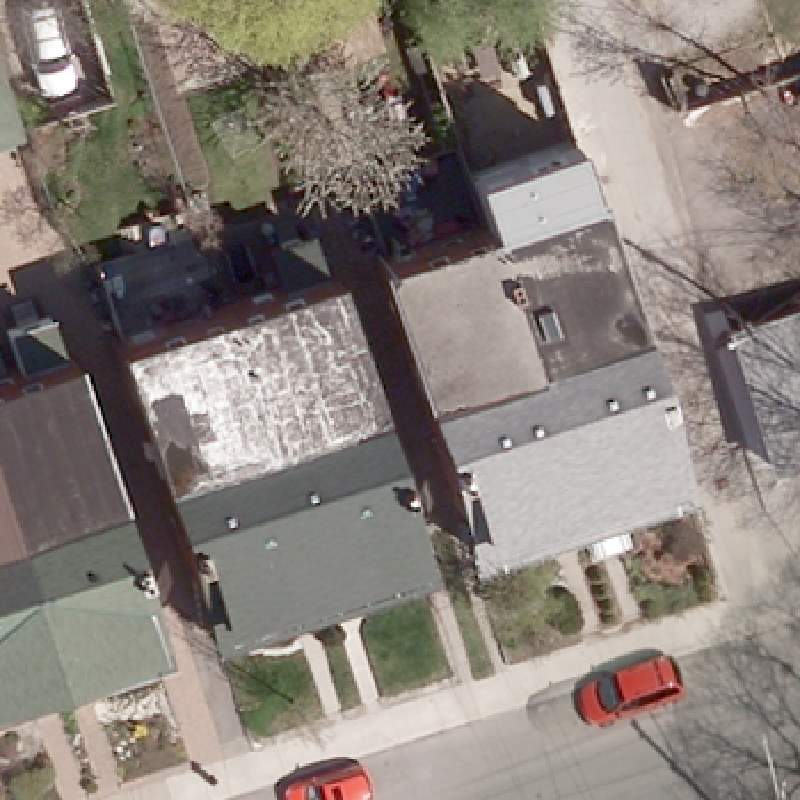}} \hfill
		\subfloat{\includegraphics[width=0.16\linewidth]{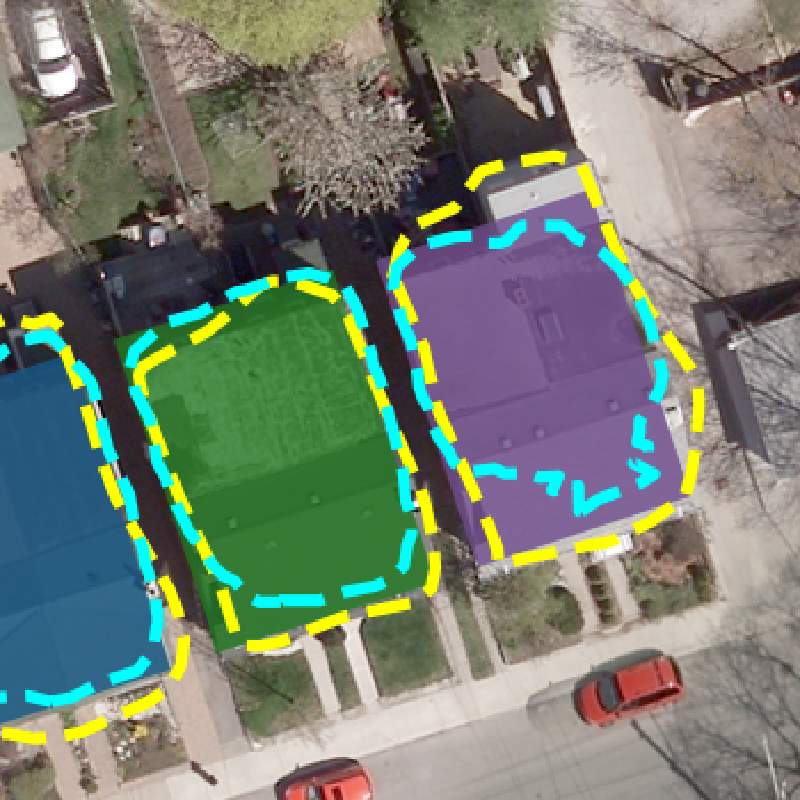} } \\ \vspace{-0.3cm}
		\subfloat{\includegraphics[width=0.16\linewidth]{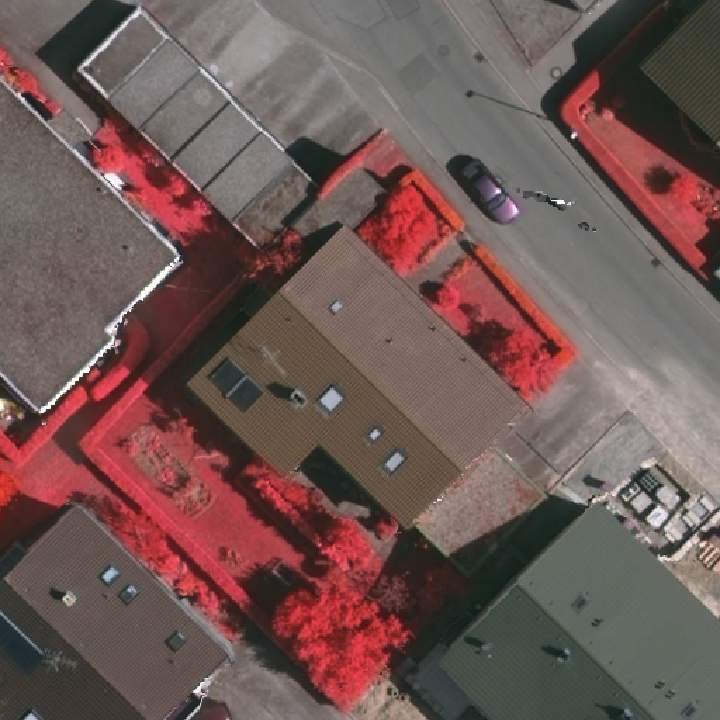}} \hfill
		\subfloat{\includegraphics[width=0.16\linewidth]{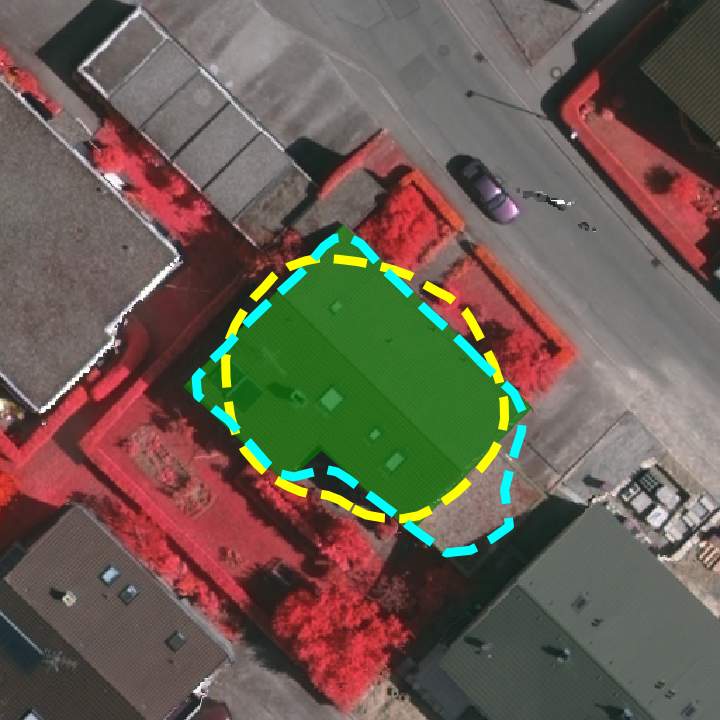}} \hfill
		\subfloat{\includegraphics[width=0.16\linewidth]{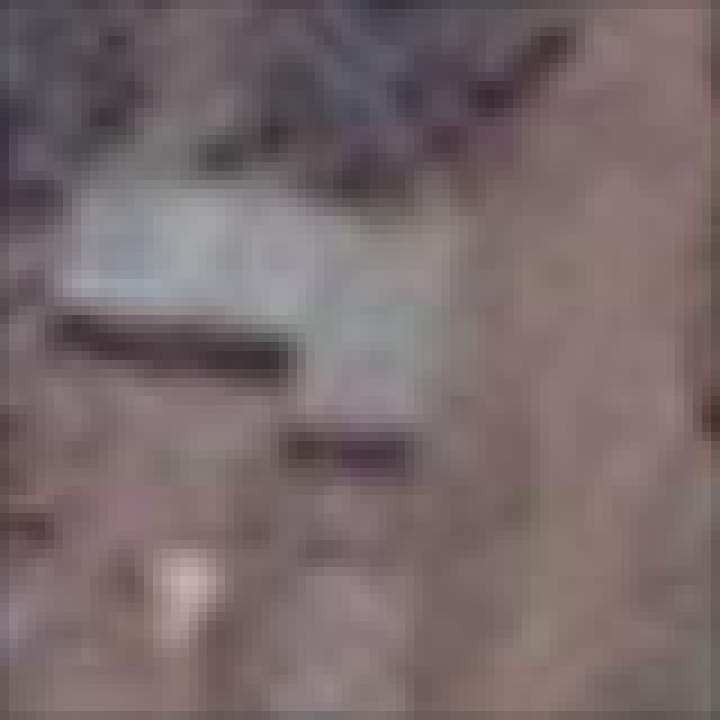}} \hfill
		\subfloat{\includegraphics[width=0.16\linewidth]{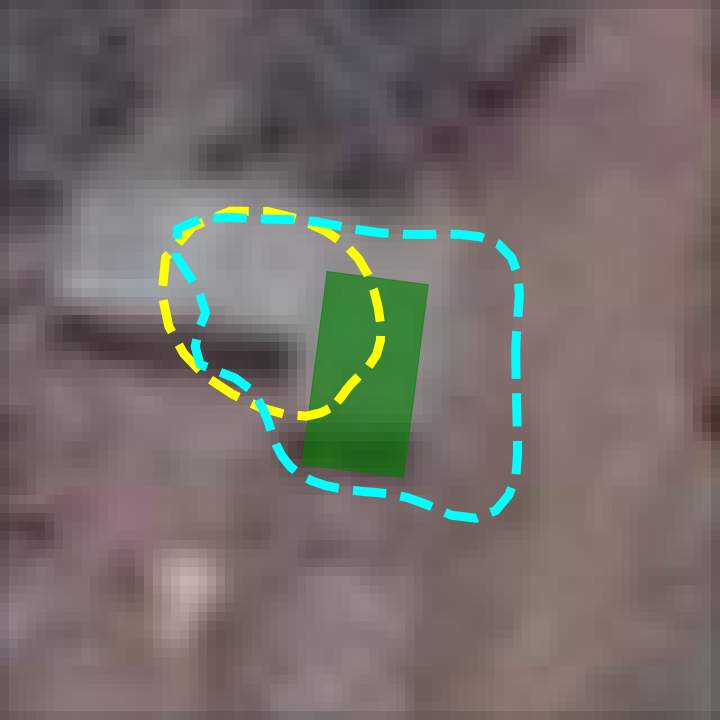} } \hfill
		\subfloat{\includegraphics[width=0.16\linewidth]{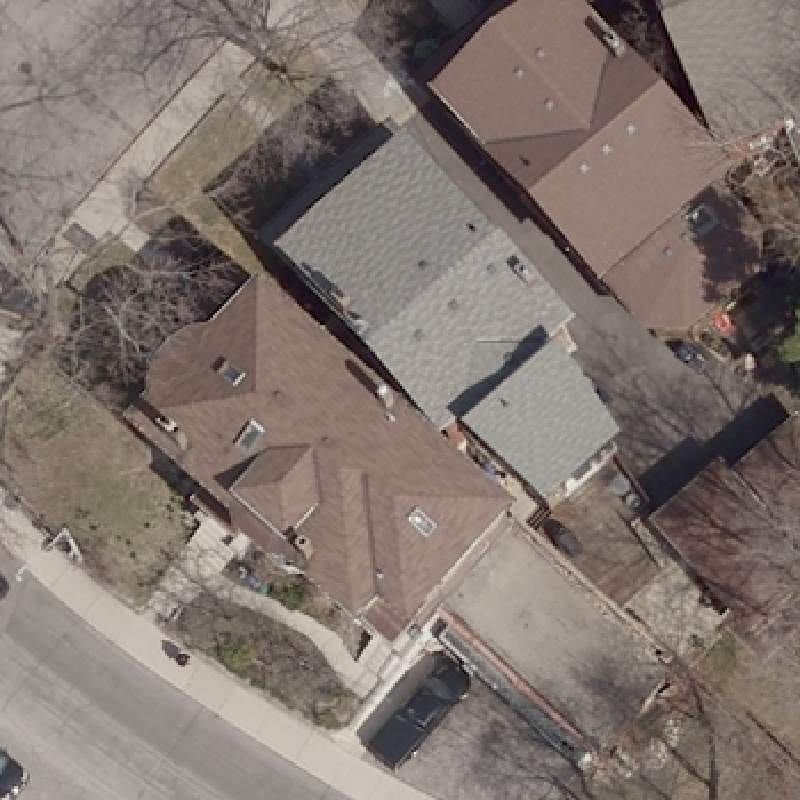}} \hfill
		\subfloat{\includegraphics[width=0.16\linewidth]{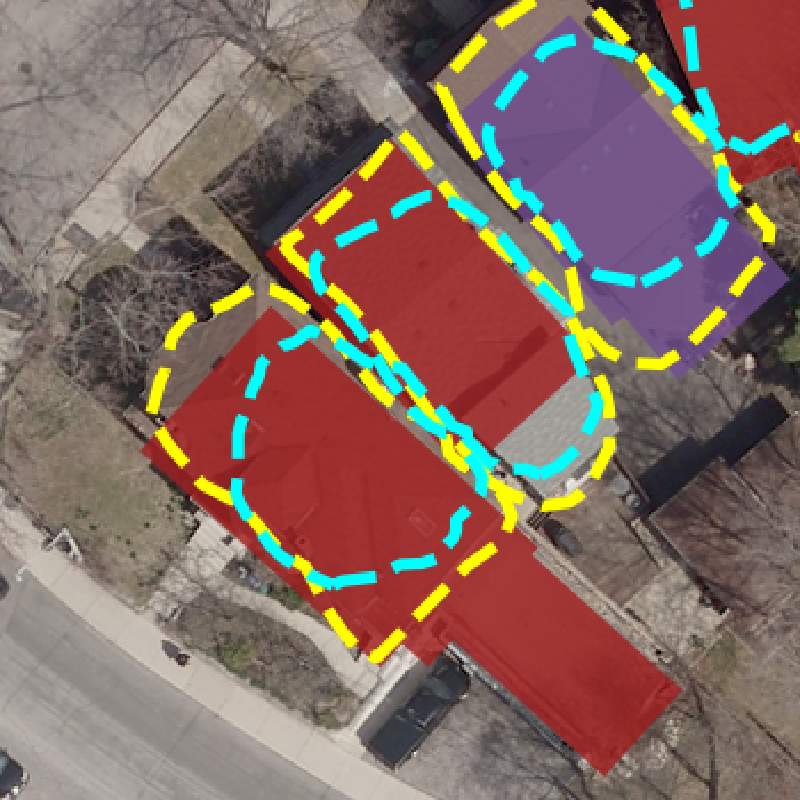} } \\ \vspace{-0.3cm}
		\subfloat[(a)]{\includegraphics[width=0.16\linewidth]{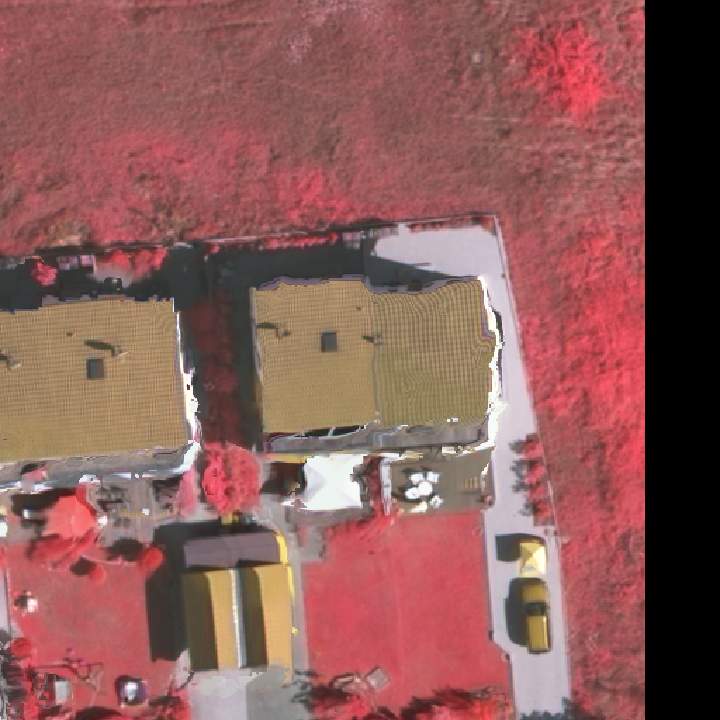}} \hfill
		\subfloat[(b)]{\includegraphics[width=0.16\linewidth]{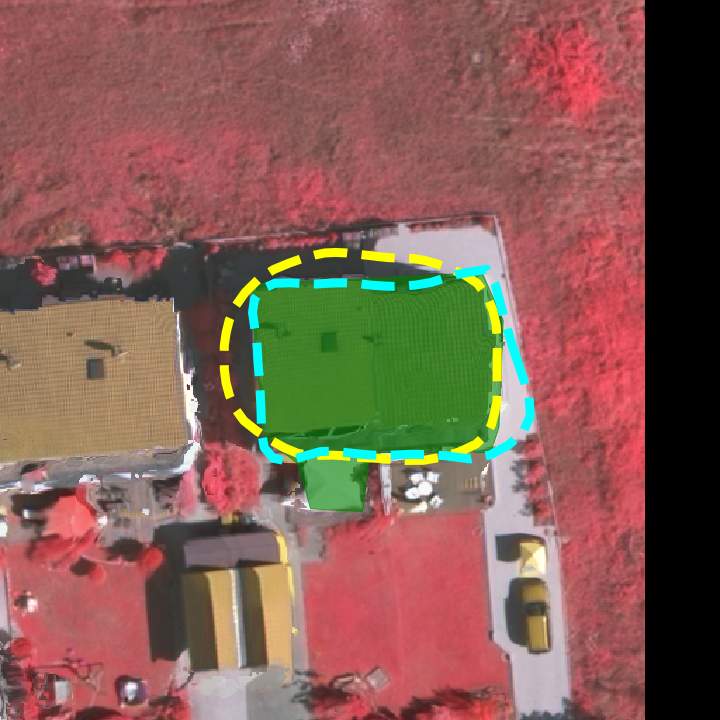}} \hfill
		\subfloat[(c)]{\includegraphics[width=0.16\linewidth]{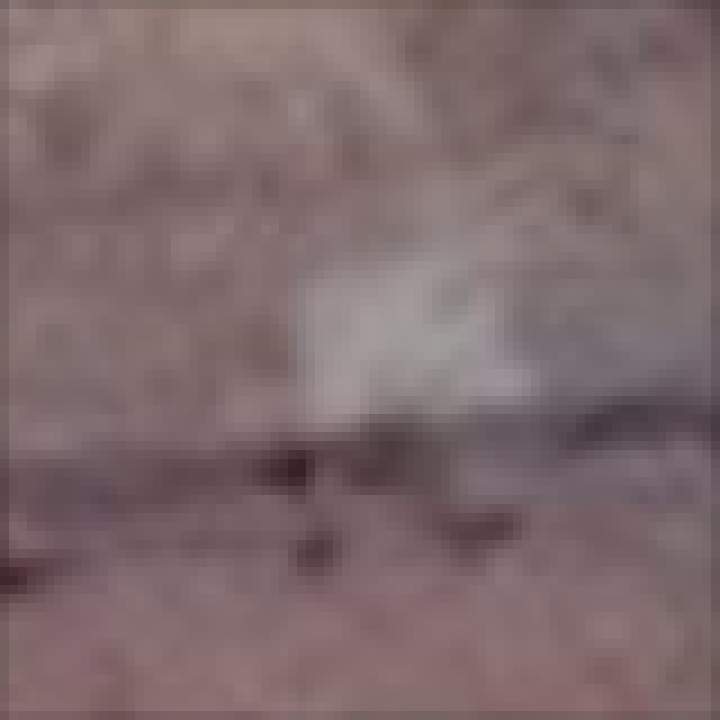}} \hfill
		\subfloat[(d)]{\includegraphics[width=0.16\linewidth]{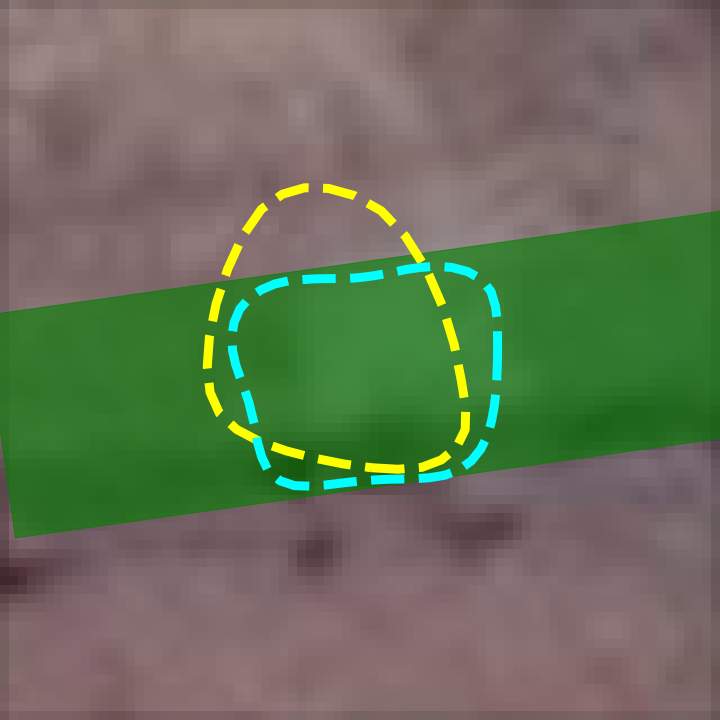} } \hfill
		\subfloat[(e)]{\includegraphics[width=0.16\linewidth]{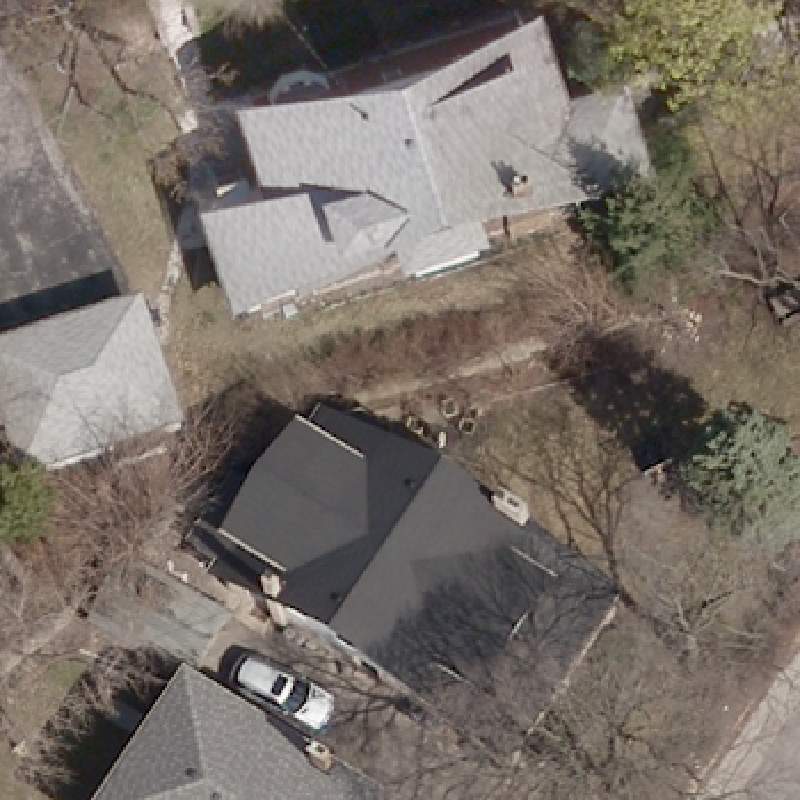}} \hfill
		\subfloat[(f)]{\includegraphics[width=0.16\linewidth]{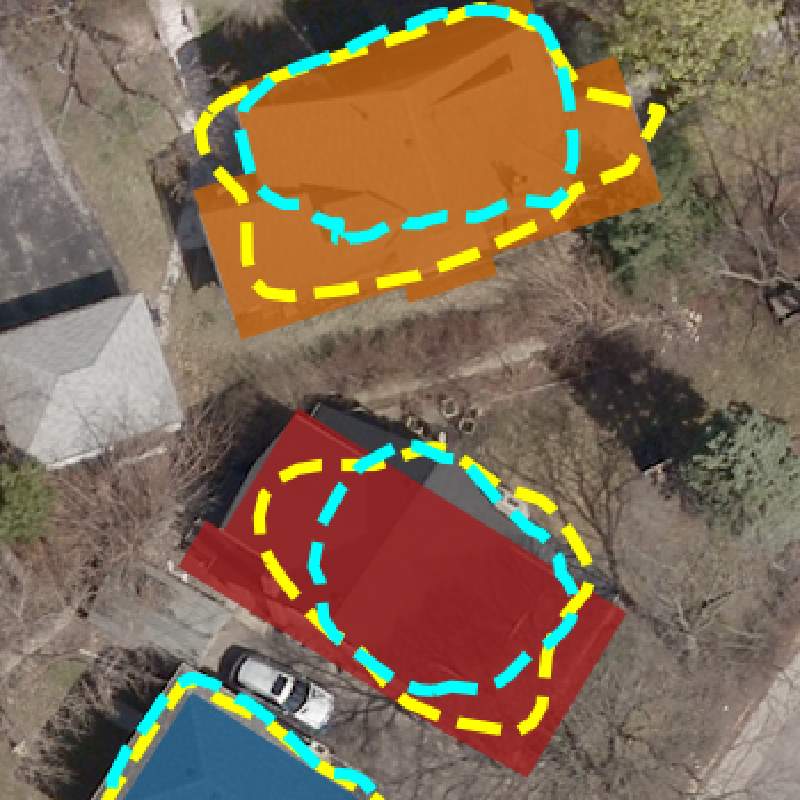}
		}
		\caption{Results on (a-b) Vaihingen, (c-d) Bing Huts, (e-f) TorontoCity. Bottom three rows highlight failure cases. Original image shown in left. On right, our output is shown in cyan; DSAC output in yellow; ground truth is shaded.}
		\label{fig:qual_results_2}
	\end{figure*}
}

{
	\captionsetup[subfigure]{labelformat=empty}
	\begin{figure*}
		\centering
		\subfloat{\includegraphics[width=0.16\linewidth]{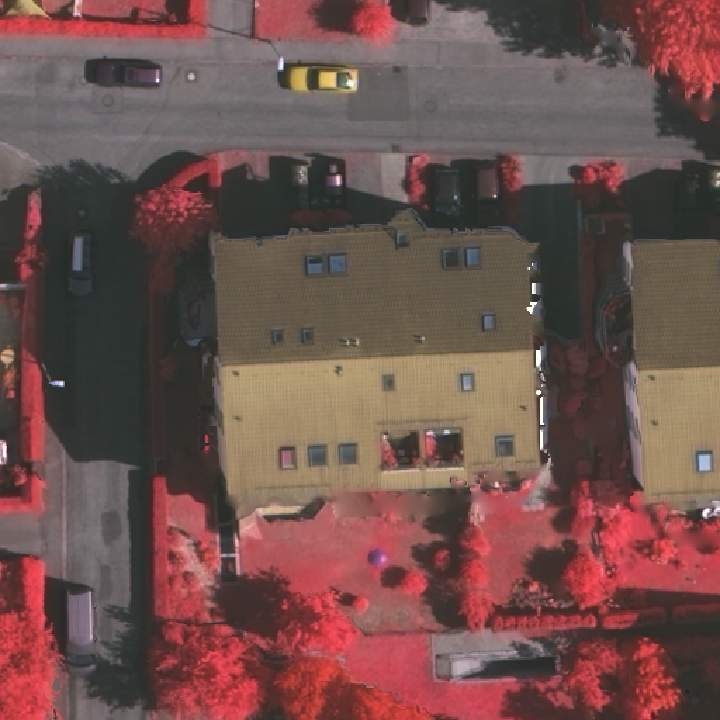}} \hfill
		\subfloat{\includegraphics[width=0.16\linewidth]{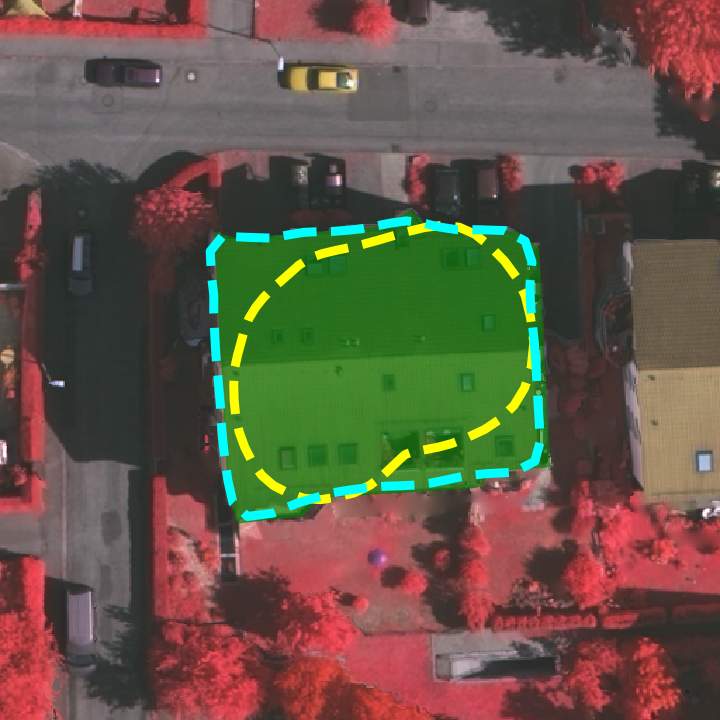}} \hfill
		\subfloat{\includegraphics[width=0.16\linewidth]{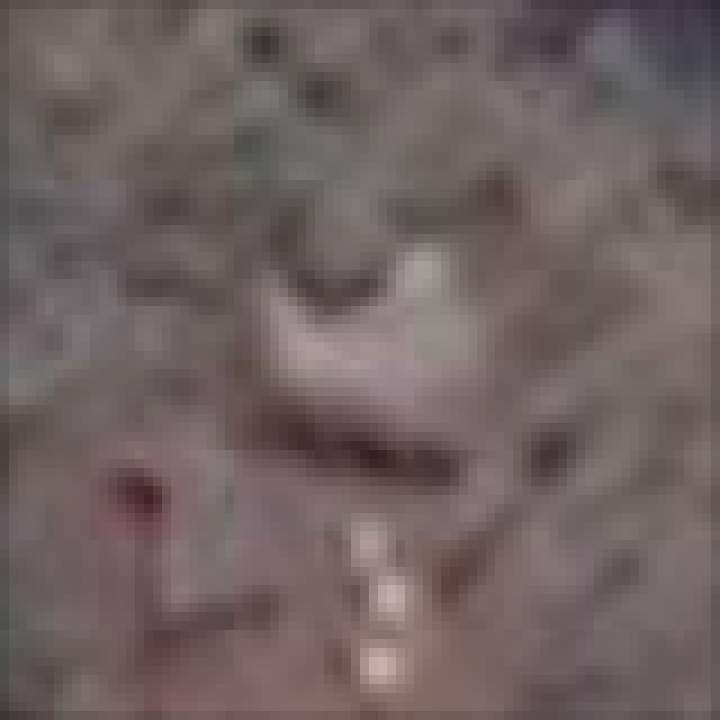}} \hfill
		\subfloat{\includegraphics[width=0.16\linewidth]{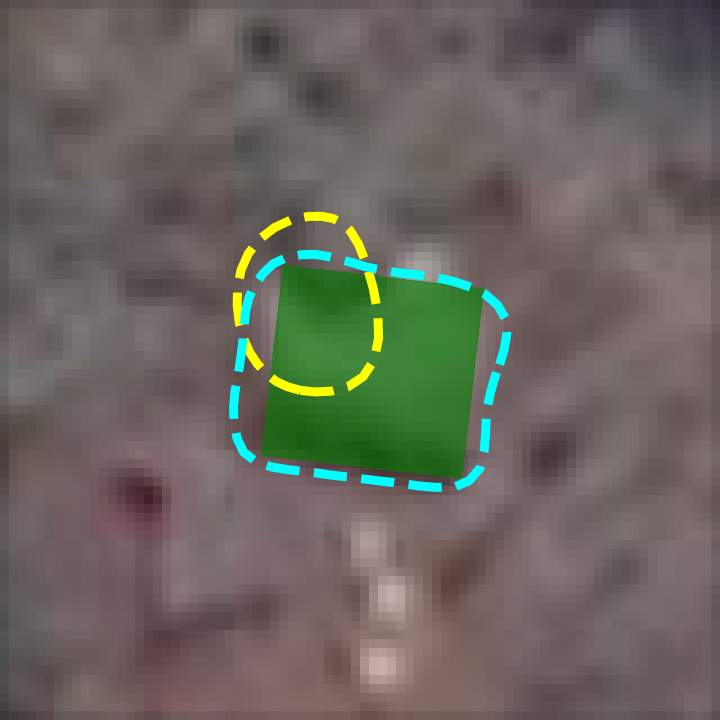} } \hfill
		\subfloat{\includegraphics[width=0.16\linewidth]{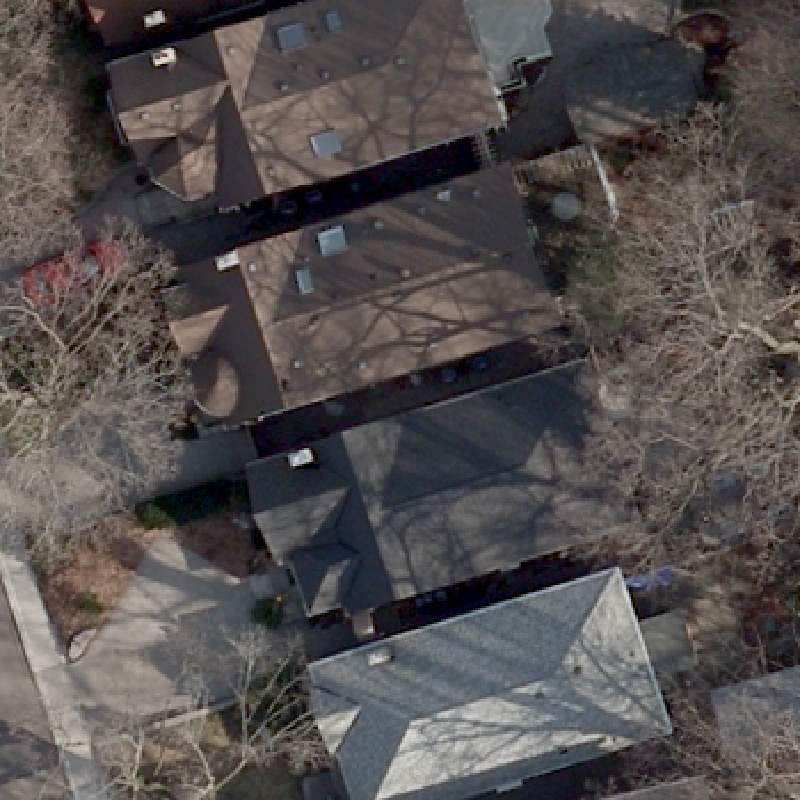}} \hfill
		\subfloat{\includegraphics[width=0.16\linewidth]{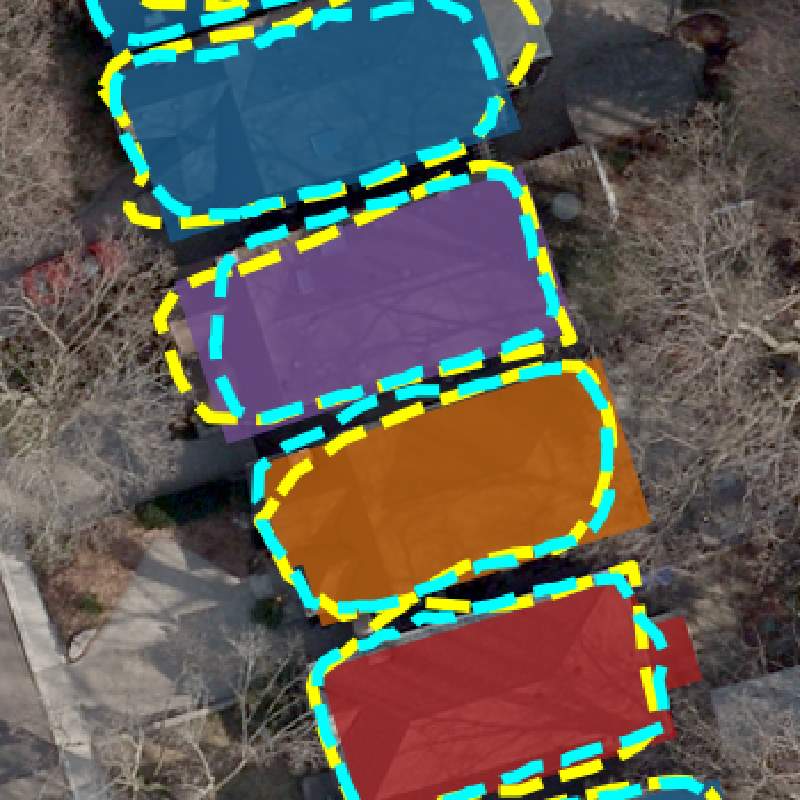} } \\ \vspace{-0.3cm}
		\subfloat{\includegraphics[width=0.16\linewidth]{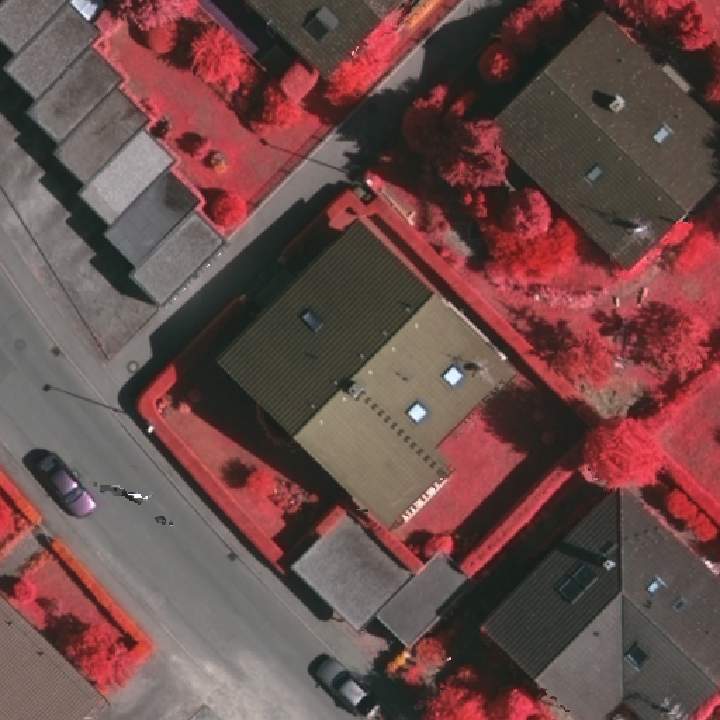}} \hfill
		\subfloat{\includegraphics[width=0.16\linewidth]{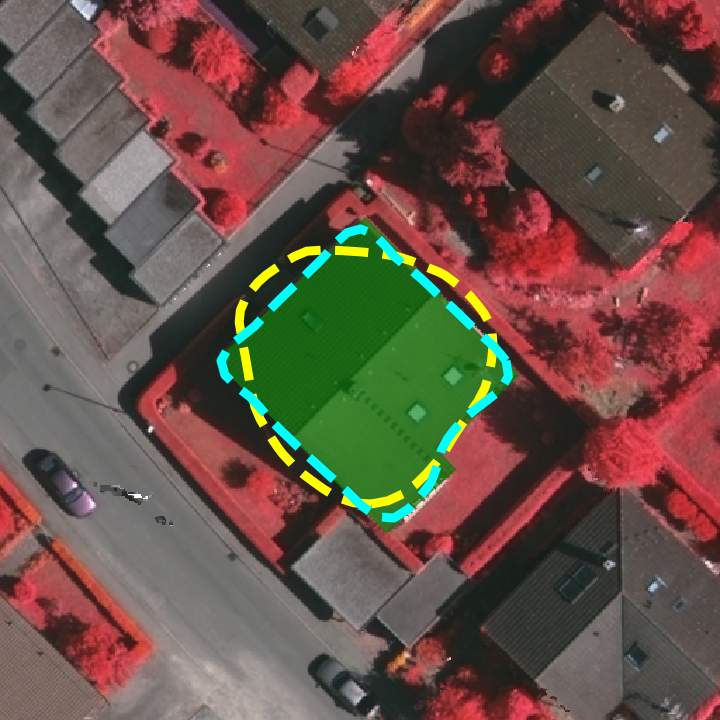}} \hfill
		\subfloat{\includegraphics[width=0.16\linewidth]{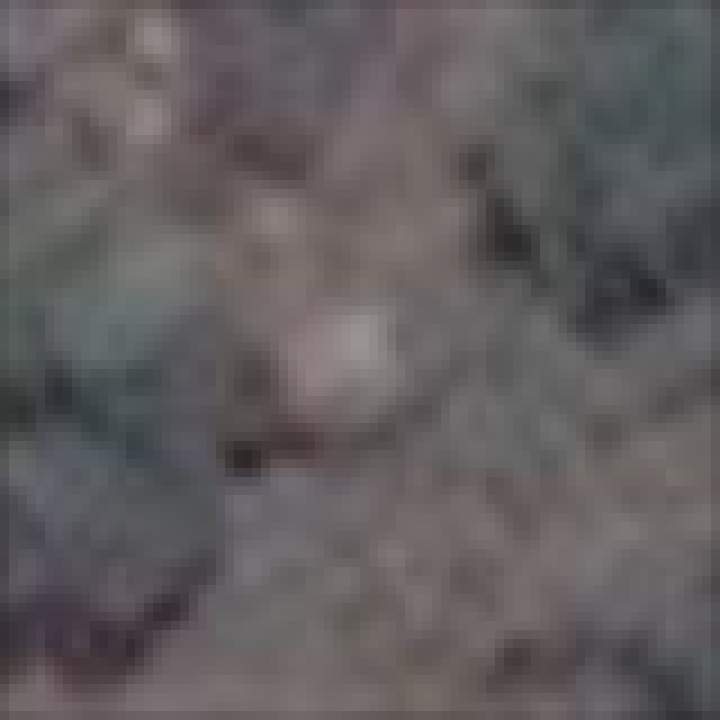}} \hfill
		\subfloat{\includegraphics[width=0.16\linewidth]{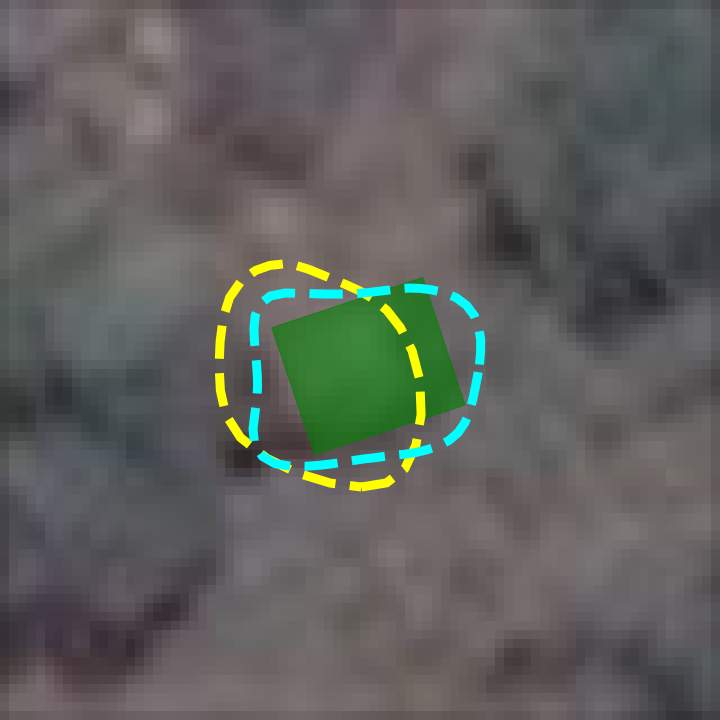} } \hfill
		\subfloat{\includegraphics[width=0.16\linewidth]{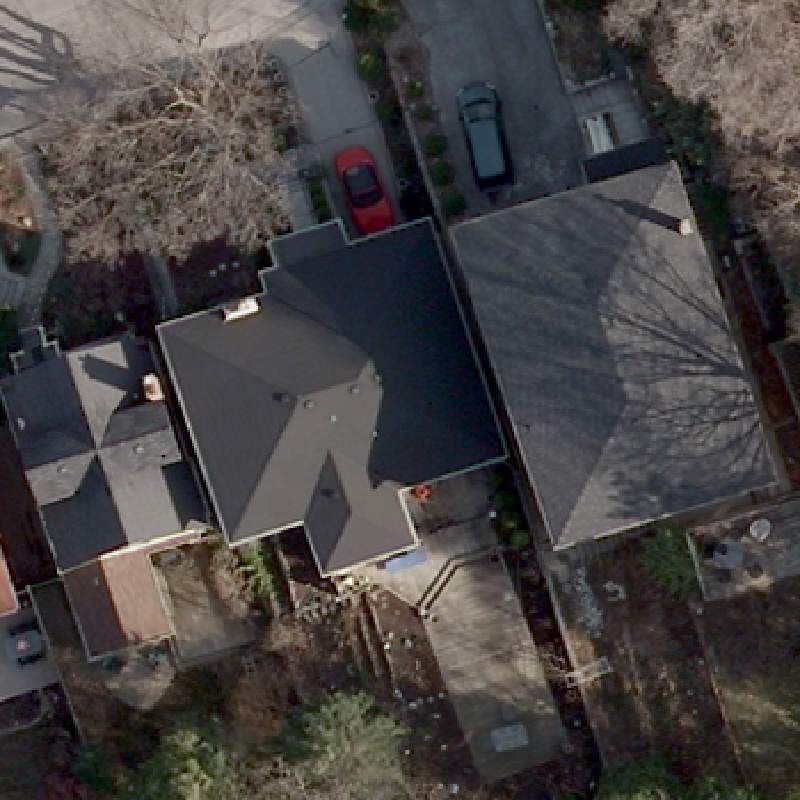}} \hfill
		\subfloat{\includegraphics[width=0.16\linewidth]{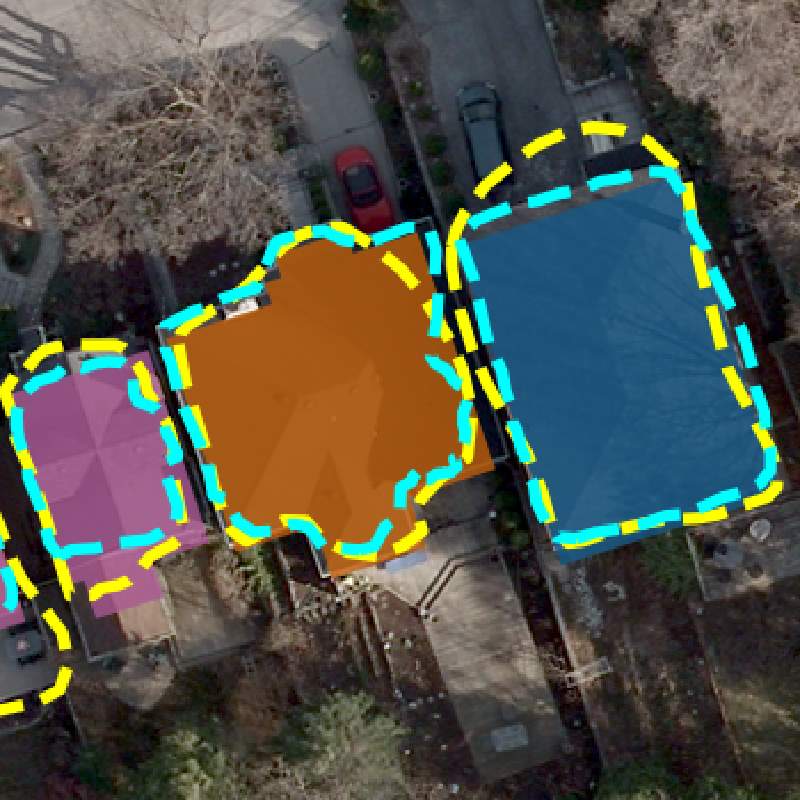} } \\ \vspace{-0.3cm}
		\subfloat{\includegraphics[width=0.16\linewidth]{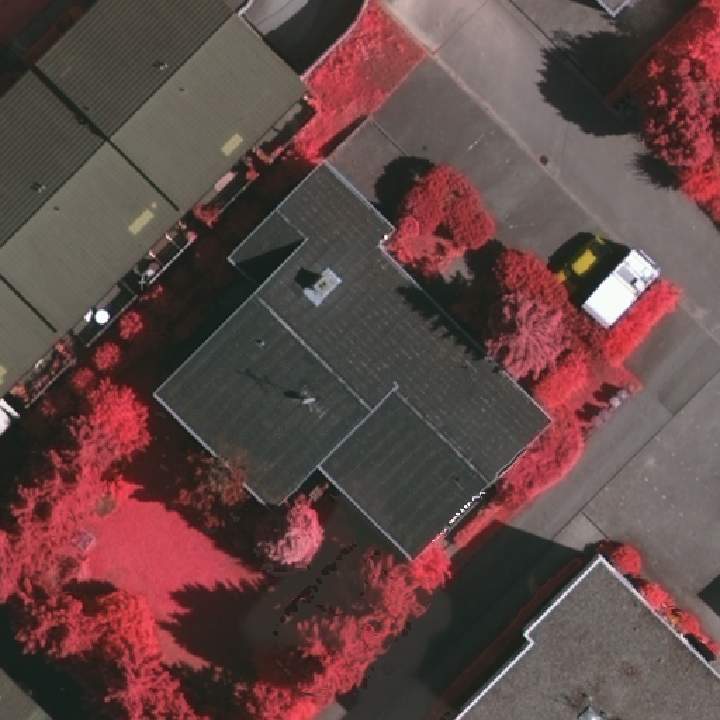}} \hfill
		\subfloat{\includegraphics[width=0.16\linewidth]{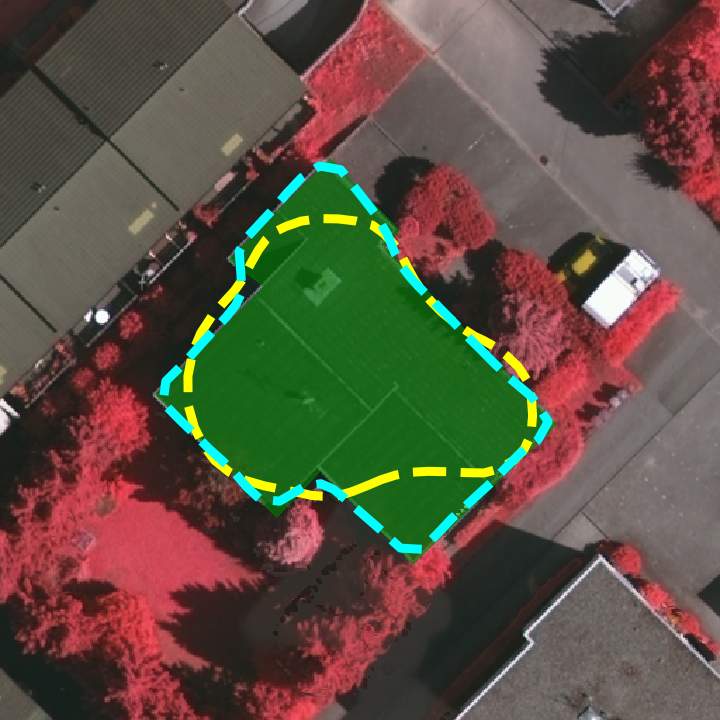}} \hfill
		\subfloat{\includegraphics[width=0.16\linewidth]{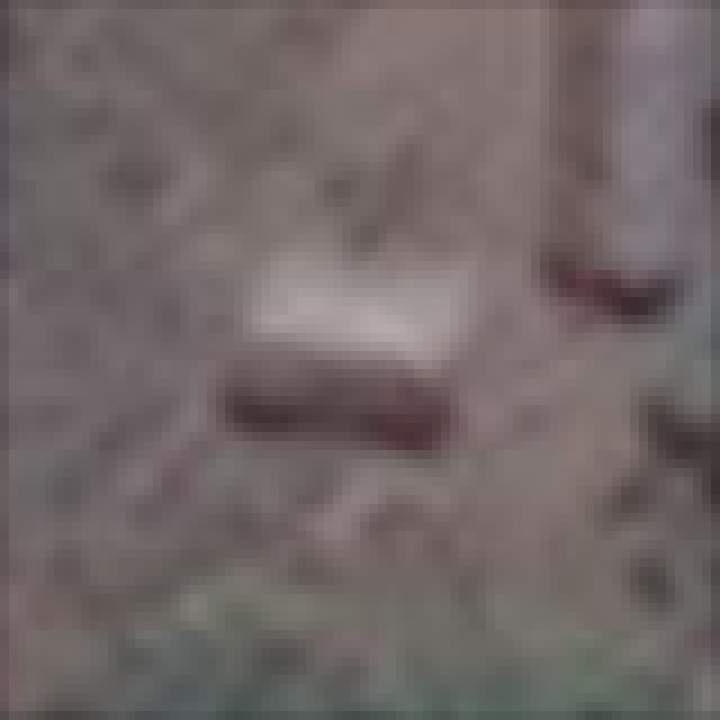}} \hfill
		\subfloat{\includegraphics[width=0.16\linewidth]{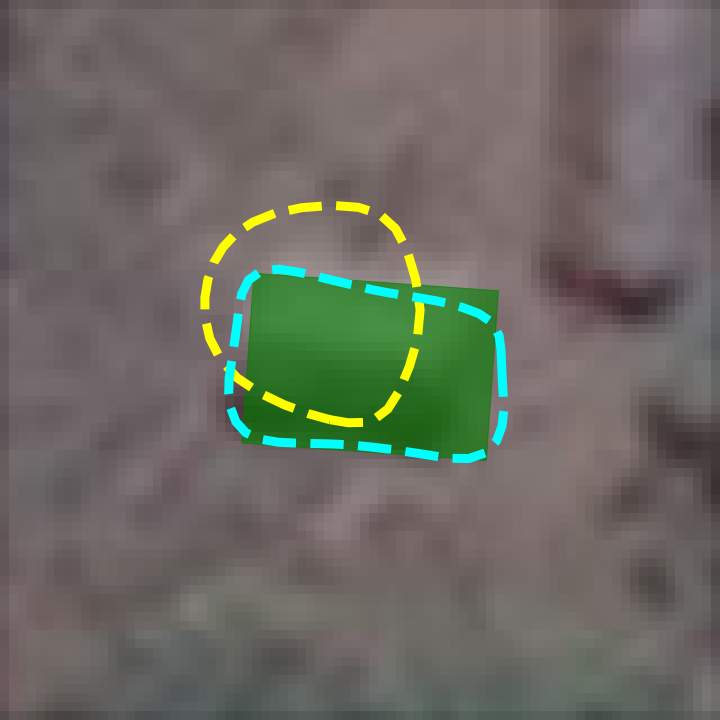} } \hfill
		\subfloat{\includegraphics[width=0.16\linewidth]{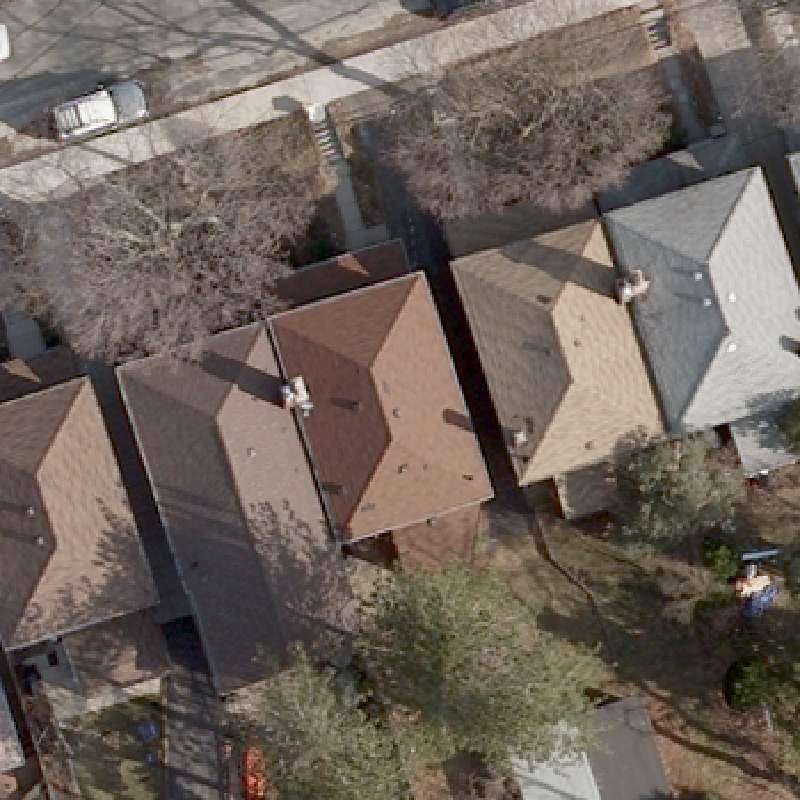}} \hfill
		\subfloat{\includegraphics[width=0.16\linewidth]{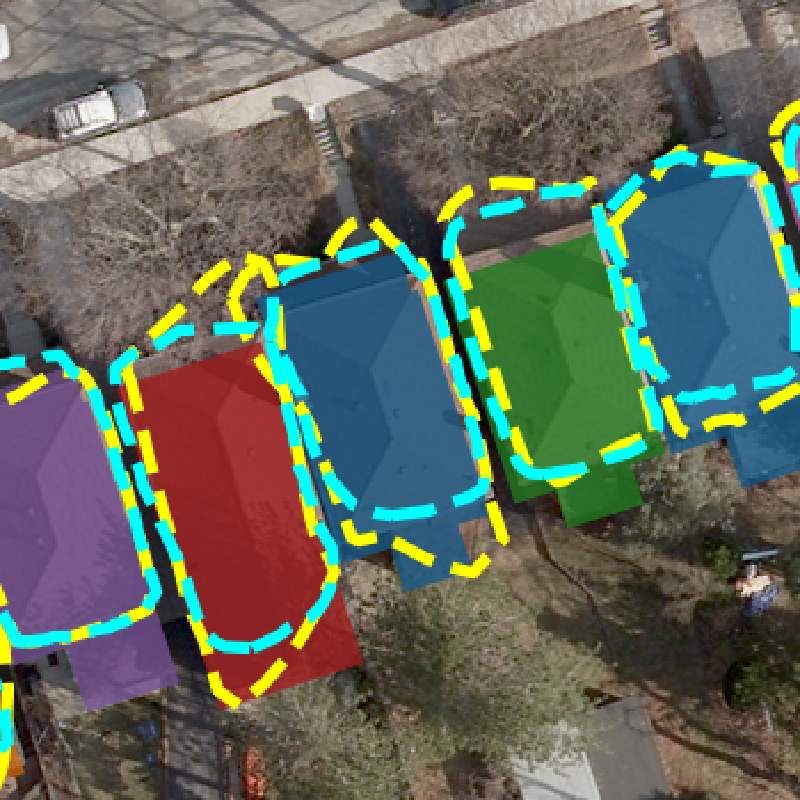} } \\ \vspace{-0.3cm}
		\subfloat{\includegraphics[width=0.16\linewidth]{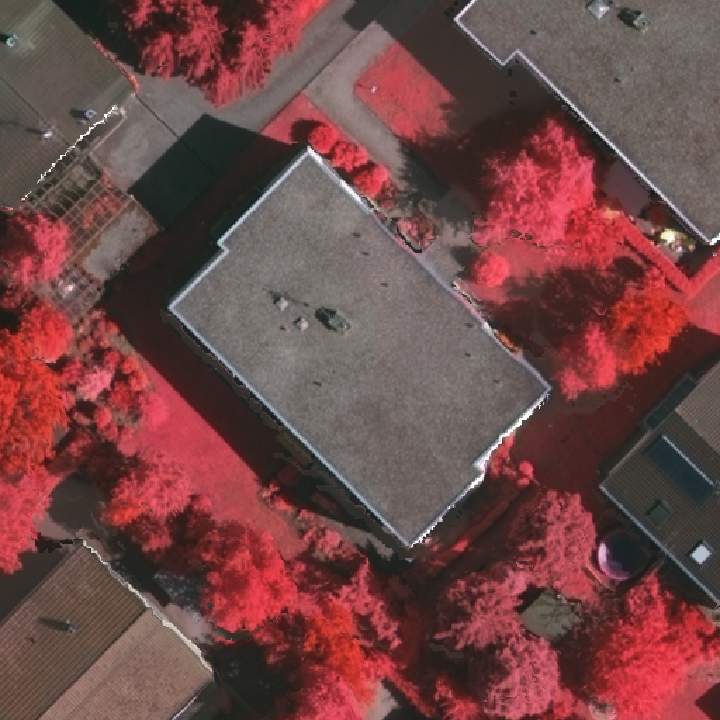}} \hfill
		\subfloat{\includegraphics[width=0.16\linewidth]{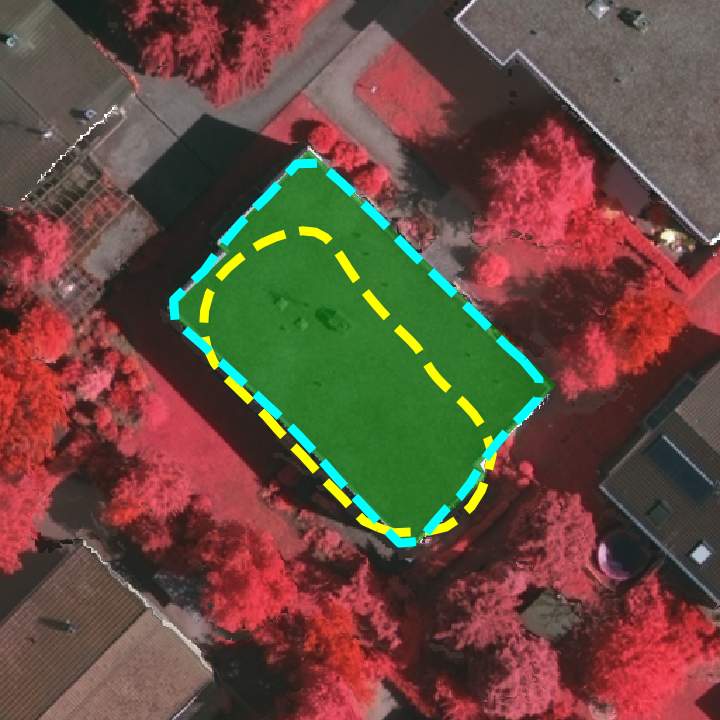}} \hfill
		\subfloat{\includegraphics[width=0.16\linewidth]{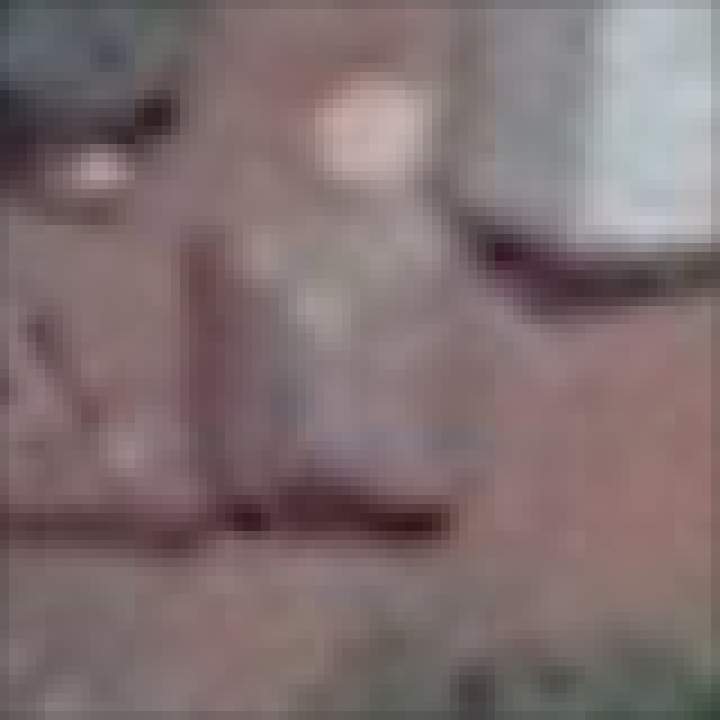}} \hfill
		\subfloat{\includegraphics[width=0.16\linewidth]{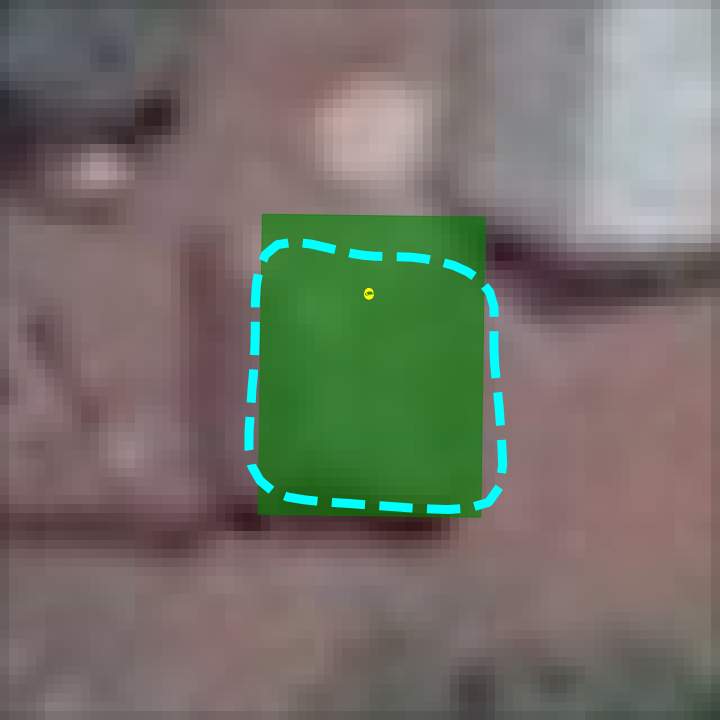} } \hfill
		\subfloat{\includegraphics[width=0.16\linewidth]{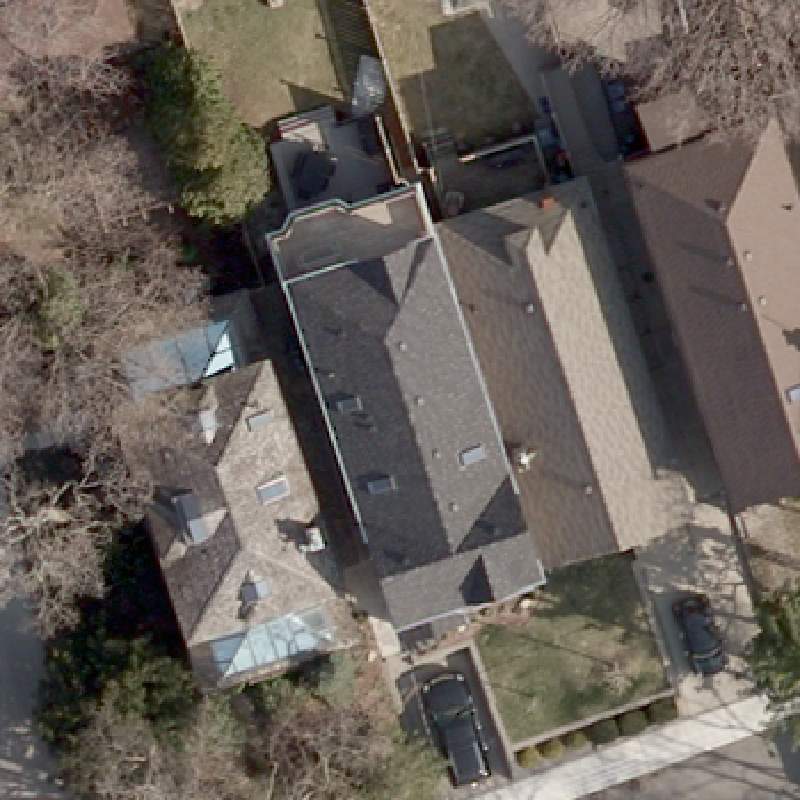}} \hfill
		\subfloat{\includegraphics[width=0.16\linewidth]{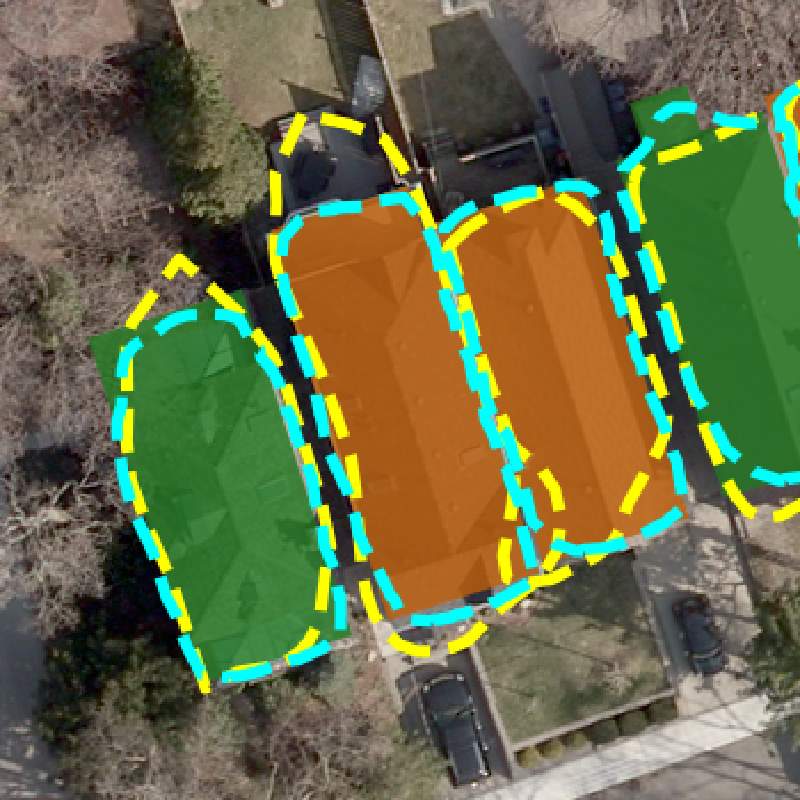} } \\ \vspace{-0.3cm}
		\subfloat{\includegraphics[width=0.16\linewidth]{figs/supp_figs/vaihingen_fail/vaihingen_104_im.jpg}} \hfill
		\subfloat{\includegraphics[width=0.16\linewidth]{figs/supp_figs/vaihingen_fail/vaihingen_104_compare.jpg}} \hfill
		\subfloat{\includegraphics[width=0.16\linewidth]{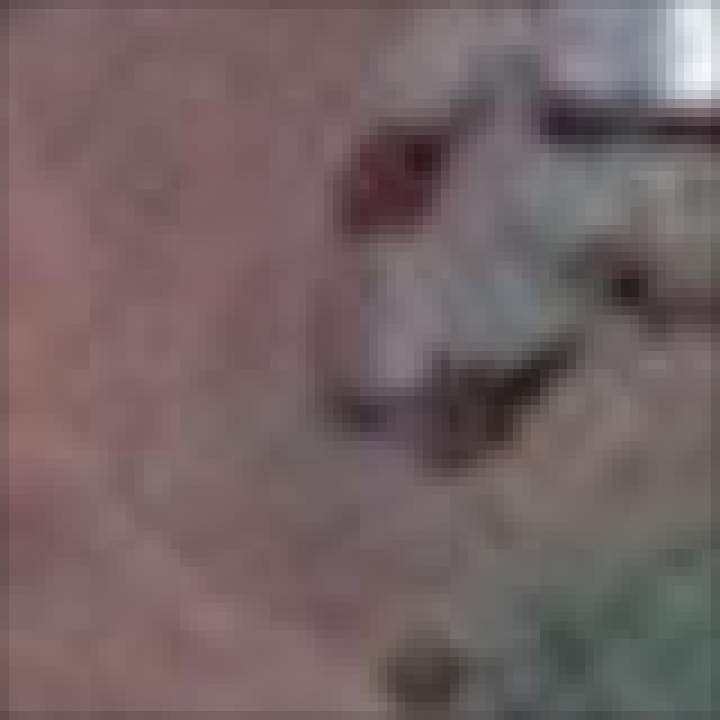}} \hfill
		\subfloat{\includegraphics[width=0.16\linewidth]{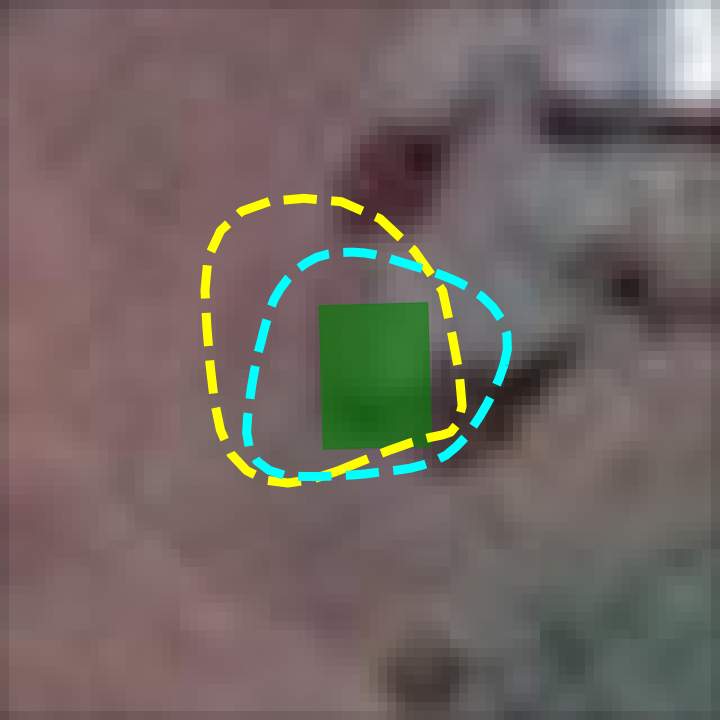} } \hfill
		\subfloat{\includegraphics[width=0.16\linewidth]{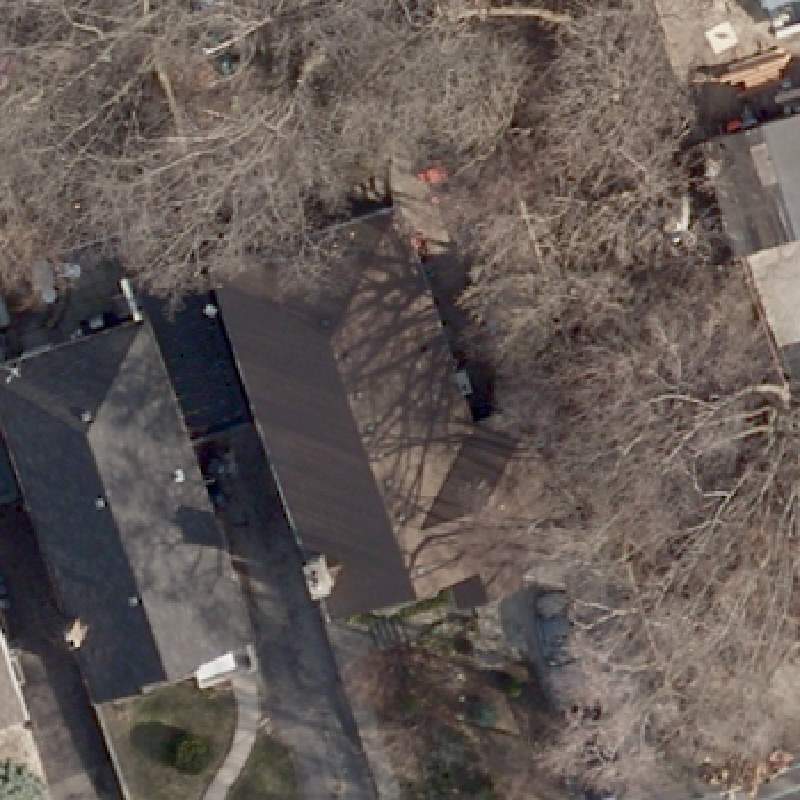}} \hfill
		\subfloat{\includegraphics[width=0.16\linewidth]{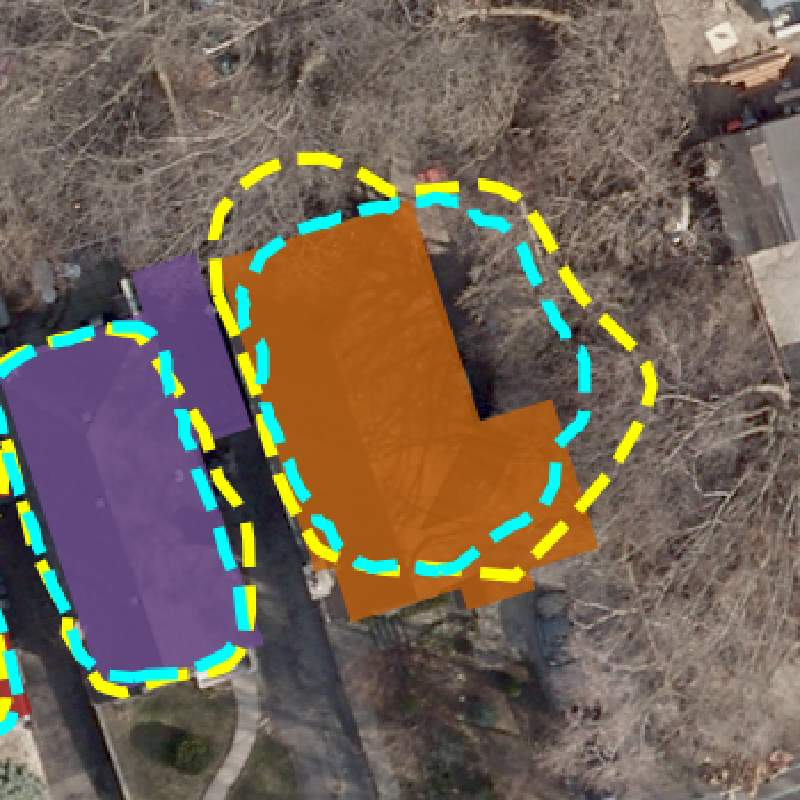} } \\ \vspace{-0.3cm}
		\subfloat{\includegraphics[width=0.16\linewidth]{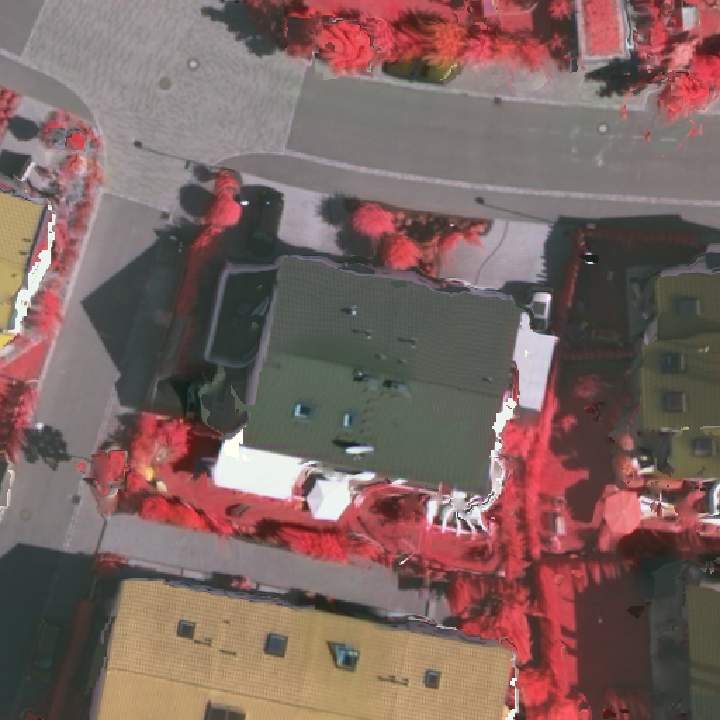}} \hfill
		\subfloat{\includegraphics[width=0.16\linewidth]{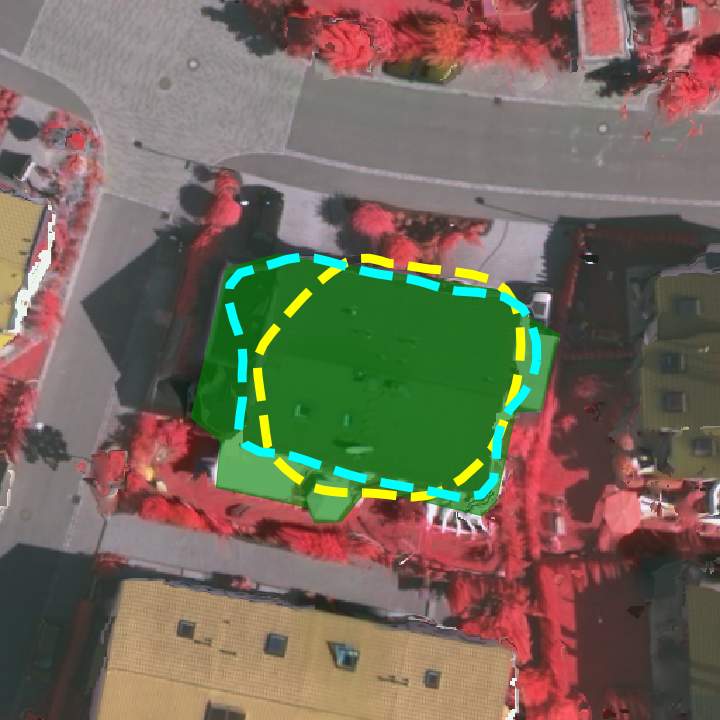}} \hfill
		\subfloat{\includegraphics[width=0.16\linewidth]{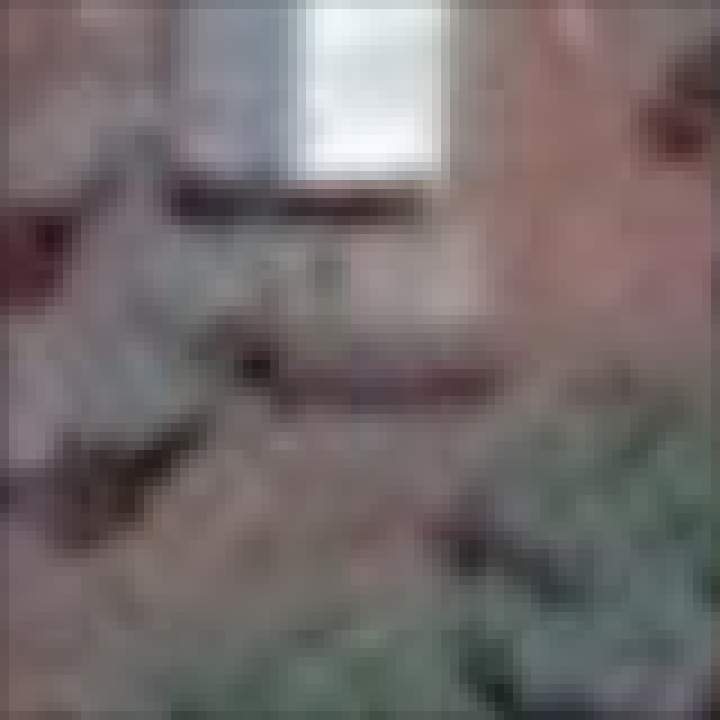}} \hfill
		\subfloat{\includegraphics[width=0.16\linewidth]{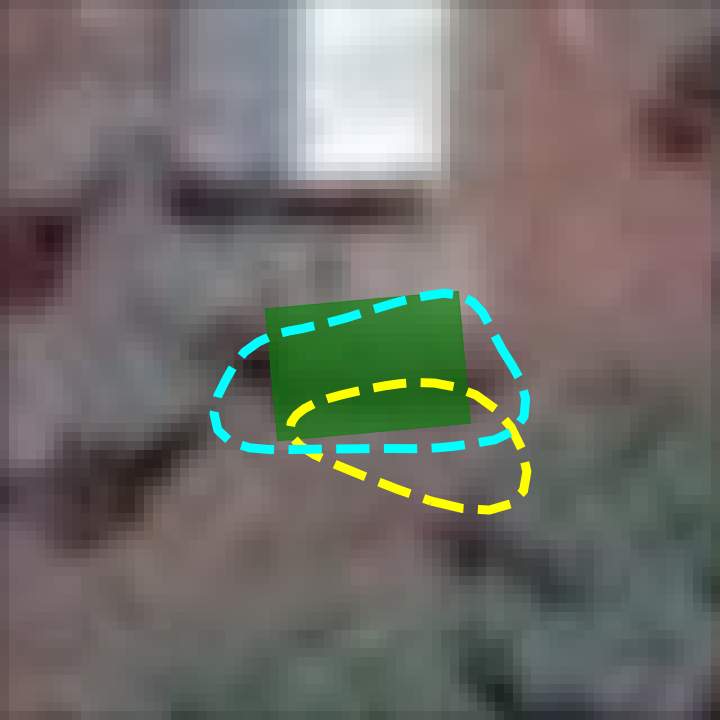} } \hfill
		\subfloat{\includegraphics[width=0.16\linewidth]{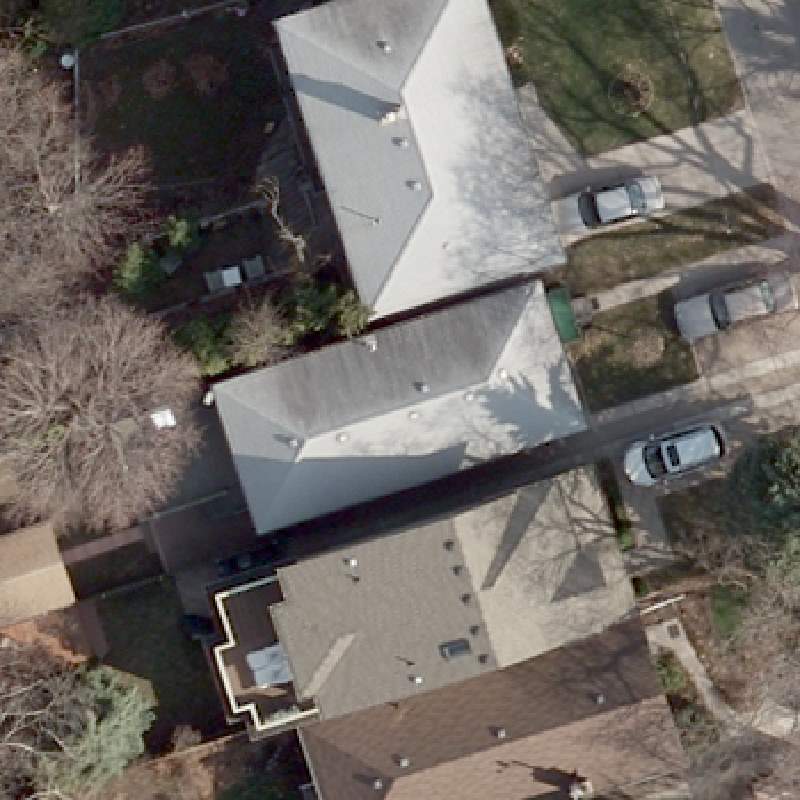}} \hfill
		\subfloat{\includegraphics[width=0.16\linewidth]{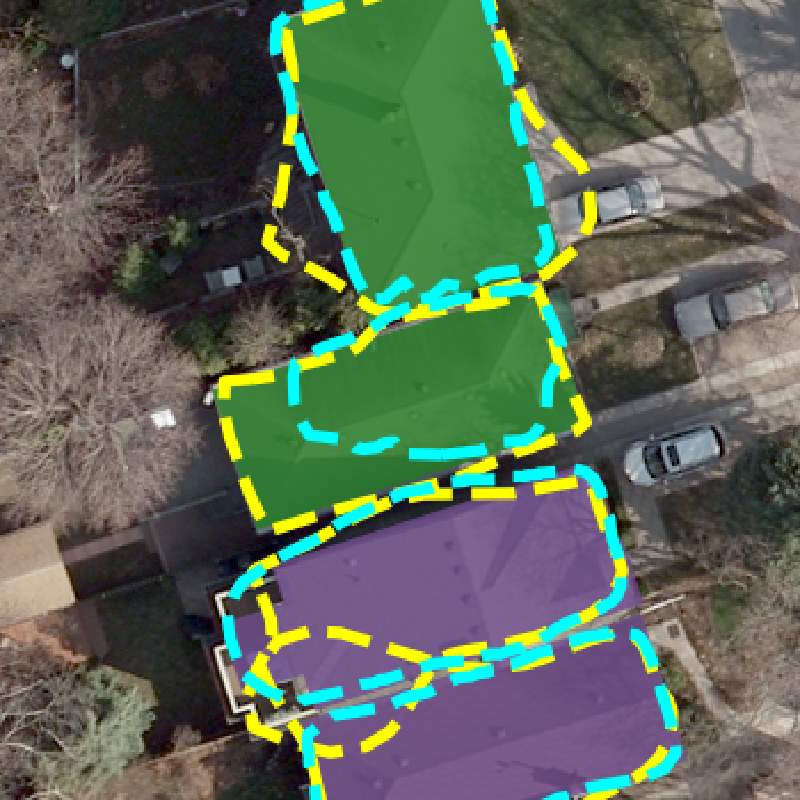} } \\ \vspace{-0.3cm}
		\subfloat[(a)]{\includegraphics[width=0.16\linewidth]{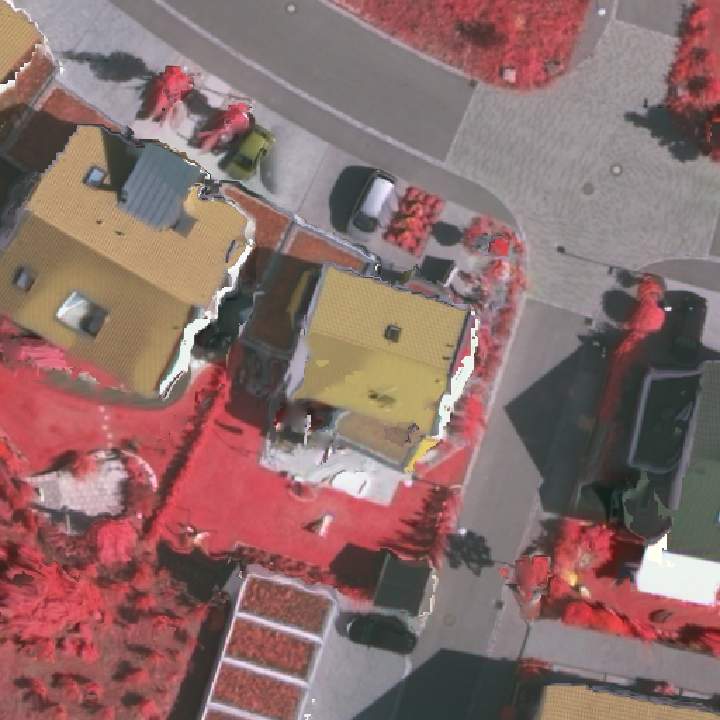}} \hfill
		\subfloat[(b)]{\includegraphics[width=0.16\linewidth]{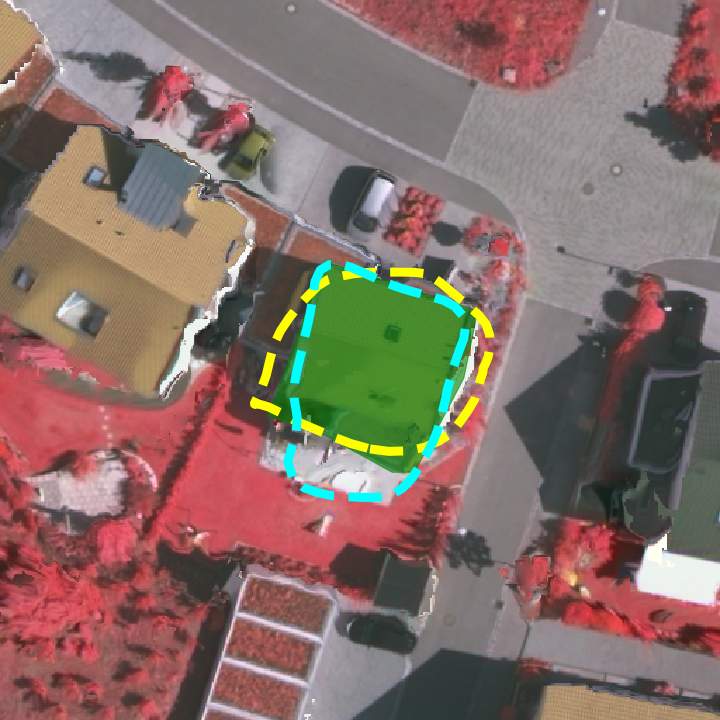}} \hfill
		\subfloat[(c)]{\includegraphics[width=0.16\linewidth]{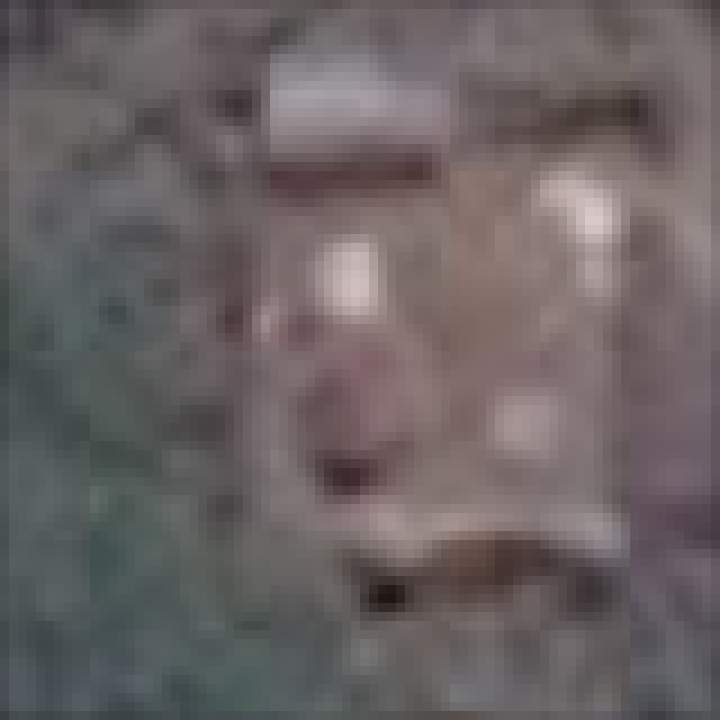}} \hfill
		\subfloat[(d)]{\includegraphics[width=0.16\linewidth]{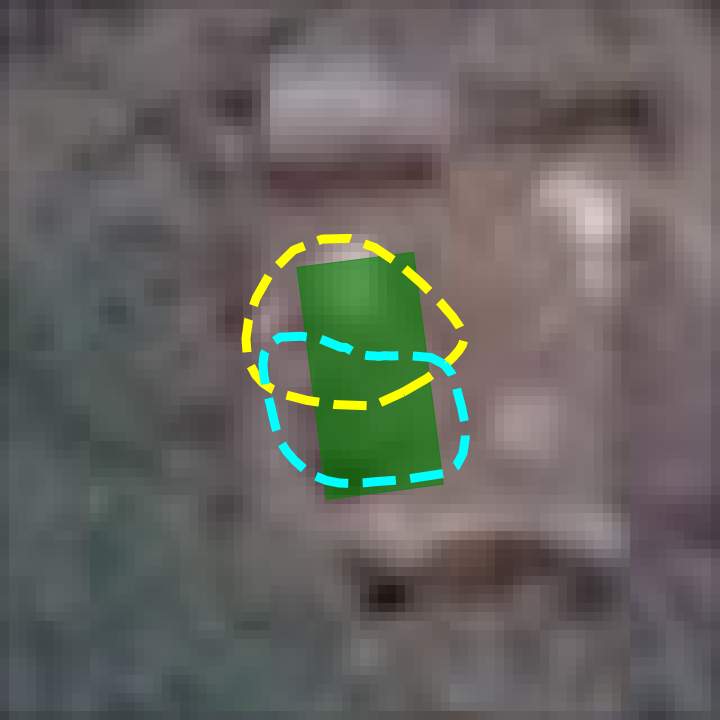} } \hfill
		\subfloat[(e)]{\includegraphics[width=0.16\linewidth]{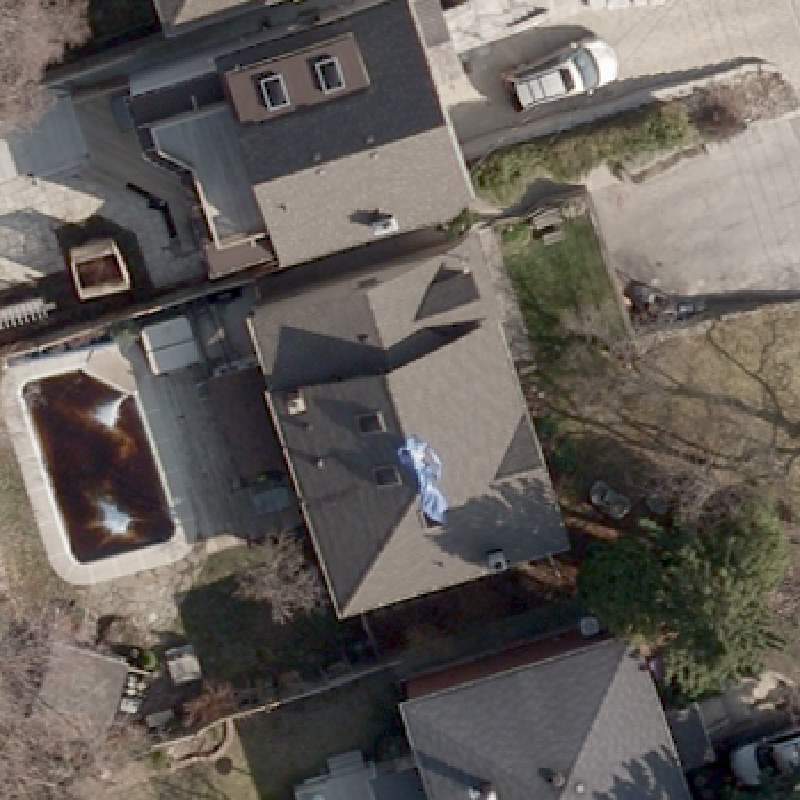}} \hfill
		\subfloat[(f)]{\includegraphics[width=0.16\linewidth]{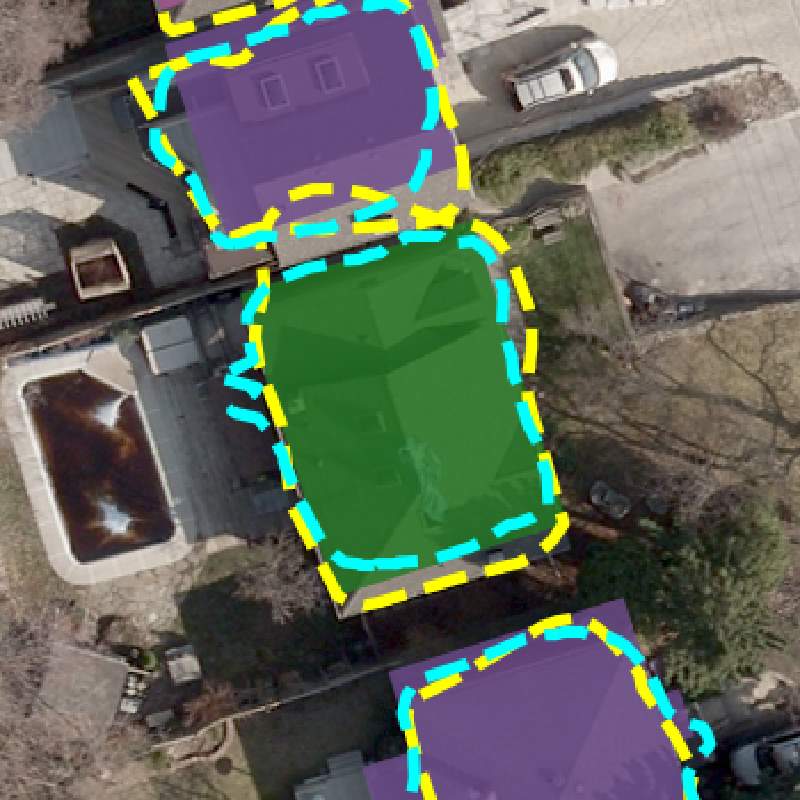}
		}
		\caption{Results on (a-b) Vaihingen, (c-d) Bing Huts, (e-f) TorontoCity. Bottom three rows highlight failure cases. Original image shown in left. On right, our output is shown in cyan; DSAC output in yellow; ground truth is shaded.}
		\label{fig:qual_results_3}
	\end{figure*}
}

\begin{figure}[t]
	\begin{center}
		\includegraphics[width=0.8\linewidth]{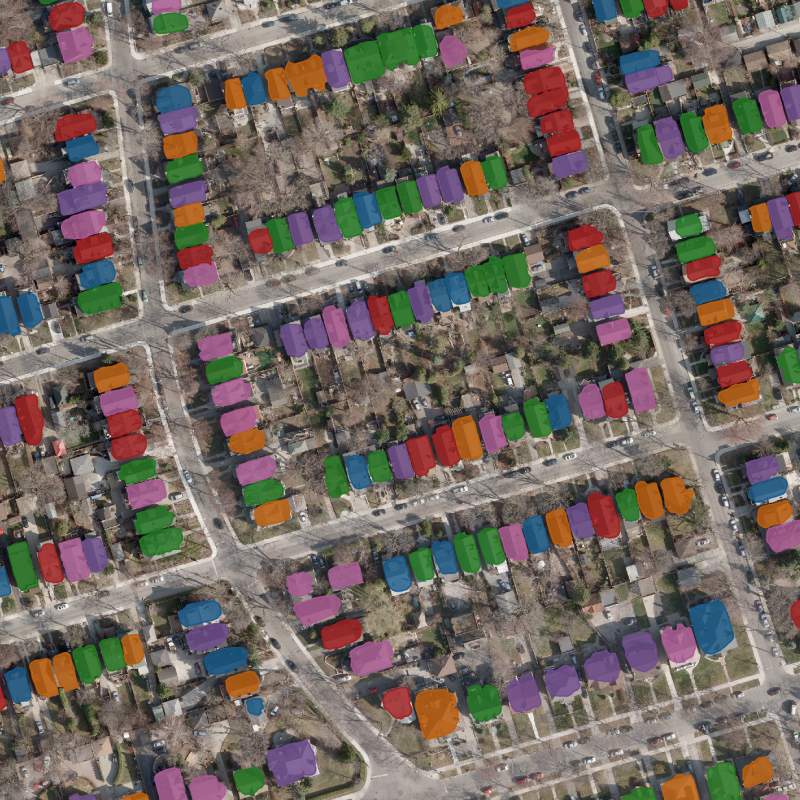}
	\end{center}	
	\caption{Demonstration of our method (segmentations shown in shaded color) on a large area of the TorontoCity dataset.}
	\label{fig:area}
\end{figure}

{\small
\bibliographystyle{ieee}
\bibliography{references}
}

\end{document}